\newtheorem{thm}{Theorem}
\newtheorem{lemma}{Lemma}
\newtheorem*{proof}{Proof}
\newtheorem{definition}{Definition}
\newtheorem{ass}{Assumption}
\newcommand{\Norm}[1]{\left\|#1\right\|}
\def \E {\mathbb{E}}
\def \R {\mathbb{R}}
\def \h {\mathbf{h}}
\def \u {\mathbf{u}}
\def \w {\mathbf{w}}
\def \x {\mathbf{x}}
\def \z {\mathbf{z}}
\def \O {\mathcal{O}}
\title{Multi-block-Single-probe Variance Reduced Estimator for Coupled Compositional Optimization}
\author{ {\hspace{1mm}Wei Jiang$^{1}$, Gang Li$^{2}$, Yibo Wang$^{1}$, Lijun Zhang$^{1,\ast}$, Tianbao Yang$^{3,}$\thanks{Corresponding author}}\\
$^{1}$National Key Laboratory for Novel Software Technology, Nanjing University, Nanjing, China\\
$^{2}$Department of Computer Science, the University of Iowa, Iowa City, USA\\
$^{3}$Department of Computer Science and Engineering, Texas A\&M University, College Station, USA\\
\href{mailto:jiangw@lamda.nju.edu.cn}{jiangw@lamda.nju.edu.cn},  \href{mailto:gang-li@uiowa.edu.cn}{gang-li@uiowa.edu.cn},  \href{mailto:wangyb@lamda.nju.edu.cn}{wangyb@lamda.nju.edu.cn}\\ \href{mailto:zhanglj@lamda.nju.edu.cn}{zhanglj@lamda.nju.edu.cn}, \href{mailto:tianbao-yang@tamu.edu}{tianbao-yang@tamu.edu}
}
\begin{document}
\maketitle
\begin{abstract}
Variance reduction techniques such as SPIDER/SARAH/STORM have been extensively studied to improve the convergence rates of stochastic non-convex optimization, which usually maintain and update a sequence of estimators for a single function across iterations. {\it What if we need to track multiple functional mappings across iterations but only with access to stochastic samples of $\mathcal{O}(1)$ functional mappings at each iteration?} There is an important application in solving an emerging family of coupled compositional optimization problems in the form of $\sum_{i=1}^m f_i(g_i(\mathbf{w}))$, where $g_i$ is accessible through a stochastic oracle. The key issue is to track and estimate a sequence of $\mathbf g(\mathbf{w})=(g_1(\mathbf{w}), \ldots, g_m(\mathbf{w}))$ across iterations, where $\mathbf g(\mathbf{w})$ has $m$ blocks and it is only allowed to probe $\mathcal{O}(1)$ blocks to attain their stochastic values and Jacobians.  To improve the complexity for solving these problems, we propose a novel stochastic method named Multi-block-Single-probe Variance Reduced (MSVR) estimator to track the sequence of $\mathbf g(\mathbf{w})$. It is inspired by STORM but introduces a customized error correction term to alleviate the noise not only in stochastic samples for the selected blocks but also in those blocks that are not sampled. With the help of the MSVR estimator, we develop several algorithms for solving the aforementioned compositional problems with improved complexities across a spectrum of settings with non-convex/convex/strongly convex/Polyak-{\L}ojasiewicz (PL) objectives. Our results improve upon prior ones in several aspects, including the order of sample complexities and dependence on the  strong convexity parameter. Empirical studies on multi-task deep AUC maximization demonstrate the better performance of using the new estimator. 
\end{abstract}

\section{Introduction}
This paper is motivated by solving the following Finite-sum Coupled Compositional Optimization (FCCO) problem that has broad applications in machine learning~\citep{dependent2022}:
\begin{equation}\label{p:1}
\begin{split}
    \min_{\w\in \R^d} F(\w) := \frac{1}{m} \sum_{i=1}^m f_i(g_i(\w)),
\end{split}
\end{equation}
where $f_i:\R^{p}\mapsto \R$ is a simple deterministic function. We assume that only noisy estimations of $g_i(\cdot)$ and its Jacobian $\nabla g_i(\cdot)$ can be accessed, denoted as $g_i(\cdot;\xi_i)$ and $\nabla g_i(\cdot;\xi_i)$,  where $\xi_i$ represents the random sample(s) drawn from a stochastic oracle such that $\E\left[ g_i(\cdot;\xi_i) \right] = g_i(\cdot)$ and $\E\left[ \nabla g_i(\cdot;\xi_i) \right] = \nabla g_i(\cdot)$. A special case to be considered separately is when each $\xi_i$ has a finite support and is uniformly distributed. In this case, the problem can be represented as:
\begin{equation}\label{p:2}
\begin{split}
    \min _{\mathbf{w} \in \R^d} F(\w) := \frac{1}{m} \sum_{i=1}^{m} f_{i}\left(\frac{1}{n} \sum_{j=1}^{n} g_{i}(\mathbf{w}; \xi_{ij})\right).
\end{split}
\end{equation}
These problems are different from classical stochastic compositional optimization (SCO) problems $\E_{\zeta}[f_\zeta(\E_\xi g(\w;\xi))]$ and its finite-sum variant $1/m\sum_{i=1}^m f_i(1/n\sum_{j=1}^n g(\w; \xi_j))$~\citep{wang2017stochastic}, because the inner function is coupled with the outer index in FCCO.

A striking difference in solving FCCO problems is that we need to deal with multiple functional mappings of $g_i(\w)$ for $i=1,\ldots, m$. A challenge emerges  when it is not possible to draw data samples for all blocks $i=1, \ldots, m$ at each iteration due to some restrictions (e.g., limited memory and computational budget per-iteration).  \cite{dependent2022} studied this problem comprehensively and proposed an algorithm named as SOX. A key to their algorithmic design is to maintain and selectively update a sequence of estimators $\u=(\u^1, \ldots, \u^m)$ for tracking $\mathbf g(\w)=(g_1(\w), \ldots, g_m(\w))$ by exponential moving average, i.e., 
\begin{equation}\label{SOXE}
\begin{split}
    \mathbf{u}_{t}^{i}=\left\{\begin{array}{ll}
(1-\beta)\mathbf{u}_{t-1}^{i}+\beta g_i\left(\mathbf{w}_{t}; \xi_t^{i}\right) &  i \in \mathcal{B}_{1}^{t}\\
\mathbf{u}_{t-1}^{i}   &  i \notin \mathcal{B}_{1}^{t}
\end{array}\right.,
\end{split}
\end{equation}
where $\xi^i_t$ and $\mathcal B^1_t\subseteq\{1,\ldots, m\}$ denote a set of sampled blocks. With $\u$, the gradient estimator is computed by exponential moving average as well. As a result, they establish a sample complexity of $\O({m\epsilon^{-4}})$ for non-convex objectives, $\O({m\epsilon^{-3}})$ for convex objectives and $\O(m \mu^{-2}\epsilon^{-1})$ for $\mu$-strongly convex objectives. However, there are several caveats of these results: (i) the sample complexities (e.g., $\O(m\epsilon^{-4})$ for a non-convex objective) are no better than {\bf probing all blocks} at each iteration, for which \cite{Ghadimi2020AST} have established an $\O(\epsilon^{-4})$ iteration complexity and an $\O(m\epsilon^{-4})$ sample complexity; (ii) when $m=|\mathcal B^1_t|=1$, the problem reduces to a special case of classic SCO problems; however, the complexities are worse than the state-of-the-art (SOTA) sample complexities for non-convex, convex and strongly convex objectives, which are $\O(\epsilon^{-3})$, $\O(\epsilon^{-2})$ and $\O(\mu^{-1}\epsilon^{-1})$, respectively~\citep{Zhang2019ASC,jiang2022optimal}. A useful technique for achieving these complexities in prior works is by using variance reduction techniques, so a straightforward approach is to change the update of $\u^i_t$ by using a variance reduced estimator and  do similarly for the gradient estimator. In particular, one can change the update for $\u^i_t$ according to STORM~\citep{cutkosky2019momentum}: 
\begin{equation}\label{SOXE2}
\begin{split}
    \mathbf{u}_{t}^{i}=\left\{\begin{array}{ll}
(1-\beta)\mathbf{u}_{t-1}^{i}+\beta g_i\left(\mathbf{w}_{t}; \xi_t^{i}\right) + \underbrace{(1-\beta)(g_i\left(\mathbf{w}_{t}; \xi_t^{i}\right) - g_i\left(\mathbf{w}_{t-1}; \xi_t^{i}\right))}\limits_{\text{error correction}}&  i \in \mathcal{B}_{1}^{t}\\
\mathbf{u}_{t-1}^{i}   &  i \notin \mathcal{B}_{1}^{t}
\end{array}\right..
\end{split}
\end{equation}
However, this simple change does not improve the complexities over that obtained by~\cite{dependent2022}. The reason is that the standard error correction term marked above in STORM only accounts for the randomness in $g_i(\w_t; \xi_t^i)$ but not in the randomness caused by sampling $i\in\mathcal B_1^t$. So, a major question remains: 

\shadowbox{\begin{minipage}[t]{0.95\columnwidth}%
\it How can we further improve the complexities for solving FCCO to match the SOTA results of SCO by using variance reduction techniques via  probing only $\mathcal O(1)$ blocks at each iteration? 
\end{minipage}} 

To address this issue, we propose a novel variance reduction technique by selectively updating $\u^i_t$ for tracking $\mathbf g(\w_t)$, to which we refer as Multi-block-Single-probe variance-reduced (MSVR) estimator. It employs a similar update as STORM for selected $\u^i_t$ {\bf but with a different customized error correction term} to deal with the randomness in both $g_i(\w_t; \xi^i_t)$ and that in $\mathcal B^t_1$. Based on MSVR, we develop several algorithms for FCCO problems with different ways to compute the gradients, and analyze the sample complexities across a spectrum of settings with non-convex/convex/strongly convex/PL objectives and finite/infinite support of $\xi_i$. We summarize our contributions and our results below:
\begin{itemize}
\item We develop a novel MSVR estimator for tracking a sequence of multiple blocks of functional mappings by only probing $\mathcal O(1)$ blocks via random samples at each iteration. 
\item By applying the MSVR estimator, we develop three algorithms for FCCO by using different methods for computing the gradients, and establish improved complexities for non-convex, convex, strongly convex, and PL objectives. A comparison between our algorithms and existing methods is shown in Table~\ref{table:1}, where we also exhibit the dependence on $B_2$, which is the size of the inner batch for estimating each $g_i(\w)$. 
\item The complexity of our first method (i.e., MSVR-v1) enjoys the same order on $\epsilon$ as SOX, but does not depend on $m$; MSVR-v2 improves the dependence on $\epsilon$, and its complexities match the SOTA results for SCO when $m=1$; our MSVR-v3 further reduces the dependence on $\epsilon$ for the finite support of $\xi$, and also attains the SOTA complexities when $m=1$. 
\item We conduct experiments on multi-task deep AUC maximization to verify the theory and demonstrate the advantage of the proposed algorithms. 
\end{itemize}

\begin{table*}[t]
\caption{Sample complexities needed to find an $\epsilon$-stationary point or $\epsilon$-optimal point. Here NC means non-convex, C means convex,  SC indicates $\mu$-strongly convex, PL means the $\mu$-PL condition.  $B_1$ denotes the outer batch size, i.e., $B_1=|\mathcal B^1_t|$ and $B_2$ denotes the inner batch size. ${\dag}$ assumes that $f$ is convex and monotone, and $g$ is convex but possibly not smooth. $*$ applies when inner function is in the form of the finite-sum. $\widetilde{\mathcal O}(\cdot)$ hides logarithmic factors. In all results, we assume $m\leq \mathcal O(\epsilon^{-1})$.}
\label{table:1}
\begin{center}
\resizebox{\textwidth}{!}{
\begin{tabular}{cccccc}
\toprule
Method & NC & C &  SC/PL  & $B_1$, $B_2$   \\
\midrule
\makecell[c]{BSGD \\ \citep{hu2020biased}}     & $\mathcal{O}\left(\epsilon^{-6}\right)$ & $\mathcal{O}\left(\epsilon^{-3}\right)$ & \makecell[c]{$\mathcal{O}\left(\mu^{-1} \epsilon^{-3}\right)$\\(SC) } & \makecell[c]{$\mathcal O(1)$, $\mathcal{O}\left(\epsilon^{-2}\right)$(NC)\\$\mathcal O(1)$, $\mathcal{O}\left(\epsilon^{-1}\right)$(C/SC)}\\
\hline
\makecell[c]{BSpiderBoost \\ \citep{hu2020biased}} 
  & $\mathcal{O}\left(\epsilon^{-5}\right)$ & - & - &  $\mathcal{O}\left(\epsilon^{-1}\right)$, $\mathcal{O}\left(\epsilon^{-2}\right)$  \\
\midrule
SOX     & $\mathcal{O}\left(m\epsilon^{-4}\right)$ & $\mathcal{O}\left(m \epsilon^{-3}\right)$ & $\mathcal{O}\left(m \mu^{-2} \epsilon^{-1}\right)$  & $\mathcal O(1)$, $\mathcal O(1)$ \\
\makecell[c]{SOX ($\beta=1$) \\ \citep{dependent2022}}     & - & $\mathcal{O}\left(mB_2 \epsilon^{-2}\right)^{\dag}$ & - &  $\mathcal O(1)$, $\mathcal O(1)$   \\
\midrule
\textbf{MSVR-v1} & $\O\left(\max(B_1, B_2)\epsilon^{-4}\right) $ & $\O\left(\max(B_1, B_2)\epsilon^{-3}\right) $ & $\O\left(\max(B_1, B_2)\mu^{-2}\epsilon^{-1}\right) $  & $\mathcal O(1)$, $\mathcal O(1)$  \\
\hline
\textbf{MSVR-v2}& $\mathcal{O}\left(m\sqrt{B_2} \epsilon^{-3}\right)$ & $\mathcal{O}\left(m \sqrt{B_2}\epsilon^{-2}\right)$ & $\mathcal{O} \left(m \sqrt{B_2}\mu^{-1}\epsilon^{-1}\right)$  &$\O(1)$, $\mathcal O(1)$  \\
\hline
\textbf{MSVR-v3}$^*$& $\mathcal{O}\left(m \sqrt{nB_2}\epsilon^{-2}\right)$ & $\widetilde{\mathcal{O}}\left(m \sqrt{nB_2} \epsilon^{-1} \right)$ & $\widetilde{\mathcal{O}}\left(m \sqrt{nB_2}\mu^{-1}\right)$ &   $\mathcal O(1)$, $\mathcal O(1)$   \\
\bottomrule
\end{tabular}}
\end{center}
\end{table*}

\section{Related work}
This section briefly reviews related work on variance-reduced methods and stochastic compositional optimization~(SCO) problems. 

Variance-reduction (VR) techniques for improving the convergence of stochastic optimization originate from~\cite{DBLP:conf/nips/RouxSB12} for solving convex finite-sum empirical risk minimization (ERM) problems. Since then, different VR techniques have been proposed for convex finite-sum ERM, e.g., SVRG ~\citep{NIPS2013_ac1dd209,NIPS:2013:Zhang} and SAGA~\citep{DBLP:conf/nips/DefazioBL14}. These works have improved the complexity for solving smooth and strongly convex problems to a logarithmic complexity. For non-convex ERM problems, \cite{Fang2018SPIDERNN} invents the SPIDER estimator similar to its predecessor SARAH~\citep{arxiv.1703.00102}, and improve the complexity of standard SGD from $O(\epsilon^{-4})$ to $O(\epsilon^{-3})$ and $O(\sqrt{n}\epsilon^{-2})$ in stochastic and finite-sum settings, respectively, where $n$ is the number of components in the finite-sum. Algorithmic improvements have been made to SPIDER by using a constant step size in SpiderBoost~\citep{Wang2018SpiderBoostAC} and using a constant batch size in STORM~\citep{cutkosky2019momentum}.  

Several classes of SCO have been studied. The first class is the two-level SCO whose objective is given by $\mathbb E_{\xi}[f_{\xi}(\mathbb E_{\omega}[g_{\omega}(\mathbf w)])]$, where $\xi$ and $\omega$ are random variables. While the study of two-level compositional functions dates back to the 70s, the most recent comprehensive study was initiated by \cite{wang2017stochastic}. They proposed a two time-scale classic algorithm named SCGD and establish its asymptotic guarantee and non-asymptotic convergence rates. Following this work, many studies have been devoted to improving the convergence rates or algorithmic design of two-level SCO~\citep{wang2016accelerating,Ghadimi2020AST,Zhang2020OptimalAF}. In particular, recent works have used variance-reduction techniques based on SPIDER/SARAH/STORM to estimate the inner values and the gradients~\citep{DBLP:journals/corr/abs-1809-02505,Yuan2019EfficientSN,Zhang2019ASC,chen2021solving,qi2021online}. 
Similar efforts have been extended to the second class of SCO, i.e., multi-level SCO with an objective $\mathbb E_{\xi_1}[f^1_{\xi_1}(\mathbb E_{\xi_2}[f^2_{\xi_2}(\ldots( \mathbb E_{\xi_K}[f^K_{\xi_K}(\mathbf w))]\ldots)])]$~\citep{Yang2019MultilevelSG}.
Recent studies have been focused on further improving the sample complexity and reducing the dependence on the number of levels $K$~\citep{balasubramanian2020stochastic,chen2021solving,Zhang2020OptimalAF,Zhang2021MultiLevelCS,jiang2022optimal}. These works also employed variance reduction techniques to design their own methods.  However, directly applying these algorithms of two-level and multi-level SCO to FCCO requires probing all $m$ blocks in $\mathbf g(\w)$, which is prohibitive in many applications.   

The third class of SCO is the Conditional Stochastic Optimization (CSO) whose objective is in the form of $\mathbb E_{\xi}[f_{\xi}(\mathbb E_{\omega|\xi}g_{\omega}(\mathbf w; \xi)])]$~\citep{hu2020biased}, where $\omega|\xi$ means that the distribution of $\omega$ might depend on $\xi$. The FCCO problem can be considered as a special case of CSO. The key difference from the first class of SCO discussed above is that the inner function $g$ depends on the random variable $\xi$ of the outer level. For CSO, \cite{hu2020biased} proposed two  algorithms with and without using the variance-reduction technique (SpiderBoost) named BSGD and BSpiderboost, and established complexities for non-convex, convex and strongly convex functions, which are shown in Table~\ref{table:1}. However, their algorithms require a large batch size for estimating the inner functions.  

Recently, a novel class (the fourth class) of SCO was studied, which is referred to as the finite-sum coupled compositional optimization (FCCO)~\citep{dependent2022}.  
The finite-sum structure makes it possible to develop more practical algorithms without relying on huge batch size per-iteration. It was first studied by~\cite{qi2021stochastic} for maximizing the point-estimator of the area under the precision-recall curve. Recently, it was comprehensively investigated by~\cite{dependent2022} and more applications of FCCO have been demonstrated in machine learning.  Nevertheless, their algorithm---SOX does not use variance reduction techniques and hence suffers from the limitations discussed in the previous section.  

\section{Proposed Algorithms and Convergence}\label{sec:4}
First, we introduce the notations and assumptions used in this paper. Then we describe the MSVR estimator in detail and develop algorithms based on the proposed estimator. 
\subsection{Notations and Assumptions}\label{assumption}
Let $[m]=\{1,\ldots, m\}$. The definition of sample complexity is given below, which is widely used to measure the efficiency of stochastic algorithms.
\begin{definition}
The sample complexity is the number of samples needed to find a point satisfying $\E \left[\Norm{\nabla F(\w)}\right] \leq \epsilon$ ($\epsilon$-stationary) or $\E \left[ F(\w)-\inf_{\w} F(\w)\right] \leq \epsilon$ ($\epsilon$-optimal).
\end{definition}
Next, we make following assumptions throughout the paper, which are commonly used in the studies of SCO \citep{wang2016accelerating,wang2017stochastic,Yuan2019EfficientSN,Zhang2019ASC,Zhang2021MultiLevelCS}.
\begin{ass} (Smoothness and Lipschitz continuity) We assume that each $f_i$ is $L_f$-smooth and $C_f$-Lipchitz continuous, each $g_i$ is $L_g$-smooth and $C_g$-Lipschitz continuous. 
\end{ass}
\textbf{Remark:} This implies $F(\w)$ is $C_F$-Lipchitz continuous and $L_F$-smooth, where $C_F= C_f C_g$, $L_F = C_f^2L_g + C_g^2L_f$~\citep{Zhang2021MultiLevelCS}.
\begin{ass}\label{asm:stochastic2}  (Bounded variance)
\begin{equation*}
\begin{split}
\mathbb{E}\left[g_i(\x;\xi_t^i)\right] &= g_i(\x); \quad \quad \quad \quad \quad \quad \quad \quad \ \ \
	\mathbb{E}\left[\nabla g_i(\x;\xi_t^i)\right] = \nabla g_i(\x); \\
\mathbb{E}\left[\left\|g_{i}\left(\mathbf{x} ; \xi_t^{i}\right)-g_{i}(\mathbf{x})\right\|^{2}\right] &\leq \sigma^{2}/B_2; \quad\quad  \mathbb{E}\left[\left\|\nabla g_{i}\left(\mathbf{x} ; \xi_t^{i}\right)-\nabla g_{i}(\mathbf{x})\right\|^{2}\right] \leq \sigma^{2}/B_2;
\end{split}
\end{equation*} 
where the random variable $\xi_t^i$ denotes a batch of samples with batch size $B_2 \geq 1$.
\end{ass}

\begin{ass} (Average Lipchitz continuity of $g_i$ and its Jacobian)
\begin{equation*}
\begin{split}
\mathbb{E}\left[\left\|g_{i}\left(\mathbf{x} ; \xi_t^{i}\right)-g_{i}\left(\mathbf{y} ; \xi_t^{i}\right)\right\|^{2}\right]  \leq C_g^{2}\|\mathbf{x}-\mathbf{y}\|^{2};\\
\mathbb{E}\left[\left\|\nabla g_{i}\left(\mathbf{x} ; \xi_t^{i}\right)-\nabla g_{i}\left(\mathbf{y} ; \xi_t^{i}\right)\right\|^{2}\right] \leq L_g^{2}\|\mathbf{x}-\mathbf{y}\|^{2}.
\end{split}
\end{equation*} 
\end{ass}
\textbf{Remark:} Although this assumption seems strong at the first sight, it is quite standard and widely used in the recent compositional optimization literature~\citep{Yuan2019EfficientSN,Zhang2019ASC,Zhang2021MultiLevelCS,jiang2022optimal}.
\begin{ass}\label{asm:stochastic4} $F_{*}=\inf_{\w} F(\w) \geq-\infty$ and $F\left(\w_{1}\right)-F_{*} \leq \Delta_{F}$ for the initial solution $\w_{1}$.
\end{ass}
\subsection{Multi-block-Single-probe Variance Reduced (MSVR) Estimator}\label{sec:3}
Assume that we have a budget to probe only $B_1$ out of $m$ mappings in $\mathbf g(\w)$. To this end, at the $t$-th iteration we sample a set of blocks $\mathcal B_1^t\subseteq[m]$, where $|\mathcal B_1^t|=B_1$, and probe the corresponding $g_i(\w)$ by accessing the noisy estimates $g_i(\w_t;\xi_t^i)$ for $i \in \mathcal{B}_1^t$. Then, we just update the corresponding block in our estimator $\u_t$. Specifically, we update $\u_t^i$ for $i \in \mathcal{B}_1^t$ in a new way and keep other blocks unchanged. The whole estimator is shown below:  
\begin{equation}\label{MSVR}
\begin{split}
    \mathbf{u}_{t}^{i}=\left\{\begin{array}{ll}
\underbrace{(1-\beta_{t})\mathbf{u}_{t-1}^{i}+\beta_{t} g_i\left(\mathbf{w}_{t}; \xi_t^{i}\right) + \colorbox{green!30}{$\gamma_{t}$} \left( g_i\left(\mathbf{w}_{t}; \xi_t^{i}\right)-g_i\left(\mathbf{w}_{t-1}; \xi_t^{i}\right) \right)}\limits_{\bar{\mathbf u}_i^t} &  i \in \mathcal{B}_{1}^{t}\\
\mathbf{u}_{t-1}^{i}  & i \notin \mathcal{B}_{1}^{t}
\end{array}\right..
\end{split}
\end{equation}
The first line of our estimator is inspired by STORM~\citep{cutkosky2019momentum}. The difference is that the STORM estimator sets $\gamma_{t}=\left(1-\beta_{t}\right)$, while for MSVR, $\gamma_{t}$ is set as $\frac{m-B_1}{B_1(1-\beta_{t})}+(1-\beta_{t})$ according to our analysis. We name equation~(\ref{MSVR}) as  Multi-block-Single-probe Variance Reduced (MSVR) estimator.  By multi-block, we mean the estimator can track multiple functional mappings $(g_1, g_2, \cdots, g_m)$, simultaneously; by single-probe, we indicate the number of sampled blocks $B_1$ for probing can be as small as one. 
It is notable that when $B_1=m$, i.e., all blocks are probed at each iteration, $\gamma_{t}=1-\beta_{t}$ and MSVR reduces to STORM applied to $\mathbf g(\w)$. The additional factor in $\gamma_{t}$, i.e., $\gamma_{t}^0 =\frac{m-B_1}{B_1(1-\beta_{t})}$ is to account for the randomness in the sampled blocks and noise in those blocks that are not updated. To briefly understand the additional factor $\gamma_{t}^0$, we consider bounding $\|\u_t - \mathbf g(\w_t)\|^2=\sum_{i=1}^m\|\u^{i}_{t}  - g_{i}(\w_t)\|^2.$ Let us focus on a fixed $i\in[m]$. Then we have
\begin{align*}
   \E\left[ \|\u^{i}_{t}  - g_{i}(\w_t)\|^2\right]= \frac{B_1}{m}\underbrace{\E\left[ \|\bar\u^{i}_{t}  - g_{i}(\w_t)\|^2\right]}\limits_{A_1} +  (1-\frac{B_1}{m})\underbrace{\E\left[ \|\u^{i}_{t-1}  - g_{i}(\w_t)\|^2\right]}\limits_{A_2}.
\end{align*}
Note that the first term $A_1$ in the R.H.S. can be bounded similarly as STORM by building recurrence with $\|\u^i_{t-1} - g_i(\w_{t-1})\|^2$. However, there exists the second term due to the randomness of $\mathcal B_1^t$, which can be decomposed as 
\begin{align*}
    \|\u^i_{t-1} - g_i(\w_{t-1}) + g_i(\w_{t-1}) - g_i(\w_t)\|^2 = \underbrace{\|\u^i_{t-1} - g_i(\w_{t-1})\|^2}\limits_{A_{21}}  + \underbrace{\|g_i(\w_{t-1}) - g_i(\w_t)\|^2}\limits_{A_{22}}\\
    + \underbrace{2(\u^i_{t-1} - g_i(\w_{t-1}))^{\top}(g_i(\w_{t-1}) - g_i(\w_t))}\limits_{A_{23}}.
\end{align*}
The first two terms in R.H.S.~($A_{21}$ and $A_{22}$) can be easily handled. The difficulty comes from the third term, which cannot be simply bounded by using Young's inequality. If doing so, it will end up with a non-diminishing error of $\u^i_t$. To combat this difficulty, we use the additional factor brought by $\gamma_{t}^0 (g_i\left(\mathbf{w}_{t}; \xi_t^{i}\right)-g_i\left(\mathbf{w}_{t-1}; \xi_t^{i}\right))$ in $A_1$ to cancel $A_{23}$. This is more clear by the following decomposition of $A_1$. 
\begin{align*}
    A_1 = &\E[\underbrace{\|(1-\beta_{t})(\u^i_{t-1} - g_i(\w_{t-1}))}\limits_{A_{11}} +  \underbrace{\gamma_{t}^0 (g_i(\w_t) - g_i(\w_{t-1}))}\limits_{A_{12}} \\
    &+ \underbrace{\beta_{t} (g_i(\w_t; \xi_t^{i}) - g_i(\w_t))}\limits_{A_{13}} +\underbrace{\gamma_{t}(g_i(\w_t; \xi_t^{i}) - g_i(\w_{t-1}; \xi_t^{i}) - g_i(\w_t) + g_i(\w_{t-1}))}\limits_{A_{14}} \|^2]\\
    \leq &\E[\|A_{11}+A_{12}\|^2 ] + \E\left[\|A_{13}+A_{14}\|^2\right].
\end{align*}
In light of the above decomposition, we can bound $\E[\|A_{11}+A_{12}\|^2]\leq \E[\|A_{11}\|^2 + \|A_{12}\|^2  + 2A_{11}^{\top}A_{12}]$ and $\E[\|A_{13}+A_{14}\|^2]\leq 2\E[\|A_{13}\|^2] + 2\E[\|A_{14}\|^2]$. The resulting term $\E[2A_{11}^{\top}A_{12}]$ has a negative sign as $A_{23}$. Hence, by carefully choosing $\gamma_{t}^0$, we can cancel both terms. The remaining terms can be organized similarly as in the analysis for STORM. We give a technical lemma for building the recurrence of MSVR's error below.  All the proofs are deferred to the supplementary material due to space limitations.
\begin{lemma}\label{lem:main1}
By setting $\gamma_{t} = \frac{m-B_1}{B_1(1-\beta_{t})}+(1-\beta_{t})$, for $\beta_{t} \leq \frac{1}{2}$,  we have:
\begin{equation*}
\begin{split}
\E\left[\left\|\u_{t}-g\left(\w_{t}\right)\right\|^{2}\right]  \leq \left(1-\frac{B_1\beta_{t}}{m}\right)\E\left[\left\|\u_{t-1}-g\left(\w_{t-1}\right)\right\|^{2}\right]+ \frac{2B_1\beta_{t}^{2} \sigma^{2}}{B_2}\\
+\frac{8 m^{2} C_g^2}{B_1}\E\left[\left\|\w_{t}-\w_{t-1}\right\|^{2}\right].
\end{split}
\end{equation*}
\end{lemma}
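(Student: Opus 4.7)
The plan is to bound $\E[\|\u_t - \g(\w_t)\|^2]$ block by block, i.e., to prove the recurrence for each summand $\E[\|\u_t^i - g_i(\w_t)\|^2]$ and then sum over $i \in [m]$. Fix $i$ and condition on the filtration $\mathcal F_{t-1}$ that determines $\w_t, \w_{t-1}, \u_{t-1}$. The fresh randomness at step $t$ is the uniform random subset $\mathcal B_1^t$ (so $\Pr[i \in \mathcal B_1^t \mid \mathcal F_{t-1}] = B_1/m$) together with the samples $\xi_t^i$, which are independent of $\mathcal B_1^t$. Conditioning on the indicator $\mathbb 1[i \in \mathcal B_1^t]$ immediately reproduces the convex combination $\frac{B_1}{m} A_1 + (1 - \frac{B_1}{m}) A_2$ from the excerpt.

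For $A_2$, I would expand $\u_{t-1}^i - g_i(\w_t) = (\u_{t-1}^i - g_i(\w_{t-1})) + (g_i(\w_{t-1}) - g_i(\w_t))$ and square, keeping the indefinite-sign cross term $A_{23}$ intact rather than losing a constant to Young's inequality. For $A_1$, the key observation is that both $A_{13} = \beta_t(g_i(\w_t;\xi_t^i) - g_i(\w_t))$ and $A_{14} = \gamma_t\bigl((g_i(\w_t;\xi_t^i) - g_i(\w_{t-1};\xi_t^i)) - (g_i(\w_t) - g_i(\w_{t-1}))\bigr)$ are mean-zero in $\xi_t^i$ given $\mathcal F_{t-1}$, so they are $L^2$-orthogonal to the deterministic $A_{11}, A_{12}$. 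This gives $A_1 = \|A_{11}+A_{12}\|^2 + \E_{\xi_t^i}\|A_{13}+A_{14}\|^2$, and fully expanding the first square exposes a cross term $2(1-\beta_t)\gamma_t^0 \langle \u_{t-1}^i - g_i(\w_{t-1}),\, g_i(\w_t) - g_i(\w_{t-1})\rangle$.

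The crux of the proof is the cancellation between this cross term (weighted by $B_1/m$) and $A_{23}$ (weighted by $1 - B_1/m$), which has the opposite sign since $g_i(\w_{t-1})-g_i(\w_t) = -(g_i(\w_t)-g_i(\w_{t-1}))$. Forcing exact cancellation gives the one-line scalar equation $\frac{B_1}{m}(1-\beta_t)\gamma_t^0 = 1 - \frac{B_1}{m}$, whose unique solution is precisely $\gamma_t^0 = \frac{m-B_1}{B_1(1-\beta_t)}$. I expect this to be the main obstacle, or at least the main nontrivial insight: any other choice of $\gamma_t^0$ leaves an indefinite-sign residual proportional to $\|\u_{t-1}^i - g_i(\w_{t-1})\| \cdot \|\w_t - \w_{t-1}\|$ that, after Young's inequality, produces a non-diminishing multiplier on $\|\u_{t-1}^i - g_i(\w_{t-1})\|^2$ and destroys the contraction.

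Once the cross terms cancel, the remainder is routine bookkeeping. I would use Assumption 3 to bound $\|A_{12}\|^2 \le (\gamma_t^0)^2 C_g^2 \|\w_t - \w_{t-1}\|^2$, $A_{22} \le C_g^2 \|\w_t - \w_{t-1}\|^2$, and $\E\|A_{14}\|^2 \le \gamma_t^2 C_g^2 \|\w_t - \w_{t-1}\|^2$ (the variance bounded by the raw second moment via average Lipschitzness); use Assumption 2 to get $\E\|A_{13}\|^2 \le \beta_t^2 \sigma^2/B_2$; and split $\|A_{13}+A_{14}\|^2 \le 2\|A_{13}\|^2 + 2\|A_{14}\|^2$. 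Combining the $\|\u_{t-1}^i - g_i(\w_{t-1})\|^2$ coefficient as $\frac{B_1}{m}(1-\beta_t)^2 + (1-\frac{B_1}{m}) \le 1 - \frac{B_1 \beta_t}{m}$ via $(1-\beta_t)^2 \le 1-\beta_t$ yields the geometric-contraction factor. The hypothesis $\beta_t \le 1/2$ gives $(1-\beta_t)^{-1} \le 2$, so $\gamma_t^0$ and $\gamma_t$ are both $\mathcal O(m/B_1)$; collecting the three $\|\w_t - \w_{t-1}\|^2$ contributions per block and summing over $i \in [m]$ then produces the stated $\frac{8m^2 C_g^2}{B_1}$ coefficient, while the variance terms aggregate to $m \cdot \frac{B_1}{m} \cdot \frac{2\beta_t^2\sigma^2}{B_2} = \frac{2 B_1 \beta_t^2\sigma^2}{B_2}$, completing the recurrence.
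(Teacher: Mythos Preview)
Your proposal is correct and follows essentially the same route as the paper: block-by-block conditioning, the mean/fluctuation split $A_1=\|A_{11}+A_{12}\|^2+\E\|A_{13}+A_{14}\|^2$, and the exact cancellation $\frac{B_1}{m}(1-\beta_t)\gamma_t^0=1-\frac{B_1}{m}$ that fixes $\gamma_t^0$. One small caveat on the constant: with the bounds you list ($\|A_{12}\|^2\le(\gamma_t^0)^2C_g^2\|\w_t-\w_{t-1}\|^2$, $A_{22}\le C_g^2\|\w_t-\w_{t-1}\|^2$, $2\E\|A_{14}\|^2\le 2\gamma_t^2C_g^2\|\w_t-\w_{t-1}\|^2$) and $\gamma_t^0,\gamma_t\le 2m/B_1$, the per-block coefficient comes out to roughly $\frac{13m}{B_1}$, not $\frac{8m}{B_1}$. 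The paper's appendix proof recovers the sharper $8$ by \emph{not} centering the $\gamma_t$-term: it keeps $(1-\beta_t)(g_i(\w_{t-1})-g_i(\w_t))+\gamma_t(g_i(\w_t;\xi_t^i)-g_i(\w_{t-1};\xi_t^i))$ intact, whose expansion produces an extra negative cross term $-\tfrac{2B_1(1-\beta_t)\gamma_t}{m}\|g_i(\w_t)-g_i(\w_{t-1})\|^2$ that absorbs both $A_{22}$ and the $(1-\beta_t)^2$ contribution. This is purely cosmetic and does not affect any downstream complexity.
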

\textbf{Remark:} The above recursion is similar to that of STORM for tracking a sequence of a single-block functional mapping. Since the last term $\left\|\w_{t}-\w_{t-1}\right\|^{2}$ can be offset in the future analysis, intuitively the estimation error $\left\|\u_{t}-g\left(\w_{t}\right)\right\|^{2}$ would reduce after each iteration.
\paragraph{Single Point Version.}
A limitation of the MSVR estimator is that it needs to probe selected blocks at two different points, i.e., $g_i(\w_t;\xi_t^i)$ and $g_i(\w_{t-1};\xi_t^i)$. With a more careful analysis, we can probe a selected block at a single point similar to that used by~\cite{balasubramanian2020stochastic} and~\cite{chen2021solving}. Specifically, we replace $g_i\left(\mathbf{w}_{t}; \xi_t^{i}\right)-g_i\left(\mathbf{w}_{t-1}; \xi_t^{i}\right)$ with $\nabla g_i\left(\mathbf{w}_{t}; \xi_t^{i}\right)^{\top}(\w_{t}-\w_{t-1})$. 
As a result, we propose a single-point version of MSVR~(named as MSVR-SP) estimator below:
\begin{equation}\label{MSVR-SP}
\begin{split}
    \mathbf{u}_{t}^{i}=\left\{\begin{array}{ll}
(1-\beta_{t})\mathbf{u}_{t-1}^{i}+\beta_{t} g_i\left(\mathbf{w}_{t}; \xi_t^{i}\right) + \gamma_{t} \nabla \hat{}g_i\left(\mathbf{w}_{t}; \xi_t^{i}\right)^{\top}  \left( \mathbf{w}_{t} -\w_{t-1} \right) &  i \in \mathcal{B}_{1}^{t}\\
\mathbf{u}_{t-1}^{i}  & i \notin \mathcal{B}_{1}^{t}
\end{array}\right..
\end{split}
\end{equation}
The MSVR-SP estimator enjoys the similar recurrence for the estimation error.
\begin{lemma}\label{lem:main3}
Set $\gamma_{t} = \frac{m-B_1}{B_1(1-\beta_{t})}+(1-\beta_{t})$. If $\|\w_{t+1} - \w_t\|^2\leq \eta_{t}^2 C_F^2$ and $\eta_t \leq \sqrt{\beta_{t}}$, we have:
\begin{align*}
\E\left[\Norm{\u_{t} - g(\w_{t})}^2\right] 
\leq \left(1 - \frac{B_1\beta_{t}}{m}\right)\E\left[\Norm{\u_{t-1}- g(\w_{t-1})}^2\right]  + \frac{2B_1\beta_{t}^2\sigma^2}{B_2}   \\
+ \left(4L_g^2C_F^2 + 9 C_g^2 +\frac{8 \sigma^2}{B_2}\right)\frac{m^2}{B_1}\E\left[\|\w_{t} - \w_{t-1}\|^2\right].
\end{align*}
\end{lemma}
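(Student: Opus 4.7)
The plan is to mirror the proof strategy of Lemma~\ref{lem:main1}, but replace the two-point finite difference $g_i(\w_t;\xi_t^i)-g_i(\w_{t-1};\xi_t^i)$ by its single-point linearization $\nabla g_i(\w_t;\xi_t^i)^{\top}(\w_t-\w_{t-1})$, and carefully track the new second-order Taylor remainder and the Jacobian noise that this substitution introduces. I would first condition on the sampling set $\mathcal B_1^t$ and write
$$\E\big[\|\u_t^i-g_i(\w_t)\|^2\big]=\tfrac{B_1}{m}\E\big[\|\bar\u_t^i-g_i(\w_t)\|^2\big]+\big(1-\tfrac{B_1}{m}\big)\E\big[\|\u_{t-1}^i-g_i(\w_t)\|^2\big],$$
exactly as in the discussion preceding Lemma~\ref{lem:main1}, so that the cross-term $A_{23}$ still appears in the unsampled part and must be cancelled by the design of $\gamma_t$.

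The central algebraic step is to rewrite $\bar\u_t^i-g_i(\w_t)$ so that the same $A_{11}+A_{12}$ pairing used in Lemma~\ref{lem:main1} is recovered. Using the exact identity $g_i(\w_t)-g_i(\w_{t-1})=\nabla g_i(\w_t)^{\top}(\w_t-\w_{t-1})+\delta_t^i$ with $\|\delta_t^i\|\le \tfrac{L_g}{2}\|\w_t-\w_{t-1}\|^2$, I would decompose
$$\bar\u_t^i-g_i(\w_t)=\underbrace{(1-\beta_t)(\u_{t-1}^i-g_i(\w_{t-1}))}_{A_{11}}+\underbrace{\gamma_t^0(g_i(\w_t)-g_i(\w_{t-1}))}_{A_{12}}+\underbrace{\beta_t(g_i(\w_t;\xi_t^i)-g_i(\w_t))}_{A_{13}}+\underbrace{\gamma_t(\nabla g_i(\w_t;\xi_t^i)-\nabla g_i(\w_t))^{\top}(\w_t-\w_{t-1})}_{A_{14}}-\gamma_t\delta_t^i,$$
where the coefficient $\gamma_t^0=\gamma_t-(1-\beta_t)=\tfrac{m-B_1}{B_1(1-\beta_t)}$ is exactly what is needed so that $\E[2A_{11}^{\top}A_{12}]$ (weighted by $B_1/m$) cancels the negative cross-term surviving from $A_{23}$ in the unsampled component, reproducing the contraction factor $(1-B_1\beta_t/m)$. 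The stochastic piece $\E[\|A_{13}\|^2]$ contributes the usual $\beta_t^2\sigma^2/B_2$, while $A_{14}$, being conditionally mean-zero in $\xi_t^i$ and independent of $\w_t-\w_{t-1}$, contributes $\gamma_t^2\tfrac{\sigma^2}{B_2}\|\w_t-\w_{t-1}\|^2$; since $\gamma_t=\Theta(m/B_1)$, this yields the $\tfrac{\sigma^2}{B_2}\cdot\tfrac{m^2}{B_1}$ piece in the final bound. Grouping $A_{11}+A_{12}$ and $A_{13}+A_{14}-\gamma_t\delta_t^i$ separately, applying $\|a+b\|^2\le 2\|a\|^2+2\|b\|^2$ on the second group, and invoking the Lipschitz/smoothness assumptions on $g_i$ produces the $C_g^2$ and $L_g^2 C_F^2$ terms once $\|\w_t-\w_{t-1}\|^2$ is bounded.

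The main obstacle I expect is the Taylor-remainder term $\gamma_t\delta_t^i$: its squared norm scales as $\gamma_t^2\|\w_t-\w_{t-1}\|^4$, which is a higher-order quantity in the step size than every other term. This is precisely where the hypotheses $\|\w_{t+1}-\w_t\|^2\le\eta_t^2 C_F^2$ and $\eta_t\le\sqrt{\beta_t}$ enter: they give $\|\w_t-\w_{t-1}\|^4\le \eta_{t-1}^2 C_F^2\|\w_t-\w_{t-1}\|^2\le\beta_{t-1}C_F^2\|\w_t-\w_{t-1}\|^2$, so that the remainder degrades to a term of the same order as the others and can be absorbed into the $4L_g^2C_F^2\cdot\tfrac{m^2}{B_1}\|\w_t-\w_{t-1}\|^2$ piece of the stated bound. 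After combining all contributions and using $\beta_t\le 1/2$ to simplify $(1-\beta_t)^2\le 1-\beta_t$ in the $A_{11}$ coefficient (just as in Lemma~\ref{lem:main1}), the recursion collapses to the claimed inequality.
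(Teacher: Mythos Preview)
Your decomposition of $\bar\u_t^i-g_i(\w_t)$ into $A_{11}+A_{12}+A_{13}+A_{14}-\gamma_t\delta_t^i$ is exactly the one the paper uses (their $\Delta_t$ is your $-\delta_t^i$), and the cancellation mechanism via $\gamma_t^0$ is correct. The gap is in the grouping step. You place the Taylor remainder $-\gamma_t\delta_t^i$ in the ``stochastic'' group $A_{13}+A_{14}-\gamma_t\delta_t^i$ and then treat the two groups as separating cleanly in expectation. But $\delta_t^i$ is \emph{deterministic} given $\w_{t-1},\w_t$; only $A_{13}$ and $A_{14}$ are conditionally mean-zero. Consequently the cross term
\[
2\,\E\big[A_{11}^{\top}(-\gamma_t\delta_t^i)\big]\;=\;-2(1-\beta_t)\gamma_t\,\E\big[(\u_{t-1}^i-g_i(\w_{t-1}))^{\top}\delta_t^i\big]
\]
does \emph{not} vanish, and if you instead split the two groups with $\|a+b\|^2\le 2\|a\|^2+2\|b\|^2$ you pick up a factor $2$ on $\|A_{11}\|^2$, which destroys the contraction $(1-B_1\beta_t/m)$.

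The paper's fix is to keep $\gamma_t\delta_t^i$ in the deterministic group and handle this cross term by Young's inequality with weight $\beta_t$:
\[
2(1-\beta_t)\gamma_t\big|(\u_{t-1}^i-g_i(\w_{t-1}))^{\top}\delta_t^i\big|\;\le\;\beta_t(1-\beta_t)^2\|\u_{t-1}^i-g_i(\w_{t-1})\|^2\;+\;\tfrac{\gamma_t^2}{\beta_t}\|\delta_t^i\|^2.
\]
The first piece is absorbed since $(1-\beta_t)^2(1+\beta_t)\le 1-\beta_t$, preserving the contraction. The second piece is where the factor $1/\beta_t$ appears, and it is precisely this factor that the hypothesis $\eta_t\le\sqrt{\beta_t}$ is calibrated to kill: $\tfrac{\gamma_t^2}{\beta_t}\|\delta_t^i\|^2\lesssim \tfrac{m^2}{B_1^2}\cdot\tfrac{L_g^2}{\beta_t}\|\w_t-\w_{t-1}\|^4\le \tfrac{m^2}{B_1^2}L_g^2C_F^2\|\w_t-\w_{t-1}\|^2$. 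In your write-up the $1/\beta_t$ never arises, so the role of $\eta_t\le\sqrt{\beta_t}$ is left unexplained. Once you move $-\gamma_t\delta_t^i$ into the deterministic group and insert this Young step, the rest of your argument goes through and matches the paper's proof.
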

{\bf Remark:} If there is a constraint on the range of $g_i$, we can add a projection to the update of $\u^i_t$ such that it always resides in the range, which will not affect the analysis of Lemma~\ref{lem:main1} and Lemma~\ref{lem:main3}.

\subsection{Leveraging the MSVR Estimator for solving the FCCO Problem}
\begin{algorithm}[tb] 
	\caption{MSVR-v1 and MSVR-v2 method}
	\label{alg:0}
	\begin{algorithmic}[1]
	\STATE {\bfseries Input:} time step $T$, parameters $\alpha_t$, $\beta_t$, $\gamma_t$, learning rate $\eta_t$ and initial points $(\w_1,\u_1,\z_1)$.
		\FOR{time step $t = 1$ {\bfseries to} $T$}
		\STATE Sample a subset $\mathcal{B}_{1}^{t}$ from $\{1,2,\cdots,m \}$ 
        \STATE{Compute estimator $\u_t$ according to  equation~(\ref{MSVR}) or ~(\ref{MSVR-SP})} \ \hfill$\diamond$ Use MSVR or MSVR-SP update
        \STATE (v1) Compute estimator $\z_t$ according to equation~(\ref{momentum})\hfill$\diamond$ Use moving average update
         \STATE (v2) Compute estimator $\z_t$ according to equation~(\ref{eqn:z}) \hfill$\diamond$ Use STORM update
        \STATE $\w_{t+1} = \w_t - \eta_t \z_{t}$
		\ENDFOR
	\STATE Choose $\tau$ uniformly at random from $\{1, \ldots, T\}$
	\STATE Return $\w_{\tau}$
	\end{algorithmic}
\end{algorithm}
Now, we are ready to present our proposed algorithms for solving problem~(\ref{p:1}). The first two algorithms (named MSVR-v1 and MSVR-v2) are presented in Algorithm~\ref{alg:0}. These two methods differ in how to estimate the gradient.   

Let us first consider MSVR-v1. At each time step $t$, we first use the proposed MSVR or MSVR-SP estimator $\u_t$ to estimate the inner function value. Then, following the previous literature~\citep{wang2021momentum,dependent2022}, we use the moving average estimator $\z_t$ to estimate the gradient as:
\begin{equation} \label{momentum}
    \begin{split}
        \z_{t} = \Pi_{C_F}\left[(1-\alpha_t)\z_{t-1} +  \frac{\alpha_t}{B_1}\sum_{i \in \mathcal{B}_{1}^{t}}  \nabla f_{i}(\u_{t-1}^{i}) \nabla g_{i}(\w_t;\xi_t^{i})\right],
    \end{split}
\end{equation}
where $\Pi_{C_F}$ denotes the projection onto the ball with radius $C_F$. This projection {\bf is optional} for using MSVR, but is required for using MSVR-SP to ensure $\|\w_{t+1} - \w_t\|^2\leq \eta_{t}^2 C_F^2$ as used in Lemma~\ref{lem:main3}. Since the true gradient $\nabla F$ is also in this ball, i.e., $\Norm{\nabla F} \leq C_F$, the projection will not affect the future analysis. Also note that when computing the estimator $\z_{t}$, we use $\nabla f_i(\u_{t-1}^i)$ instead of $\nabla f_i(\u_{t}^i)$ to avoid the dependence on the random variable $\xi_t^{i}$, which may lead to dependent issues otherwise. Finally, we use the estimated gradient $\z_{t}$ to update the parameter $\w_{t+1}$. Now, we provide the theoretical guarantee for the MSVR-v1 method.

\begin{thm} \label{thm:1}
Our MSVR-v1 algorithm with $\alpha_{t+1} = \O\left( \eta_t \right)$, $\beta_{t+1} = \O( \frac{m^2 \eta_t^2}{B_1^2})$, $a = \O(\frac{m B_2}{B_1})$  and $\eta_t = \min\left\{\left(\frac{B_1\sqrt{B_2}}{m}\right)^{2/3}(a+t)^{-1/3},\sqrt{\min\{B_1,B_2\} }(a+t)^{-1/2}\right \}$, can find an $\epsilon$-stationary point in  $  \mathcal{O}\left(\max\left\{  \frac{m \epsilon^{-3}}{ B_1 \sqrt{B_2}  },\frac{\epsilon^{-4}}{ \min \left\{ B_{1}, B_{2}\right\}   } \right\}\right)$ iterations.
\end{thm}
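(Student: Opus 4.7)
The plan is to set up a Lyapunov (potential) function that aggregates the objective gap with the two error terms that drive the analysis, then derive per-step descent via three coupled inequalities, and finally telescope and optimize in the step sizes. Concretely, I would define
\[
\Phi_t \;=\; F(\w_t)-F_* \;+\; c_1 \E\bigl\|\z_t-\nabla F(\w_t)\bigr\|^2 \;+\; c_2 \E\bigl\|\u_t-g(\w_t)\bigr\|^2,
\]
with constants $c_1,c_2$ to be tuned. The first building block is the standard $L_F$-smoothness descent: $F(\w_{t+1})\le F(\w_t)-\tfrac{\eta_t}{2}\|\nabla F(\w_t)\|^2+\tfrac{\eta_t}{2}\|\z_t-\nabla F(\w_t)\|^2-(\tfrac{\eta_t}{2}-\tfrac{L_F\eta_t^2}{2})\|\z_t\|^2$, which isolates the quantity we ultimately want to bound, namely $\eta_t\|\nabla F(\w_t)\|^2$.

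The second block is the MSVR recurrence already provided by Lemma~\ref{lem:main1} (or Lemma~\ref{lem:main3} if the MSVR-SP variant is used), which I invoke essentially verbatim; the key feature is the contraction factor $1-B_1\beta_t/m$, a noise term $O(B_1\beta_t^2\sigma^2/B_2)$, and a drift term of order $(m^2/B_1)\E\|\w_t-\w_{t-1}\|^2$ that will be absorbed by the $-\|\z_t\|^2$ slack from the descent inequality (noting that $\w_t-\w_{t-1}=-\eta_{t-1}\z_{t-1}$). The third, and main new, block is a recurrence for $\z_t$. Writing $\hat\g_t=\tfrac{1}{B_1}\sum_{i\in\mathcal B_1^t}\nabla f_i(\u_{t-1}^i)\nabla g_i(\w_t;\xi_t^i)$ so that $\E[\hat\g_t\mid\mathcal F_{t-1},\w_t]=\tfrac{1}{m}\sum_i\nabla f_i(\u_{t-1}^i)\nabla g_i(\w_t)=:G(\w_t,\u_{t-1})$, I split
\[
\z_t-\nabla F(\w_t)=(1-\alpha_t)\bigl(\z_{t-1}-\nabla F(\w_t)\bigr)+\alpha_t\bigl(\hat\g_t-G(\w_t,\u_{t-1})\bigr)+\alpha_t\bigl(G(\w_t,\u_{t-1})-\nabla F(\w_t)\bigr),
\]
use the conditional mean/variance decomposition to peel off the zero-mean noise as $O(\alpha_t^2)$ with variance $O(1/B_1)$, apply nonexpansiveness of $\Pi_{C_F}$ to drop the projection, and bound the bias by $\|G(\w_t,\u_{t-1})-G(\w_t,g(\w_t))\|^2\le L^2(\|\u_{t-1}-g(\w_{t-1})\|^2+C_g^2\|\w_t-\w_{t-1}\|^2)$ using smoothness of $f$ and Lipschitzness of $g$. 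Splitting $\|\z_{t-1}-\nabla F(\w_t)\|^2$ with a weighted Young's inequality at scale $\alpha_t$ yields
\[
\E\|\z_t-\nabla F(\w_t)\|^2\le \bigl(1-\tfrac{\alpha_t}{2}\bigr)\E\|\z_{t-1}-\nabla F(\w_{t-1})\|^2+O(\alpha_t^2/B_1)+O(\alpha_t)\E\|\u_{t-1}-g(\w_{t-1})\|^2+O(\eta_{t-1}^2/\alpha_t)\E\|\z_{t-1}\|^2.
\]

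With these three inequalities in hand, the work is bookkeeping: choose $c_1=\Theta(1)$ and $c_2=\Theta(1/\alpha_t)$ so that the $O(\alpha_t)\|\u_{t-1}-g(\w_{t-1})\|^2$ cross-term created by $\z_t$'s bias is dominated by the contraction gain $(B_1\beta_t/m)\cdot c_2$ from Lemma~\ref{lem:main1}; this forces the relation $\beta_t = \Theta(m\alpha_t/B_1)=\Theta(m\eta_t/B_1)$, which together with the prescribed $\beta_{t+1}=\Theta(m^2\eta_t^2/B_1^2)$ and $\alpha_{t+1}=\Theta(\eta_t)$ is consistent. The slack $-\tfrac12(\eta_t-L_F\eta_t^2)\|\z_t\|^2$ from smoothness, chained across iterations, absorbs both the $(m^2/B_1)\|\w_t-\w_{t-1}\|^2$ drift from the MSVR recursion and the $\eta_{t-1}^2/\alpha_t\|\z_{t-1}\|^2$ term in the $\z$ recursion, provided $\eta_t\le \sqrt{\min\{B_1,B_2\}}(a+t)^{-1/2}$ (second branch of the min) and $\eta_t\le(B_1\sqrt{B_2}/m)^{2/3}(a+t)^{-1/3}$ (first branch) to balance the residual $\beta_t^2\sigma^2B_1/B_2$ and $\alpha_t^2/B_1$ noise terms against $\eta_t$.

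Telescoping $\Phi_t-\Phi_{t+1}\ge \tfrac{\eta_t}{4}\E\|\nabla F(\w_t)\|^2-R_t$ with $\sum R_t = O(\Phi_1+ \text{noise})$ and dividing by $\sum_t\eta_t$ yields $\min_t\E\|\nabla F(\w_t)\|^2\le \epsilon^2$ once $T$ is large enough that $\sum_t\eta_t\gtrsim \text{numerator}/\epsilon^2$. Substituting the two branches of $\eta_t$ gives the advertised complexity $\widetilde{\mathcal O}(\max\{m\epsilon^{-3}/(B_1\sqrt{B_2}),\ \epsilon^{-4}/\min\{B_1,B_2\}\})$. The main obstacle, and the place where the MSVR design really pays off, is choosing $c_1,c_2$ and the relationships $\alpha_t\asymp\eta_t$, $\beta_t\asymp m\eta_t/B_1$ so that the bias-induced $\|\u_{t-1}-g(\w_{t-1})\|^2$ term in the $\z$ recursion is neutralized by the $B_1\beta_t/m$ contraction that Lemma~\ref{lem:main1} provides; without the enlarged $\gamma_t$ correction, that contraction would be too weak by a factor of $m/B_1$ and the final exponent on $\epsilon$ would degrade back to the SOX rate.
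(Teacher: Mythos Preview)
Your high-level plan---a Lyapunov function combining $F(\w_t)$, $\|\z_t-\nabla F(\w_t)\|^2$, and $\|\u_t-g(\w_t)\|^2$, fed by the smoothness descent, the MSVR recursion of Lemma~\ref{lem:main1}, and a moving-average recursion for $\z_t$---matches the paper's proof. But the bookkeeping has a genuine error that breaks the rate.

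The main problem is your choice $c_2=\Theta(1/\alpha_t)$. The drift term $\frac{8m^2C_g^2}{B_1}\E\|\w_t-\w_{t-1}\|^2$ from Lemma~\ref{lem:main1}, after multiplication by $c_2$, becomes of order $\frac{m^2}{B_1\alpha_t}\,\eta_{t-1}^2\E\|\z_{t-1}\|^2=\Theta\!\bigl(\tfrac{m^2}{B_1}\,\eta_{t-1}\bigr)\E\|\z_{t-1}\|^2$, and this is \emph{not} absorbable by the $\Theta(\eta_t)\|\z_t\|^2$ slack from the descent lemma unless $m^2\lesssim B_1$. The paper instead takes the Lyapunov weight on $\|\u_t-g(\w_t)\|^2$ to be $\Theta\!\bigl(\tfrac{B_1}{m^2\eta_t}\bigr)$, which makes that drift contribution $\Theta(\eta_t)\|\z_t\|^2$ as needed.

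Two further slips compound this. First, the bias bound should carry a $1/m$: since $G(\w_t,\u_{t-1})=\frac{1}{m}\sum_i\nabla f_i(\u_{t-1}^i)\nabla g_i(\w_t)$ is an average, one gets $\|G(\w_t,\u_{t-1})-\nabla F(\w_t)\|^2\le \frac{L_f^2C_g^2}{m}\|\u_{t-1}-g(\w_t)\|^2$, not $L^2\|\u_{t-1}-g(\w_t)\|^2$. Second, the variance of $\hat\g_t$ is $O\bigl(1/\min\{B_1,B_2\}\bigr)$ rather than $O(1/B_1)$, because the inner samples $\xi_t^i$ in $\nabla g_i(\w_t;\xi_t^i)$ contribute a $1/B_2$ term; this is precisely where the $\min\{B_1,B_2\}$ in the theorem arises.

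With the correct Lyapunov weight and the $1/m$ factor, the balance ``bias term $\le$ half the MSVR contraction'' reads $\frac{\alpha_t}{m}\lesssim \frac{B_1}{m^2\eta_t}\cdot\frac{B_1\beta_t}{m}$, which forces $\beta_t=\Theta\!\bigl(\tfrac{m^2\eta_t^2}{B_1^2}\bigr)$---exactly the theorem's prescription. Your derived relation $\beta_t=\Theta(m\alpha_t/B_1)=\Theta(m\eta_t/B_1)$ is linear in $\eta_t$, not quadratic, so it is \emph{not} consistent with $\beta_{t+1}=\Theta(m^2\eta_t^2/B_1^2)$; these coincide only when $\eta_t$ is the fixed constant $B_1/m$. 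Once the weights and scalings are corrected, the rest of your outline (telescoping, the two branches of $\eta_t$, and the Cauchy--Schwarz finish) goes through as in the paper.
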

\textbf{Remark:} This complexity is strictly better than previous SOTA method SOX, which enjoys an iteration complexity of $\mathcal{O}\left(\max \left\{\frac{m  \epsilon^{-4}}{B_{1} B_{2}}, \frac{\epsilon^{-4}}{\min \left\{B_{1}, B_{2}\right\} }, \frac{m \epsilon^{-2}}{B_1}\right\}\right)$. The sample complexity can be obtained by multiplying the iteration complexity with $B_1B_2$. We can see that larger $B_1$ or $B_2$ yields a smaller iteration complexity, which means that from the computational perspective, if samples can be processed in parallel (e.g., in GPU), there is a benefit of using large $B_1$ and/or $B_2$. However, from the sample complexity perspective, using $B_1=B_2=1$ is the best. The same discussion holds for other theorems below.  

However, the complexity of MSVR-v1 is still on the order of $\O(\epsilon^{-4})$. Due to the biased nature of the estimated gradient, using the moving average update is not enough for achieving the SOTA complexity of $\O(\epsilon^{-3})$. So, we use the technique of STORM~\citep{cutkosky2019momentum} to update $\z_t$ as follows:
\begin{equation}
\begin{split} \label{eqn:z}
    \z_{t} &= \Pi_{C_F}\left[(1-\alpha_t)\z_{t-1} +  \alpha\frac{1}{B_1}\sum_{i \in \mathcal{B}_{1}^{t}}  \nabla f_{i}(\u_{t-1}^{i}) \nabla g_{i}(\w_t;\xi_t^{i})\right.   \\
    & + \left.(1-\alpha_t)\frac{1}{B_1}\sum_{i \in \mathcal{B}_{1}^{t}}\left(  \nabla f_{i}(\u_{t-1}^{i}) \nabla g_{i}(\w_t;\xi_t^{i}) -  \nabla f_{i}(\u_{t-2}^{i}) \nabla g_{i}(\w_{t-1};\xi_t^{i}) \right) \right],
\end{split}
\end{equation}
where the projection operation is needed if using the MSVR estimator. 
Now, we prove this new method~(i.e., MSVR-v2) can obtain the optimal complexity of $\O(\epsilon^{-3})$.

\begin{thm}\label{thm:2}
Our MSVR-v2 algorithm with  $\alpha_{t+1} = \O(\frac{m \eta_{t}^2}{B_1})$, $\beta_{t+1} = \O\left(\frac{m^2 \eta_t^2}{B_1^2}\right)$, $a=O(\frac{m B_2}{B_1}$) and $\eta_t = \O\left((\frac{B_1 \sqrt{B_2}}{m})^{2/3}(a+t)^{-1/3} \right)$, can find an $\epsilon$-stationary point in  $ \mathcal{O}\left(\frac{m  \epsilon^{-3}}{ B_1 \sqrt{B_2} } \right)$ iterations.
\end{thm}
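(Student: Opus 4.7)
The plan is to build a Lyapunov function tracking three errors: the function gap $F(\w_t) - F_*$, the gradient-tracking error $e_t := \E\|\z_t - \nabla F(\w_t)\|^2$, and the inner-mapping error $v_t := \E\|\u_t - g(\w_t)\|^2$. The $v_t$ term is already controlled by Lemma~\ref{lem:main1}; what is new is a STORM-style recurrence for $e_t$ driven by~(\ref{eqn:z}). With these in hand, a potential $\Phi_t = F(\w_t) - F_* + \lambda_1(t)\, e_t + \lambda_2(t)\, v_t$ for appropriate time-varying weights will satisfy $\E[\Phi_{t+1}] - \Phi_t \le -\tfrac{\eta_t}{4}\E\|\nabla F(\w_t)\|^2 + (\text{summable noise})$, after which telescoping and the random sampling of $\tau$ in Algorithm~\ref{alg:0} yield the stated rate.

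The new derivation is the $e_t$ recurrence. Writing $\hat g_t := \tfrac{1}{B_1}\sum_{i \in \mathcal{B}_1^t} \nabla f_i(\u_{t-1}^i)\nabla g_i(\w_t;\xi_t^i)$ and $\tilde g_{t-1} := \tfrac{1}{B_1}\sum_{i\in \mathcal{B}_1^t} \nabla f_i(\u_{t-2}^i)\nabla g_i(\w_{t-1};\xi_t^i)$, equation~(\ref{eqn:z}) becomes $\z_t = \Pi_{C_F}[\hat g_t + (1-\alpha_t)(\z_{t-1} - \tilde g_{t-1})]$. Since $\|\nabla F(\w_t)\| \le C_F$, projection is non-expansive with respect to $\nabla F(\w_t)$, so I would analyze the pre-projection STORM identity
\begin{equation*}
\z_t - \nabla F(\w_t) = (1-\alpha_t)(\z_{t-1} - \nabla F(\w_{t-1})) + (1-\alpha_t) D_t + \alpha_t N_t,
\end{equation*}
with $D_t := \hat g_t - \tilde g_{t-1} - \nabla F(\w_t) + \nabla F(\w_{t-1})$ and $N_t := \hat g_t - \nabla F(\w_t)$. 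Unlike vanilla STORM, neither $D_t$ nor $N_t$ is conditionally mean-zero, because $\nabla f_i(\u_{t-1}^i) \neq \nabla f_i(g_i(\w_t))$; the $\u$-bias is $\mathcal{O}(L_f C_g \|\u_{t-1} - g(\w_t)\|)$ by $L_f$-smoothness of $f_i$, while the sampling noise from $\mathcal{B}_1^t$ and $\xi_t^i$ is mean-zero. Separating these pieces, expanding the squared norm, and applying Young's inequality to the surviving cross terms gives
\begin{equation*}
e_t \le (1 - \tfrac{\alpha_t}{2})\, e_{t-1} + \O\!\left(\tfrac{\alpha_t^2 \sigma^2}{B_1 B_2}\right) + \O\!\left(\tfrac{L_g^2 C_f^2 + L_f^2 C_g^4}{B_1}\, \|\w_t-\w_{t-1}\|^2\right) + \O\!\left(\tfrac{L_f^2 C_g^2}{\alpha_t B_1}\, v_{t-1}\right),
\end{equation*}
i.e., the usual STORM error recurrence augmented by a $v_{t-1}$-coupling term reflecting the $\u$-bias.

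Combining the smoothness descent $F(\w_{t+1}) - F(\w_t) \le -\tfrac{\eta_t}{2}\|\nabla F(\w_t)\|^2 + \tfrac{\eta_t}{2}\|\z_t - \nabla F(\w_t)\|^2 - (\tfrac{\eta_t}{2} - \tfrac{L_F \eta_t^2}{2})\|\z_t\|^2$ with the $e_t$ and $v_t$ recurrences, I would choose $\lambda_1(t)$ so that the contraction $(1-\alpha_t/2)$ in the $e_t$ bound pays for its own time-increment and for the $\tfrac{\eta_t}{2}$ cost in the descent, and $\lambda_2(t)$ so that the contraction $(1-B_1\beta_t/m)$ from Lemma~\ref{lem:main1} pays for its own increment and for the $v_{t-1}$ coupling $\lambda_1(t) L_f^2 C_g^2/(\alpha_t B_1)$ imported from the $e_t$ recurrence. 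The scalings $\alpha_{t+1}\asymp m\eta_t^2/B_1$ and $\beta_{t+1}\asymp m^2\eta_t^2/B_1^2$ stated in the theorem are exactly what makes both balances feasible, and the constant-plus-decaying schedule $\eta_t \asymp (B_1\sqrt{B_2}/m)^{2/3}(a+t)^{-1/3}$ with $a\asymp m B_2/B_1$ additionally guarantees $\eta_1 L_F \le 1/2$ and $\beta_1 \le 1/2$. The motion $\|\w_t-\w_{t-1}\|^2 = \eta_{t-1}^2\|\z_{t-1}\|^2$ contributed by both recurrences is then dominated by the $(\tfrac{\eta_t}{2} - \tfrac{L_F\eta_t^2}{2})\|\z_t\|^2$ slack of the descent (after a one-step index shift, with $\|\z_t\|\le C_F$ used only as a fallback). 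The residual noise telescopes to $\O((B_1\sqrt{B_2}/m)^{2/3} T^{1/3})$ while $\sum_t \eta_t \asymp (B_1\sqrt{B_2}/m)^{2/3} T^{2/3}$, so $\E\|\nabla F(\w_\tau)\|^2 = \O((m/(B_1\sqrt{B_2}))^{2/3} T^{-2/3})$; setting this equal to $\epsilon^2$ yields $T = \O(m\epsilon^{-3}/(B_1\sqrt{B_2}))$.

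The main obstacle is the simultaneous coupling between the three error sequences: the $e_t$ recurrence imports a $v_{t-1}/(\alpha_t B_1)$ bias whose coefficient \emph{grows} as $\alpha_t$ shrinks, while the $v_t$ recurrence already carries a $\|\w_t-\w_{t-1}\|^2$ motion term with coefficient $m^2/B_1$. Finding Lyapunov weights $\lambda_1,\lambda_2$ that simultaneously (i) absorb their own time-dependent self-increments, (ii) swallow the $v$-into-$e$ coupling, and (iii) leave a non-positive coefficient on $\|\z_t\|^2$ after combining with the $(\tfrac{\eta_t}{2} - \tfrac{L_F\eta_t^2}{2})$ slack is what pins down the ratio $\alpha_{t+1}/\beta_{t+1}\asymp B_1/m$ and the prefactor $(B_1\sqrt{B_2}/m)^{2/3}$ in $\eta_t$. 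A secondary but more routine difficulty is that the cross term $\langle \z_{t-1} - \nabla F(\w_{t-1}),\, D_t\rangle$ does not vanish in conditional expectation as it would for unbiased STORM; it produces an $\mathcal{O}(L_f C_g \sqrt{v_{t-1}})$-scale bias which is ultimately absorbed by the $\lambda_2 v_t$ component of $\Phi_t$.
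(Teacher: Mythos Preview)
Your plan has the right architecture---a Lyapunov function combining the function gap, a gradient-tracking error, and the inner-mapping error---but the $e_t$ recurrence you write cannot be combined with Lemma~\ref{lem:main1} to close at the $\epsilon^{-3}$ rate under the theorem's parameters. The culprit is precisely the term you flagged, the $\mathcal{O}\bigl(L_f^2C_g^2\,v_{t-1}/(\alpha_t B_1)\bigr)$ coupling. To absorb it into the $v$-contraction of Lemma~\ref{lem:main1} you need $\lambda_2 \gtrsim \lambda_1 m/(\alpha_t B_1^2\beta_t)$; with $\lambda_1 \asymp \eta_t/\alpha_t$ (forced by the descent step) and the stated $\alpha_t\asymp m\eta_t^2/B_1$, $\beta_t\asymp m^2\eta_t^2/B_1^2$, this drives $\lambda_2\asymp B_1^2/(m^3\eta_t^5)$. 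The $v$-noise contribution $\lambda_2\cdot 2B_1\beta_t^2\sigma^2/B_2$ from Lemma~\ref{lem:main1} is then $\Theta(1/\eta_t)$ per step. Summed over $T$ steps this grows like $T^{4/3}$, whereas $\sum_t\eta_t\asymp T^{2/3}$, so the average gradient bound diverges rather than decays. The balances you assert ``work out'' do not; the Young parameter of order $\alpha_t$ that the bias cross term $\langle \z_{t-1}-\nabla F(\w_{t-1}),\,\E[D_t\mid\mathcal F_{t-1}]\rangle$ forces is fatal.

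The paper avoids this by changing the target of the STORM recurrence: instead of $\|\z_t-\nabla F(\w_t)\|^2$, it tracks the surrogate error $\bigl\|\z_t-\tfrac{1}{m}\sum_i\nabla g_i(\w_t)\nabla f_i(\u_{t-1}^i)\bigr\|^2$. Because $\hat g_t$ and $\tilde g_{t-1}$ are \emph{unbiased} for this surrogate and its predecessor (the $\u$'s are frozen at $\u_{t-1},\u_{t-2}$), the STORM recurrence for the surrogate error (Lemma~6 in the paper) has no bias cross term and hence no $1/\alpha_t$ factor---only $\|\u_{t-1}-\u_{t-2}\|^2/m$ and $\|\w_t-\w_{t-1}\|^2$ terms with constant coefficients. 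The gap between the surrogate and $\nabla F(\w_t)$ is bounded separately by $\mathcal{O}\bigl(v_t/m+\|\u_t-\u_{t-1}\|^2/m\bigr)$ (Lemma~5), again with constant coefficient. The potential becomes $\Gamma_t=F(\w_t)+\tfrac{B_1}{c_0\eta_t m}\bigl[\tfrac{1}{m}v_t+\text{(surrogate error)}\bigr]$, a single contraction rate $(1-B_1\beta_{t+1}/(2m))$ handles both pieces, and the residual noise is $\mathcal{O}\bigl(m^2\eta_t^3/(B_1^2B_2)\bigr)$ per step, which under the stated $\eta_t$ sums to $\mathcal{O}(\log T)$. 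The missing idea in your proposal is this surrogate-target trick: push the $\u$-bias out of the STORM recurrence and into a static comparison, so that it never gets multiplied by $1/\alpha_t$.
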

\textbf{Remark:} When $m=1$ and $f$ is the identity function, problem~(\ref{p:1}) reduces to the standard stochastic non-convex optimization, whose lower bound is  $\Omega\left({\epsilon^{-3}}\right)$~\citep{Arjevani2019LowerBF}, indicating our MSVR-v2 is optimal.

Next, we show that the complexity can be further improved when the objective function is convex or strongly convex. We note that Polyak-Łojasiewicz (PL)~\citep{Karimi2016LinearCO} objectives are more general than strongly convex functions, since $\mu$-strong convexity implies the $\mu$-PL condition. So, we will consider the PL condition and introduce its definition below.
\begin{definition}
$F(\w)$ satisfies the $\mu$-PL condition if there exists $\mu > 0$ such that:
\begin{equation*}
    2 \mu\left(F(\mathbf{w})-F_{*}\right) \leq\|\nabla F(\mathbf{w})\|^{2}. 
\end{equation*}
\end{definition}
Then, we derive improved rates for convex or PL objectives  by using the stage-wise design given in Algorithm~\ref{alg:3} in the supplement.
\begin{thm}\label{thm:4}
	If the objective function satisfies the convexity or $\mu$-PL condition, MSVR-v1 derives a sample complexity of $\O(\max(B_1,B_2)\epsilon^{-3})$ or $\O(\max(B_1,B_2)\mu^{-2}\epsilon^{-1})$, separately. For MSVR-v2, the  complexity can be further improved to  $ \mathcal{O}\left(m \sqrt{B_2}\epsilon^{-2} \right)$ or  $ \mathcal{O}\left({m}\sqrt{B_2}\mu^{-1} \epsilon^{-1} \right)$. 
\end{thm}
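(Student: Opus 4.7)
The proof plan is to wrap MSVR-v1 and MSVR-v2 inside the stage-wise (restart) meta-algorithm of Algorithm 3 (in the supplement). The meta-algorithm partitions the run into $K$ stages; stage $k$ runs the base algorithm for $T_k$ iterations starting from the output $\w_k$ of stage $k{-}1$, with the learning rate $\eta_k$, momentum $\alpha_k$, and MSVR parameter $\beta_k$ rescaled to a target suboptimality $\epsilon_k$. Taking $\epsilon_k=\epsilon_0/2^{k-1}$, so that $K=\O(\log(\Delta_F/\epsilon))$ stages suffice, the total sample cost is $\sum_k T_k\cdot B_1B_2$.

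The first step is to establish a per-stage convergence lemma parallel to Theorems~\ref{thm:1} and~\ref{thm:2} but with the initial suboptimality $D_k:=\E[F(\w_k)-F_*]$ kept explicit instead of being absorbed into the global $\Delta_F$. Combining Lemma~\ref{lem:main1} with a tracking recursion for $\|\z_t-\nabla F(\w_t)\|^2$ (moving-average recursion for v1, STORM-style recursion for v2) and the standard smooth-descent inequality for $F$, I would obtain stage bounds roughly of the form $\E[F(\bar\w_k)-F_*]\le\O(D_k/(\eta_k T_k))+\O(\eta_k \sigma^2/B_2)$ for v1 and $\E[F(\bar\w_k)-F_*]\le\O(D_k/(\eta_k T_k))+\O(\eta_k^{2}\sigma^2/B_2)$ for v2. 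Under PL, I would use $\|\nabla F\|^2\geq 2\mu(F-F_*)$ to convert the resulting gradient bound into a contraction on $D_k$; under convexity, I would bound $F(\bar\w_k)-F_*\le\langle\nabla F(\bar\w_k),\bar\w_k-\w^*\rangle$, exploiting the $C_F$-projection in~(\ref{momentum}) and~(\ref{eqn:z}) together with a bounded-domain argument on $\|\w_k-\w^*\|$.

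The second step is to choose $\eta_k,T_k$ so that each stage halves the suboptimality, $\E[F(\w_{k+1})-F_*]\le D_k/2$. Balancing the two terms in each per-stage bound yields $T_k=\O(\sigma^2/(B_2\mu^2\epsilon_k))$ for MSVR-v1 under PL and $T_k=\widetilde\O(m\sigma/(B_1\sqrt{B_2}\,\mu\sqrt{\epsilon_k}))$ for MSVR-v2 under PL, and the analogous $T_k=\O(R^2\sigma^2/\epsilon_k^2)$ and $T_k=\widetilde\O(mR\sqrt{\sigma}/(B_1\sqrt{B_2}\,\epsilon_k^{3/2}))$ for the convex case. Summing these $T_k$ geometrically is dominated by the last stage ($\epsilon_K\asymp\epsilon$), and multiplying by $B_1 B_2$ samples per iteration recovers the four advertised complexities.

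The main obstacle is making $D_k$ appear as a \emph{multiplicative} factor in the per-stage bound rather than as the additive constant $\Delta_F$ that shows up in Theorems~\ref{thm:1}--\ref{thm:2}. In those theorems the initial-suboptimality term is telescoped once across the whole run, whereas the stage-wise argument requires telescoping \emph{within} each stage against the PL or convex contraction, so that shrinking $D_k$ across stages genuinely buys a smaller $T_k$. Doing so requires rebalancing the coefficient of the $\|\w_t-\w_{t-1}\|^2$ term inherited from Lemma~\ref{lem:main1} against the smooth-descent step, and re-tuning $\alpha_k,\beta_k$ per stage to match the reduced target accuracy; once this bookkeeping is in place, the remaining summation over stages is routine and produces the stated sample complexities.
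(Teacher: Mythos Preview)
Your stage-wise restart framework matches the paper's Algorithm~\ref{alg:3}, and the halving schedule $\epsilon_k=\epsilon_0/2^{k-1}$ is exactly how the paper proceeds. But your per-stage bounds are missing a term, and that omission breaks the induction.

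The paper's per-stage average-gradient bound (shown for v2) reads
\[
\frac{1}{T_s}\sum_{t=1}^{T_s}\E\|\nabla F(\w_t)\|^2 \;\le\; \frac{2\bigl(F(\w_{s-1})-F_*\bigr)}{\eta_s T_s} \;+\; \frac{2B_1\,\Delta_{s-1}}{c_0\,\eta_s^2 T_s\, m} \;+\; \frac{m^2 c_0^4\,\eta_s^2}{256\,B_2 B_1^2},
\]
where $\Delta_{s-1}=\frac{1}{m}\|\u_{s-1}-g(\w_{s-1})\|^2+\bigl\|\z_{s-1}-\frac{1}{m}\sum_i\nabla g_i(\w_{s-1})\nabla f_i(\u^i_{s-2})\bigr\|^2$ is the combined estimator error \emph{inherited from the previous stage} (Algorithm~\ref{alg:3} warm-starts $\u$ and $\z$, it does not re-initialize them). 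Your two-term bounds suppress this middle term. With the paper's per-stage choice $\eta_s\propto\sqrt{\mu\epsilon_s}$ and $T_s$ at least of order $1/(\mu\epsilon_s)$, the product $\eta_s^2 T_s$ is of constant order, so if $\Delta_{s-1}$ were only known to be $\O(1)$ the middle term would be a non-vanishing constant and the gradient bound could never be pushed below a fixed floor. The paper therefore carries a \emph{coupled} induction hypothesis: not only $F(\w_{s-1})-F_*\le\epsilon_{s-1}$ but also $\Delta_{s-1}\le\mu\epsilon_{s-1}$, and proves \emph{both} at stage $s$. Showing the second invariant requires a separate calculation (from the recursions of Lemmas~\ref{lem:main1} and~\ref{lem:6}, using the per-stage choice $\beta_s\propto B_2\mu\epsilon_s$ and the uniform bound on $\sum_t\|\z_t\|^2$). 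Your proposal never states this second invariant, so the ``remaining summation over stages is routine'' claim does not hold: without $\Delta_{s-1}\le\mu\epsilon_{s-1}$ the halving fails.

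A secondary difference: for the convex case you sketch a direct convexity inequality plus a bounded-domain bound on $\|\w_k-\w^*\|$, but the update is unconstrained and no such iterate bound is established anywhere. The paper sidesteps this entirely by the regularization trick: it applies the PL result to $\hat F(\w)=F(\w)+\tfrac{\mu}{2}\|\w\|^2$ and then sets $\mu=\epsilon/D$, which reduces the convex rates to the PL rates without any domain argument.
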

\textbf{Remark:} 
The complexities for MSVR-v2 are optimal, since they match the  $\Omega\left(\epsilon^{-2}\right)$ and $\Omega\left(\mu^{-1}\epsilon^{-1}\right)$ lower bound for stochastic convex and strongly convex optimization~\citep{Agarwal2012InformationTheoreticLB}. 

\textbf{Remark:} The algorithms proposed in this paper can also use adaptive (Adam-style) learning rates and obtain the same complexity using the techniques proposed by~\cite{guo2022stochastic}. The details are provided in the supplementary.

\section{An Improved Rate for the Finite-sum Case}
In this section, we consider the case that inner function $g_i$ is in the form of the finite-sum, i.e., $g_i(\w) = \frac{1}{n} \sum_{j=1}^{n} g_i(\w; \xi_{ij})$, so that we can compute the exact value of $g_i(\w)$ in some iterations. We first modify our MSVR estimator to utilize the finite-sum structure. Inspired by SVRG~\citep{NIPS2013_ac1dd209,NIPS:2013:Zhang}, we compute a full version of the inner function value for every $I$ iterations at $\mathbf{w}_{\tau}$, i.e., $g_i\left(\mathbf{w}_{\tau}\right) = \frac{1}{n} \sum_{j=1}^{n} g_{i}(\w_{\tau}; \xi_{ij})$ for $i = 1, \cdots, m$, where $\tau \bmod I=0$. Then, in each step, we use
\begin{equation*} 
    \begin{split}
    \widehat g_{i}(\w_t; \xi_t^{i}) =   g_{i}(\w_t; \xi_t^{i}) -   g_{i}(\w_\tau; \xi_t^{i}) + g_{i}(\w_\tau)
    \end{split}
\end{equation*}
 to replace $g_{i}(\w_t; \xi_t^{i})$ in the origin estimator. In this way, our MSVR estimator is changed to:
\begin{equation} \label{MSVR_SVRG}
\begin{split}
    \mathbf{u}_{t}^{i}=\left\{\begin{array}{l}
(1-\beta)\mathbf{u}_{t-1}^{i}+\beta \widehat g_i\left(\mathbf{w}_{t}; \xi_t^{i}\right) + \gamma \left( g_i\left(\mathbf{w}_{t}; \xi_t^{i}\right)-g_i\left(\mathbf{w}_{t-1}; \xi_t^{i}\right) \right) \quad \quad  i \in \mathcal{B}_{1}^{t}\\
\mathbf{u}_{t-1}^{i}    \ \quad \quad\quad\quad\quad\quad\quad\quad\quad\quad\quad\quad\quad\quad\quad\quad\quad\quad\quad\quad\quad\quad\quad\quad\quad  i \notin \mathcal{B}_{1}^{t}
\end{array}\right..
\end{split}
\end{equation}
For this estimator, we have the following guarantee.
\begin{lemma}\label{lem:main2}
If $\beta \leq \frac{1}{2}$ and $\beta I \leq \frac{m}{B_1}$, by setting $\gamma = \frac{m-B_1}{B_1(1-\beta)}+(1-\beta)$, we have:
\begin{equation*}
\begin{split}
\E\left[\left\|\u_{t+1}-g\left(\w_{t+1}\right)\right\|^{2}\right]  \leq \left(1-\frac{B_1\beta}{m}\right)\E\left[\left\|\u_{t}-g\left(\w_{t}\right)\right\|^{2}\right]+\frac{10 m^{2} C_g^2}{B_1}\E\left[\left\|\w_{t+1}-\w_{t}\right\|^{2}\right] .
\end{split}
\end{equation*}
\end{lemma}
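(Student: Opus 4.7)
The plan is to parallel the proof of Lemma~\ref{lem:main1}, with two modifications driven by the SVRG-style estimator. First, since
\[
\widehat g_i(\w_{t+1};\xi_{t+1}^i) = g_i(\w_{t+1};\xi_{t+1}^i) - g_i(\w_\tau;\xi_{t+1}^i) + g_i(\w_\tau)
\]
is an unbiased estimator of $g_i(\w_{t+1})$, the identity $\E\|X-\E X\|^2 \leq \E\|X\|^2$ combined with Assumption~3 (average Lipchitz of $g_i$) gives the SVRG variance bound
\[
\E\bigl[\|\widehat g_i(\w_{t+1};\xi_{t+1}^i) - g_i(\w_{t+1})\|^2\bigr] \leq \E\bigl[\|g_i(\w_{t+1};\xi_{t+1}^i)-g_i(\w_\tau;\xi_{t+1}^i)\|^2\bigr] \leq C_g^2\,\|\w_{t+1}-\w_\tau\|^2,
\]
which replaces the $\sigma^2/B_2$ term of Lemma~\ref{lem:main1}. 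Second, this accumulated-distance quantity has to be converted into the single-step $\|\w_{t+1}-\w_t\|^2$ appearing in the conclusion, which is where the assumption $\beta I \leq m/B_1$ enters.

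I would then run the same decomposition as in the sketch preceding Lemma~\ref{lem:main1}: split $\E[\|\u_{t+1}^i - g_i(\w_{t+1})\|^2]$ on whether $i \in \mathcal B_1^{t+1}$ with weights $B_1/m$ and $1-B_1/m$, handle the unsampled branch as $A_{21}+A_{22}+A_{23}$, and the sampled branch via the four pieces $A_{11},A_{12},A_{13},A_{14}$. The choice $\gamma = \frac{m-B_1}{B_1(1-\beta)}+(1-\beta)$ cancels the cross-term $2A_{11}^\top A_{12}$ against the troublesome $A_{23}$ exactly as in Lemma~\ref{lem:main1}, and $\|A_{14}\|^2$ contributes the $m^2 C_g^2/B_1$ factor on $\|\w_{t+1}-\w_t\|^2$ by Assumption~3. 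The only new contribution comes from $\E\|A_{13}\|^2 \leq \beta^2 C_g^2\,\|\w_{t+1}-\w_\tau\|^2$.

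The remaining work is to absorb this accumulated-distance term. By Cauchy--Schwarz, $\|\w_{t+1}-\w_\tau\|^2 \leq I\sum_{s=\tau}^{t}\|\w_{s+1}-\w_s\|^2$, so summed over the $B_1$ sampled indices the contribution is at most $2\beta^2 B_1 C_g^2 I \sum_{s=\tau}^{t}\|\w_{s+1}-\w_s\|^2$. The condition $\beta I \leq m/B_1$ converts this into a bound of order $2\beta m C_g^2 \sum_{s=\tau}^{t}\|\w_{s+1}-\w_s\|^2$. To collapse it into the stated single-step form, I would use a Lyapunov function of the form $V_{t} := \E[\|\u_t - g(\w_t)\|^2] + c\sum_{s=\tau}^{t-1}(1-B_1\beta/m)^{t-s-1}\E\|\w_{s+1}-\w_s\|^2$ with $c$ tuned so that the geometric factor $\sum_s (1-B_1\beta/m)^{t-s} \leq m/(B_1\beta)$ exactly converts the $2\beta m C_g^2$ coefficient into the extra $2m^2 C_g^2/B_1$, yielding the total coefficient $(8+2)\,m^2 C_g^2/B_1 = 10\,m^2 C_g^2/B_1$ on $\|\w_{t+1}-\w_t\|^2$ while preserving the contraction factor $1 - B_1\beta/m$.

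The main obstacle is the last step: the SVRG variance bound naturally produces an $I$-step accumulated distance, but the statement is a clean one-step recursion. The arithmetic between $\beta I \leq m/B_1$ and the geometric decay $m/(B_1\beta)$ has to be done carefully, and the constants $\beta \leq 1/2$ and $\gamma$'s dependence on $\beta$ must be tracked consistently when bounding $\|A_{13}+A_{14}\|^2$ and $\|A_{11}+A_{12}\|^2$ so that the final coefficient comes out to $10$ rather than something larger. Once these bookkeeping details line up, the stated inequality drops out directly.
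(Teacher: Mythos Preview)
Your decomposition, the SVRG variance bound $\E\|\widehat g_i-g_i\|^2\leq C_g^2\|\w_{t+1}-\w_\tau\|^2$, and the choice of $\gamma$ to cancel the cross-term all match the paper's proof exactly. The only divergence is in the last step, converting $\|\w_{t+1}-\w_\tau\|^2$ to the single-step quantity.

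The paper does not introduce any Lyapunov function. It records the summed bound $\sum_t\|\w_{t+1}-\w_\tau\|^2\leq I^2\sum_t\|\w_{t+1}-\w_t\|^2$ and then, in the per-$t$ recursion, substitutes as if $\|\w_{t+1}-\w_\tau\|^2\leq I^2\|\w_{t+1}-\w_t\|^2$ held pointwise, after which $\beta^2 I^2\leq(m/B_1)^2$ turns the coefficient $2B_1\beta^2 C_g^2/m$ into $2mC_g^2/B_1$ and the total becomes $10$. Your suspicion about this step is correct: the pointwise substitution is not literally valid, and the lemma is really a shorthand for its telescoped consequence, which is the only form used downstream in Theorem~\ref{thm:3}.

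Your proposed Lyapunov $V_t=\E\|\u_t-g(\w_t)\|^2+c\sum_{s=\tau}^{t-1}(1-B_1\beta/m)^{t-s-1}\E\|\w_{s+1}-\w_s\|^2$ does not close the recursion either: after Cauchy--Schwarz the recursion carries an \emph{unweighted} sum $\sum_{s=\tau}^t\E\|\w_{s+1}-\w_s\|^2$, while the auxiliary potential only absorbs a geometrically weighted one, so the difference $V_{t+1}-(1-B_1\beta/m)V_t$ still contains the full unweighted sum. The clean fix is to skip the potential entirely: sum the raw recursion over $t$ and apply the $I^2$ bound in summed form. That yields
\[
\sum_t\E\|\u_t-g(\w_t)\|^2\leq\frac{m}{B_1\beta}\E\|\u_1-g(\w_1)\|^2+\frac{10m^3C_g^2}{B_1^2\beta}\sum_t\E\|\w_{t+1}-\w_t\|^2
\]
with the same constant $10$, which is all the subsequent analysis needs.
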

\textbf{Remark: } Compared with Lemma~\ref{lem:main1}, we remove the $\frac{2 B_1 \beta^2 \sigma^2}{B_2}$ term, which is the key to reduce the complexity since we can now use a larger parameter $\beta$.
\begin{algorithm}[t]	
	\caption{MSVR-v3 method}
	\label{alg:2}
	\begin{algorithmic}[1]
	\STATE {\bfseries Input:} time step $T$, parameters $\alpha$, $\beta$,$\gamma$, $I$, learning rate $\eta$ and initial points $(\w_1,\u_1,\z_1)$.
		\FOR{time step $t = 1$ {\bfseries to} $T$}
		\IF{$t \mod I ==0$}
        \STATE Set $\tau = t$ 
        \STATE Compute and save  $g_i(\w_\tau), \nabla f_i(\u^i_{\tau-1})$ for every $i$ and $\frac{1}{m}\sum_{i=1}^{m}  \nabla f_{i}(\u_{\tau-1}^{i}) \nabla g_{i}(\w_{\tau}) $
		\ENDIF
		\STATE Sample a subset $\mathcal{B}_{1}^{t}$ from $\{1,2,\cdots,m \}$
		\STATE{Compute function value estimator $\u_t$ according to equation~(\ref{MSVR_SVRG})}
        \STATE{Compute gradient estimator $\z_t$ according to equation~(\ref{eqn:z+})}
        
        \STATE $\w_{t+1} = \w_t - \eta \z_{t}$
		\ENDFOR
	\STATE Choose $\tau$ uniformly at random from $\{1, \ldots, T\}$
	\STATE Return $\w_{\tau}$
	\end{algorithmic}
\end{algorithm} 

To attain the optimal complexity, we modify the gradient estimator $\z_t$ in a similar way:
\begin{equation} \label{eqn:z+}
\begin{split} 
    \z_{t} &= (1-\alpha)\z_{t-1} +  \alpha \h_t  \\
    & + (1-\alpha)\frac{1}{B_1}\sum_{i \in \mathcal{B}_{1}^{t}}\left(  \nabla f_{i}(\u_{t-1}^{i}) \nabla g_{i}(\w_t;\xi_t^{i}) -  \nabla f_{i}(\u_{t-2}^{i}) \nabla g_{i}(\w_{t-1};\xi_t^{i}) \right),
\end{split}
\end{equation}
where $\h_t$ involves both the full gradient and the stochastic gradient (we also need to save each $\nabla f_i(\u_{\tau-1})$ and calculate the full version of $\frac{1}{m}\sum_{i=1}^{m}  \nabla f_{i}(\u_{\tau-1}^{i}) \nabla g_{i}(\w_{\tau})$ at those steps $\tau$) :
\begin{equation*}
\begin{split}
    \h_t = \frac{1}{B_1}\sum_{i \in \mathcal{B}_{1}^{t}}  (\nabla f_{i}(\u_{t-1}^{i}) \nabla g_{i}(\w_t;\xi_t^{i}) -  \nabla f_{i}(\u_{\tau-1}^{i}) \nabla g_{i}(\w_{\tau};\xi_t^{i})) +\frac{1}{m}\sum_{i=1}^{m}  \nabla f_{i}(\u_{\tau-1}^{i}) \nabla g_{i}(\w_{\tau}).
\end{split}
\end{equation*}
The whole method is summarized in Algorithm~\ref{alg:2} (named as MSVR-v3). Next, we show that MSVR-v3 is equipped with an optimal complexity of $\O(\sqrt{n}\epsilon^{-2})$.

\begin{thm} \label{thm:3}
Our MSVR-v3 with $I =\frac{mn}{B_1 B_2}$, $\alpha = \O\left(\frac{ B_{1} B_2 }{m n }\right)$, $\beta = \O\left(\frac{ B_2 }{n}\right)$ and $\eta = \mathcal{O}\left(\frac{B_1 \sqrt{B_2}}{m \sqrt{n}} \right)$, can obtain an $\epsilon$-stationary point in $T = \mathcal{O}\left( \frac{m \sqrt{n} \epsilon^{-2} }{ B_1 \sqrt{B_2}  } \right)$ iterations.
\end{thm}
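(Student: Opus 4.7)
The plan is to carry out the standard variance-reduction analysis along three coupled recursions: (i) a smoothness-based descent inequality for $F(\w_t)$, (ii) the MSVR-SVRG inner-error recursion of Lemma~\ref{lem:main2}, and (iii) a STORM/SVRG hybrid recursion for the gradient error $\|\z_t - \nabla F(\w_t)\|^2$, and then bundle the three into a Lyapunov function that telescopes over the horizon $T$. By $L_F$-smoothness of $F$ and the update $\w_{t+1} = \w_t - \eta \z_t$, I first obtain
\begin{equation*}
F(\w_{t+1}) \le F(\w_t) - \tfrac{\eta}{2}\|\nabla F(\w_t)\|^2 + \tfrac{\eta}{2}\|\z_t - \nabla F(\w_t)\|^2 - \tfrac{\eta}{2}(1-L_F\eta)\|\z_t\|^2,
\end{equation*}
which, assuming $\eta \le 1/(2L_F)$, leaves $\E\|\z_t - \nabla F(\w_t)\|^2$ as the only remaining quantity to control.

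Introduce the deterministic idealization $\bar\nabla_t := \frac{1}{m}\sum_i \nabla f_i(\u^i_{t-1})\nabla g_i(\w_t)$ and split $\|\z_t - \nabla F(\w_t)\|^2 \le 2\|\z_t - \bar\nabla_t\|^2 + 2\|\bar\nabla_t - \nabla F(\w_t)\|^2$. By the Lipschitz/smooth assumptions on $f_i, g_i$, the bias part $\|\bar\nabla_t - \nabla F(\w_t)\|^2$ is bounded by $\mathcal{O}(L_f^2 C_g^2)\,\|\u_{t-1} - g(\w_{t-1})\|^2$, which is governed directly by Lemma~\ref{lem:main2}. For the variance part, I would expand the hybrid update~(\ref{eqn:z+}) as $\z_t - \bar\nabla_t = (1-\alpha)(\z_{t-1} - \bar\nabla_{t-1}) + M_t$, where $M_t$ is a conditional-mean-zero martingale-difference term after taking expectation over $(\mathcal{B}_1^t,\xi_t^i)$. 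Squaring and exploiting martingale orthogonality gives a recursion of the shape
\begin{equation*}
\E\|\z_t - \bar\nabla_t\|^2 \le (1-\alpha)\E\|\z_{t-1} - \bar\nabla_{t-1}\|^2 + \tfrac{c_1}{B_1}\E\|\w_t - \w_\tau\|^2 + \tfrac{c_2}{B_1}\E\|\w_t - \w_{t-1}\|^2 + \mathcal{O}(\alpha^2),
\end{equation*}
where the crucial structural point is that the SVRG control variate inside $\h_t$ forces the variance to scale with the drift from the checkpoint $\w_\tau$ rather than an irreducible $\sigma^2/B_1$, and after summing over the length-$I$ epoch the aggregate drift is $\mathcal{O}(I\eta^2 \sum_s \|\z_s\|^2)$.

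I would then assemble a Lyapunov function $\Phi_t = F(\w_t) - F_* + c_u \|\u_t - g(\w_t)\|^2 + c_z \|\z_t - \nabla F(\w_t)\|^2$, choosing $c_u, c_z$ so that (a) the positive residuals from the two error recursions are reabsorbed into $\Phi_t$, (b) the $\|\w_t - \w_{t-1}\|^2 = \eta^2 \|\z_{t-1}\|^2$ terms are dominated by the $-\Theta(\eta)\|\z_t\|^2$ slack in the descent inequality, and (c) a fixed fraction of the $-\tfrac{\eta}{2}\|\nabla F(\w_t)\|^2$ term survives. Telescoping $\sum_{t=1}^T(\Phi_t - \Phi_{t+1})$ and dividing by $T$ then yields $\frac{1}{T}\sum_t \E\|\nabla F(\w_t)\|^2 \le \mathcal{O}(\Phi_1/(\eta T))$ plus residual variance terms. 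The prescribed parameters are designed to balance these; in particular the condition $\beta I \le m/B_1$ required by Lemma~\ref{lem:main2} holds with equality since $\beta I = (B_2/n)\cdot mn/(B_1 B_2) = m/B_1$, and with $\eta = \Theta(B_1\sqrt{B_2}/(m\sqrt{n}))$ and $T = \mathcal{O}(m\sqrt{n}\epsilon^{-2}/(B_1\sqrt{B_2}))$ the right-hand side collapses to $\epsilon^2$, which by Jensen gives $\E\|\nabla F(\w_\tau)\| \le \epsilon$.

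The main obstacle is closing the variance recursion across the SVRG epoch: both $\|\w_t - \w_\tau\|^2$ and the auxiliary terms $\sum_{i}\|\u^i_{t-1} - \u^i_{\tau-1}\|^2$ appearing when one expands $\nabla f_i(\u^i_{t-1})$ around $\nabla f_i(\u^i_{\tau-1})$ accumulate with the iterate drift, and the MSVR factor $m^2/B_1$ appearing in the inner-error recursion inflates these contributions. The key is to unroll $\|\u^i_{t-1} - \u^i_{\tau-1}\|^2$ into a sum of per-step increments, each bounded by $\mathcal{O}(\eta^2 C_g^2 \|\z_s\|^2)$ plus a stochastic piece already captured by Lemma~\ref{lem:main2}, and to verify that after multiplying by $\alpha$ and summing over the epoch the $m^2/B_1$ inflation is exactly offset by the combined effect of the SVRG epoch length $I = mn/(B_1 B_2)$ and the small parameters $\alpha,\beta$. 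Once this coupling is closed, the remainder is a routine application of Young's inequality and telescoping.
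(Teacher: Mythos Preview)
Your proposal is essentially correct and follows the same overall architecture as the paper: the smoothness descent inequality, the bias--variance split $\|\z_t-\nabla F(\w_t)\|^2 \lesssim \|\z_t-\bar\nabla_t\|^2 + \tfrac{1}{m}\|\u_{t-1}-g(\w_{t-1})\|^2$, a STORM/SVRG recursion for $\|\z_t-\bar\nabla_t\|^2$, and Lemma~\ref{lem:main2} for the inner error, with the $-\Theta(\eta)\|\z_t\|^2$ slack absorbing the accumulated $\eta^2\|\z_t\|^2$ drift once the parameters are tuned. You also correctly pinpoint the one genuinely delicate step, namely converting the epoch-scale drifts $\|\w_t-\w_\tau\|^2$ and $\|\u_{t-1}-\u_{\tau-1}\|^2$ into sums of per-step increments via $\sum_t\|\w_t-\w_\tau\|^2\le I^2\sum_t\|\w_{t+1}-\w_t\|^2$ and the analogous bound for $\u$, and then invoking $\alpha I\le 1$ and $\beta I\le m/B_1$ to keep the inflation under control.

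Two minor discrepancies worth flagging. First, your displayed variance recursion is slightly off in the coefficients: in the actual expansion the checkpoint-drift term $\|\w_t-\w_\tau\|^2$ carries a factor $\alpha^2$ (from the $\alpha\h_t$ piece), not $c_1/B_1$, and there is \emph{no} residual $\mathcal{O}(\alpha^2)$ constant---the whole point of the SVRG control variate in $\h_t$ is that the additive variance vanishes and everything is drift-scaled, which you in fact state in words right after the display. Second, the paper does not assemble a single Lyapunov potential $\Phi_t$ for this theorem; instead it sums each recursion over $t=1,\ldots,T$ separately (Lemmas~\ref{lem:main2},~\ref{lem:7} and the SVRG analogue of Lemma~\ref{lem:6}), chains the resulting bounds to get $\sum_t\E\|\z_t-\nabla F(\w_t)\|^2 \le \text{const}+\tfrac12\sum_t\|\z_t\|^2$ under the constraint $\tfrac{m^2C\eta^2}{B_1^2\beta}\lesssim 1$, and then cancels against the descent inequality. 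Your Lyapunov packaging is a legitimate alternative that the paper uses for Theorems~\ref{thm:1} and~\ref{thm:2}; for constant $(\alpha,\beta,\eta)$ the two approaches are formally equivalent and yield the same final bound.
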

\textbf{Remark:} When $m=1$ and $f$ is the identity function, problem~(\ref{p:2}) reduces to the stochastic finite-sum optimization, whose optimal complexity is $\mathcal{O}\left(\sqrt{n}\epsilon^{-2}\right)$~\citep{Fang2018SPIDERNN,pmlr-v139-li21a}, indicating our complexity is optimal in terms of $\epsilon$ and $n$. 

Similarly, a better complexity can be obtained under the convexity or PL condition.
\begin{thm}\label{thm:6}
    If the objective function satisfies the convexity or $\mu$-PL condition, the sample complexity can be improved to $\mathcal{O}\left(\frac{m \sqrt{n} \epsilon^{-1}}{B_1 \sqrt{B_2} } \log{\frac{1}{\epsilon}} \right)$ or $ \mathcal{O}\left(\frac{m \sqrt{n} \mu^{-1}}{B_1 \sqrt{B_2}  } \log{\frac{1}{\epsilon}} \right)$, respectively.
\end{thm}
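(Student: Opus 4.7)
The plan is to build on the non-convex guarantee of Theorem~\ref{thm:3} with a stage-wise restart scheme, analogous to the one used for MSVR-v2 in Theorem~\ref{thm:4} (the paper puts that stage-wise driver in Algorithm~3 of the supplement). Reading the proof of Theorem~\ref{thm:3} as a \emph{phase-level} bound, the choice $\eta = \Theta(B_1\sqrt{B_2}/(m\sqrt{n}))$ together with the SVRG-style snapshot every $I$ iterations yields, for a phase of length $T$ started from a point with gap $\Delta$,
\begin{equation*}
\E[\|\nabla F(\w_{\tau})\|^2] \;\le\; \frac{c\,\Delta}{\eta T},
\end{equation*}
where $c$ depends on the smoothness/Lipschitz constants and on $m,n,B_1,B_2$; the function-value tracking error of Lemma~\ref{lem:main2} and the gradient tracking error are reset by the snapshot so that this bound scales with the phase-initial gap rather than the global one.

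For the $\mu$-PL case I would run $K = \lceil \log_2(\Delta_0/\epsilon)\rceil$ stages with $\Delta_k := \Delta_0/2^{k-1}$. In stage $k$ MSVR-v3 is executed with the Theorem~\ref{thm:3} parameters for $T_k = \Theta(m\sqrt{n}/(B_1\sqrt{B_2}\,\mu))$ inner iterations starting from a fresh snapshot, and returns a uniformly random iterate $\w_k$. The phase-level bound combined with PL, namely $2\mu(F(\w)-F_*)\le\|\nabla F(\w)\|^2$, gives $\E[F(\w_k)-F_*] \le \E[\|\nabla F(\w_k)\|^2]/(2\mu) \le \Delta_k/2 = \Delta_{k+1}$, so each stage halves the gap. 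Hence $\E[F(\w_K)-F_*]\le\epsilon$ after $K$ stages, and summing the per-stage iteration counts produces the advertised $\O(m\sqrt{n}\log(1/\epsilon)/(B_1\sqrt{B_2}\,\mu))$ complexity.

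For the merely-convex case I would convert each stage into a strongly-convex subproblem via proximal regularization: in stage $k$ approximately solve $\min_{\w}\; F(\w)+\tfrac{\mu_k}{2}\|\w-\w_{k-1}\|^2$ with MSVR-v3 and apply the PL argument above with parameter $\mu_k$. Tracking a radius bound $D_k \ge \|\w_{k-1}-\w^*\|$ via $\|\w_k-\w^*\|^2 \le 2(F(\w_k)-F_*)/\mu_k$ and choosing $\mu_k = \Theta(\epsilon/D_k^2)$ balances the regularization bias against the optimization error; each stage halves the original-objective gap in $\O(m\sqrt{n}/(B_1\sqrt{B_2}\,\mu_k))$ iterations, and summing across the $\O(\log(1/\epsilon))$ stages yields the claimed $\widetilde{\O}(m\sqrt{n}\epsilon^{-1}/(B_1\sqrt{B_2}))$ sample complexity.

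The main obstacle is the per-stage re-analysis of the Theorem~\ref{thm:3} Lyapunov, which couples $F(\w_t)-F_*$ with both the inner-function tracking error of Lemma~\ref{lem:main2} and the STORM-style gradient tracking error generated by (\ref{eqn:z+}). After a restart these two potentials must be controlled by the current stage-initial gap rather than by $\Delta_0$; forcing a fresh SVRG snapshot at the beginning of every stage makes this possible, but one must verify that the constants in the phase-level bound truly collapse to a multiple of $\Delta_k$ so that the per-stage length $T_k$ is independent of $k$ and the total cost accrues only a $\log(1/\epsilon)$ overhead rather than a geometric sum. Propagating the PL (respectively convex-plus-regularization) inequality through the mixed function-value/tracking-error Lyapunov, and in the convex case choosing $\mu_k$ to keep the bias $\tfrac{\mu_k}{2}D_k^2$ subdominant to the target gap $\Delta_k$, is where most of the bookkeeping lives.
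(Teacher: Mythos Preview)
Your proposal follows essentially the same stage-wise scheme as the paper for the PL case, but there are two places where the paper's argument differs from yours and is cleaner.

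\textbf{Tracking errors across stages.} You say the inner-function and gradient tracking errors are ``reset by the snapshot.'' In the paper they are \emph{not} reset to zero; instead the induction carries all three quantities forward together. Concretely, the paper assumes at the end of stage $s-1$ that
\[
F(\w_{s-1})-F_* \le \epsilon_{s-1},\quad
4\Bigl\|\z_{s-1} - \tfrac{1}{m}\sum_i \nabla g_i(\w_{s-1})\nabla f_i(\u^i_{s-2})\Bigr\|^2 \le \mu\epsilon_{s-1},\quad
\tfrac{146 C_g^2L_f^2}{m}\|\u_{s-1}-g(\w_{s-1})\|^2 \le \mu\epsilon_{s-1},
\]
and then, with the \emph{same} fixed parameters $\alpha=\tfrac{B_1B_2}{mn}$, $\beta=\tfrac{B_2}{n}$, $\eta=\Theta\bigl(\tfrac{B_1\sqrt{B_2}}{m\sqrt{n}}\bigr)$, and $T=\Theta\bigl(\max\{\tfrac{mn}{B_1B_2},\tfrac{m\sqrt{n}}{\mu B_1\sqrt{B_2}}\}\bigr)$, shows all three bounds hold at stage $s$ with $\epsilon_s=\epsilon_{s-1}/2$. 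The phase-level bound from the proof of Theorem~\ref{thm:3} gives
\[
\tfrac{1}{T}\sum_t \|\nabla F(\w_t)\|^2 \;\le\; \tfrac{2(F(\w_{s-1})-F_*)}{\eta T}+\tfrac{\mu\epsilon_{s-1}}{\alpha T}+\tfrac{m\mu\epsilon_{s-1}}{B_1\beta T},
\]
and the Lemma~\ref{lem:main2}/Lemma~\ref{lem:9} recursions separately show the two potentials shrink to $\mu\epsilon_s$. So you do not need to argue that a snapshot ``collapses the constants''; the point is that the three coupled quantities decay at the same geometric rate, and verifying this is straightforward once you write it as a joint induction hypothesis. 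Your flagged obstacle is thus handled more directly than you anticipate.

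\textbf{Convex case.} The paper does \emph{not} use per-stage proximal regularization centered at $\w_{k-1}$. It simply applies the entire PL argument to the globally regularized function $\hat F(\w)=F(\w)+\tfrac{\mu}{2}\|\w\|^2$, which is $\mu$-strongly convex and hence $\mu$-PL, obtains $\hat F(\w_T)-\hat F_*\le \delta$ in $\widetilde{\O}\bigl(\tfrac{m\sqrt{n}}{B_1\sqrt{B_2}\,\mu}\bigr)$ samples, and then chooses $\mu=\epsilon/D$ and $\delta=\epsilon/2$ so that $F(\w_T)-F_*\le \delta + \tfrac{\mu}{2}D \le \epsilon$. This avoids the radius-tracking bookkeeping you describe; your proximal-point route would also work but is unnecessarily elaborate here.
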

\textbf{Remark:} It is notable that we achieve a linear convergence rate $\mathcal{O}\left( \log{\frac{1}{\epsilon}} \right)$ under the PL condition,  matching the current result in the  single-level finite-sum problem~\citep{pmlr-v139-li21a}
\begin{figure*}[t]
	\centering
	\subfigure[STL10]{
		\includegraphics[width=0.31\textwidth]{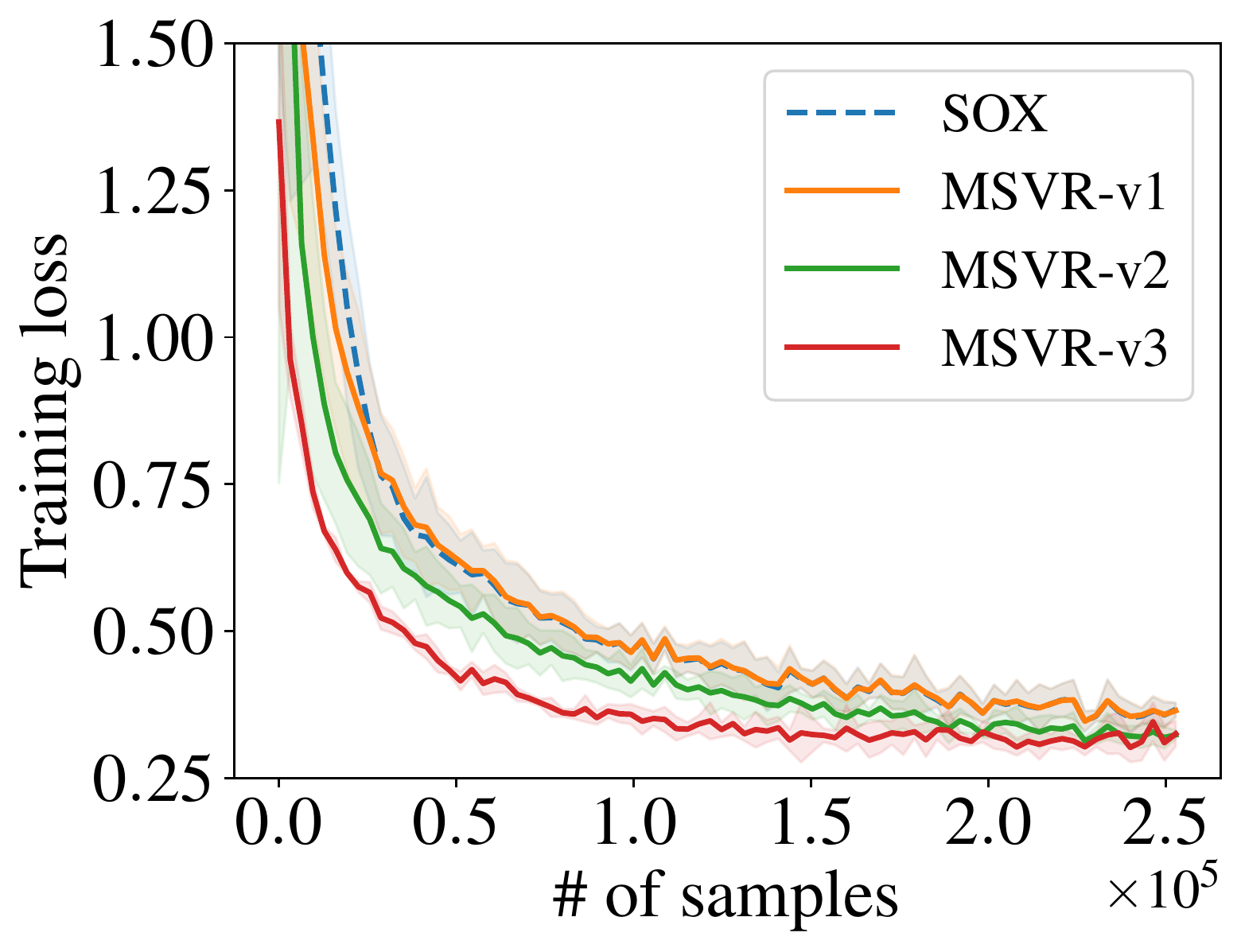}
	}	
	\subfigure[CIFAR10]{
		\includegraphics[width=0.31\textwidth]{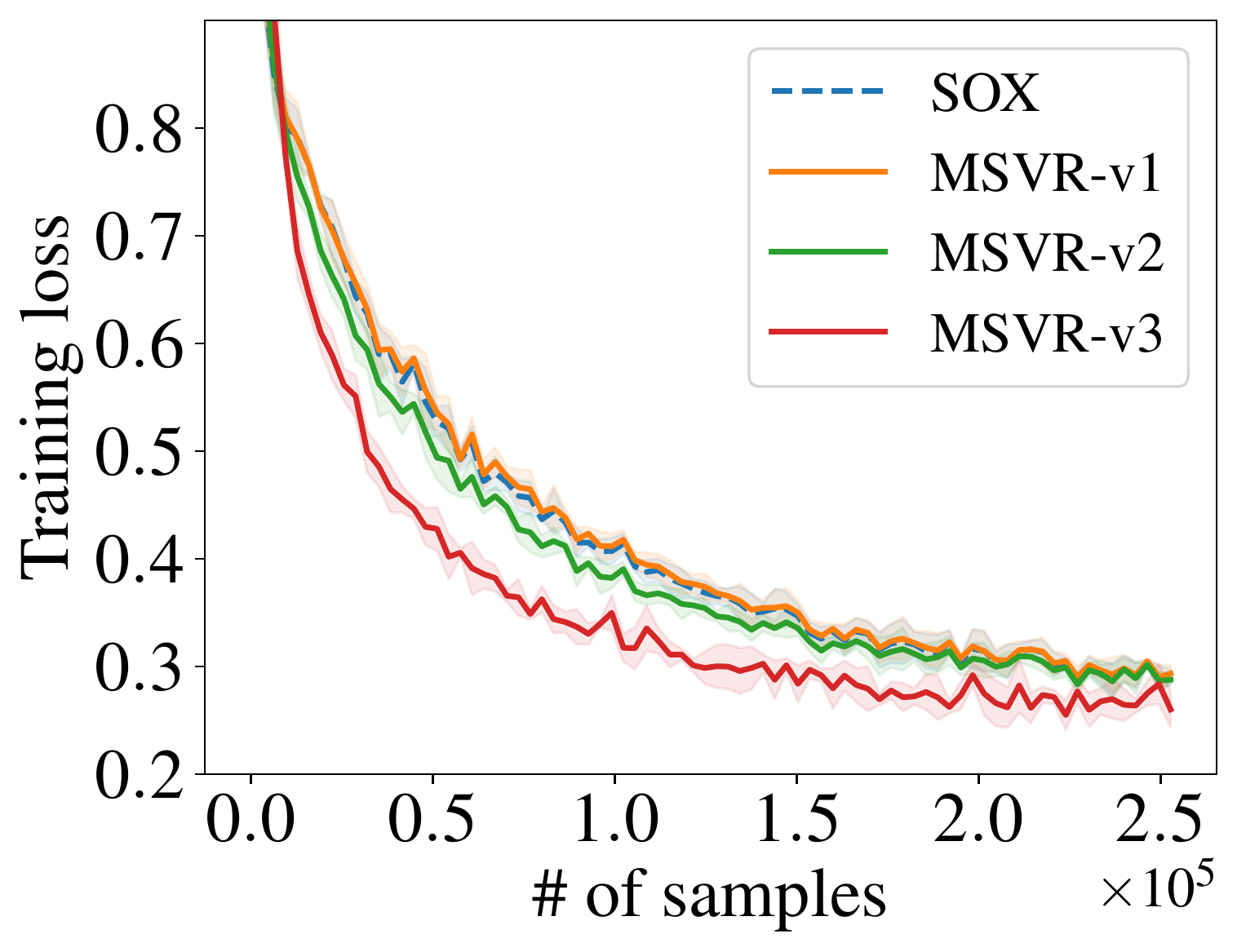}
	}
	\subfigure[CIFAR100]{
		\includegraphics[width=0.31\textwidth]{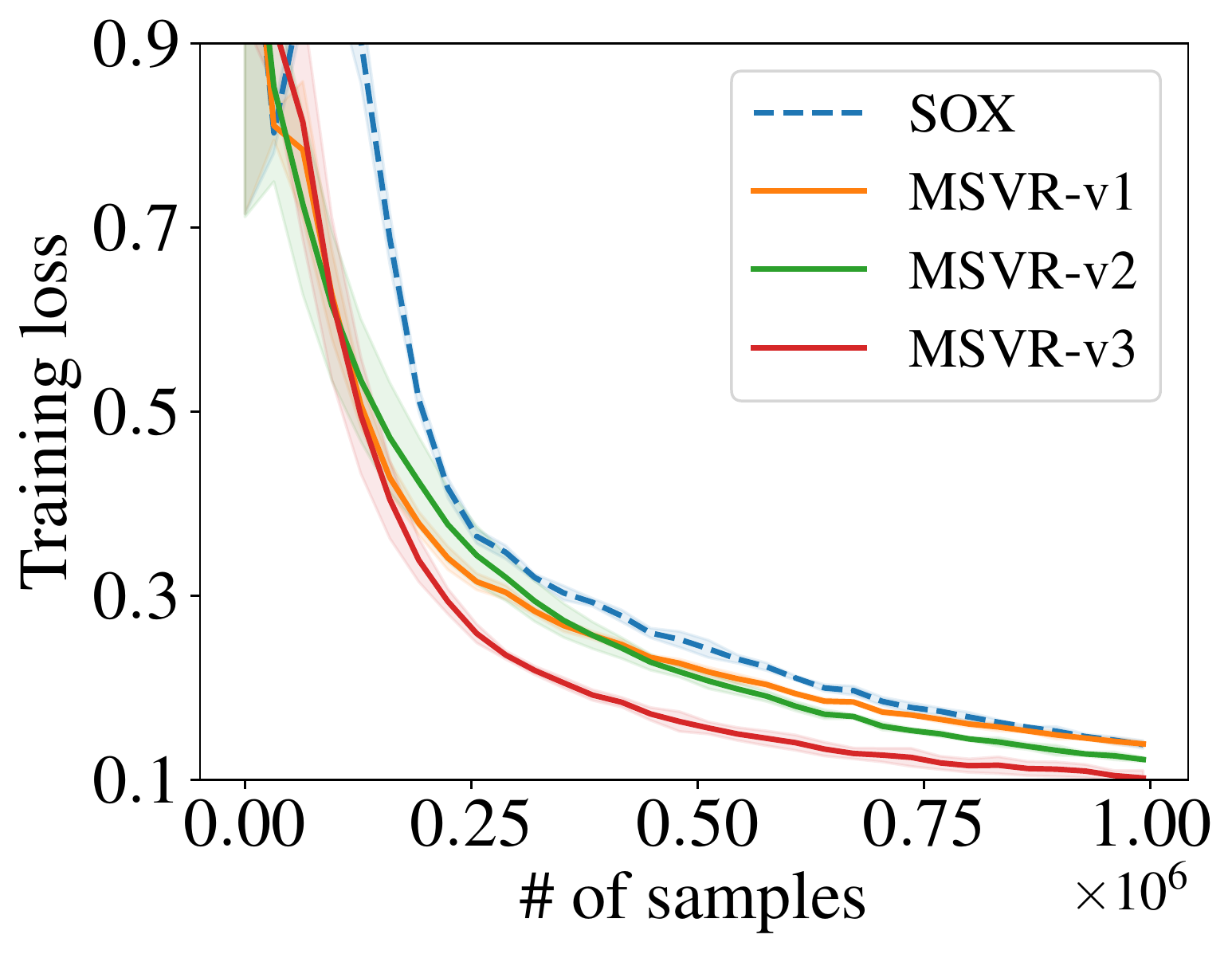}
	}
	\subfigure[MNIST]{
		\includegraphics[width=0.31\textwidth]{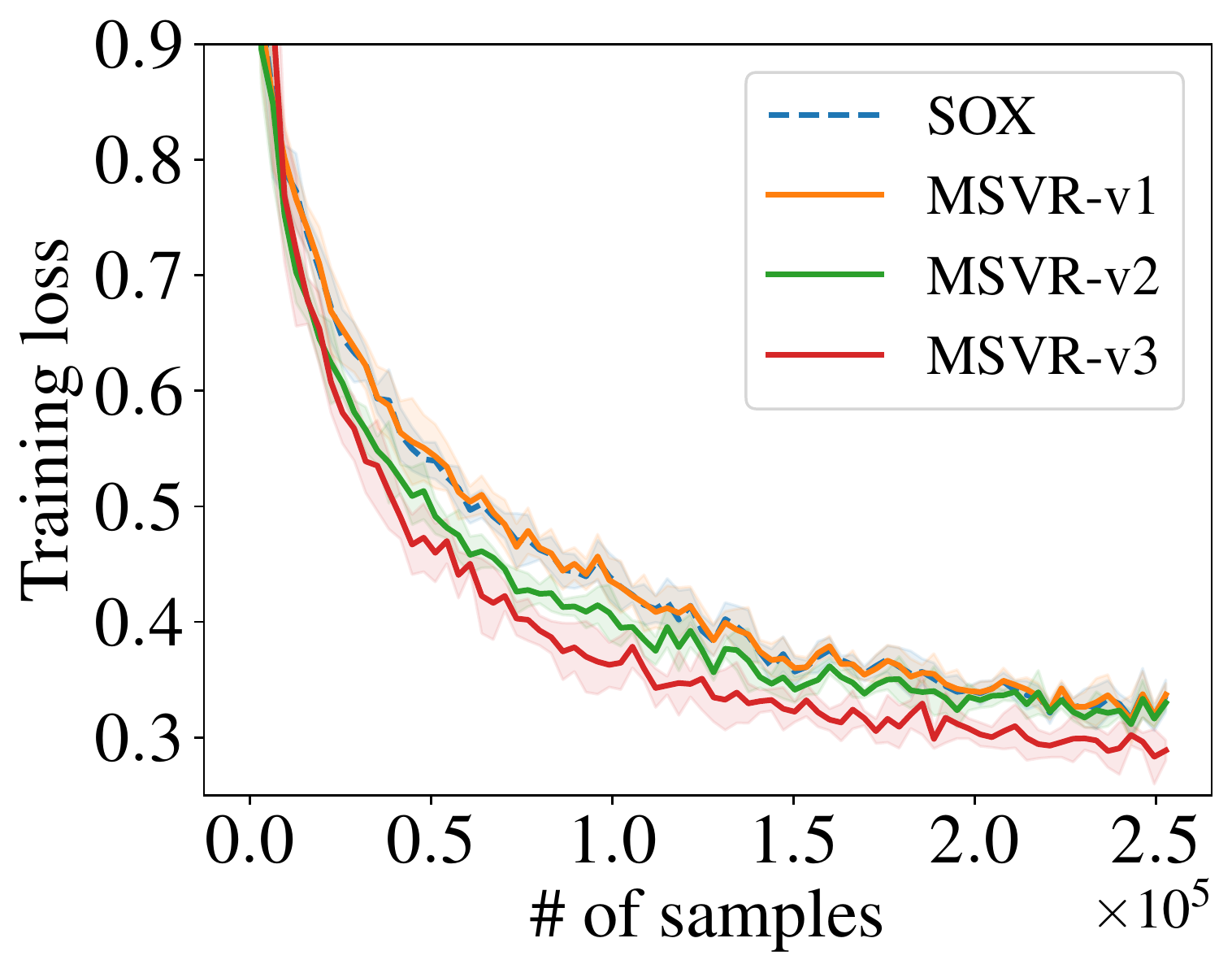}
	}
	\subfigure[Fashion-MNIST]{
		\includegraphics[width=0.31\textwidth]{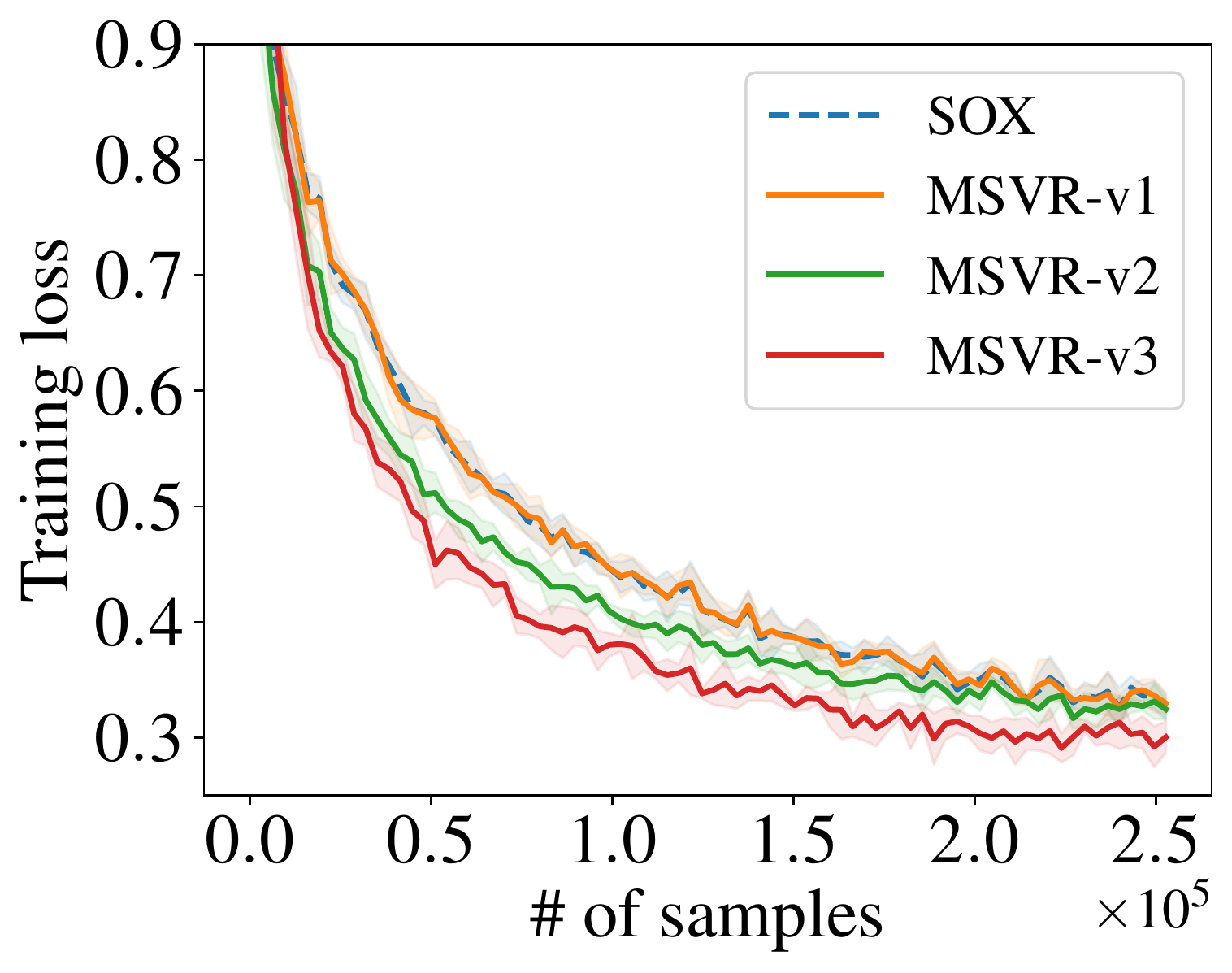}
	}
	\subfigure[SVHN]{
		\includegraphics[width=0.31\textwidth]{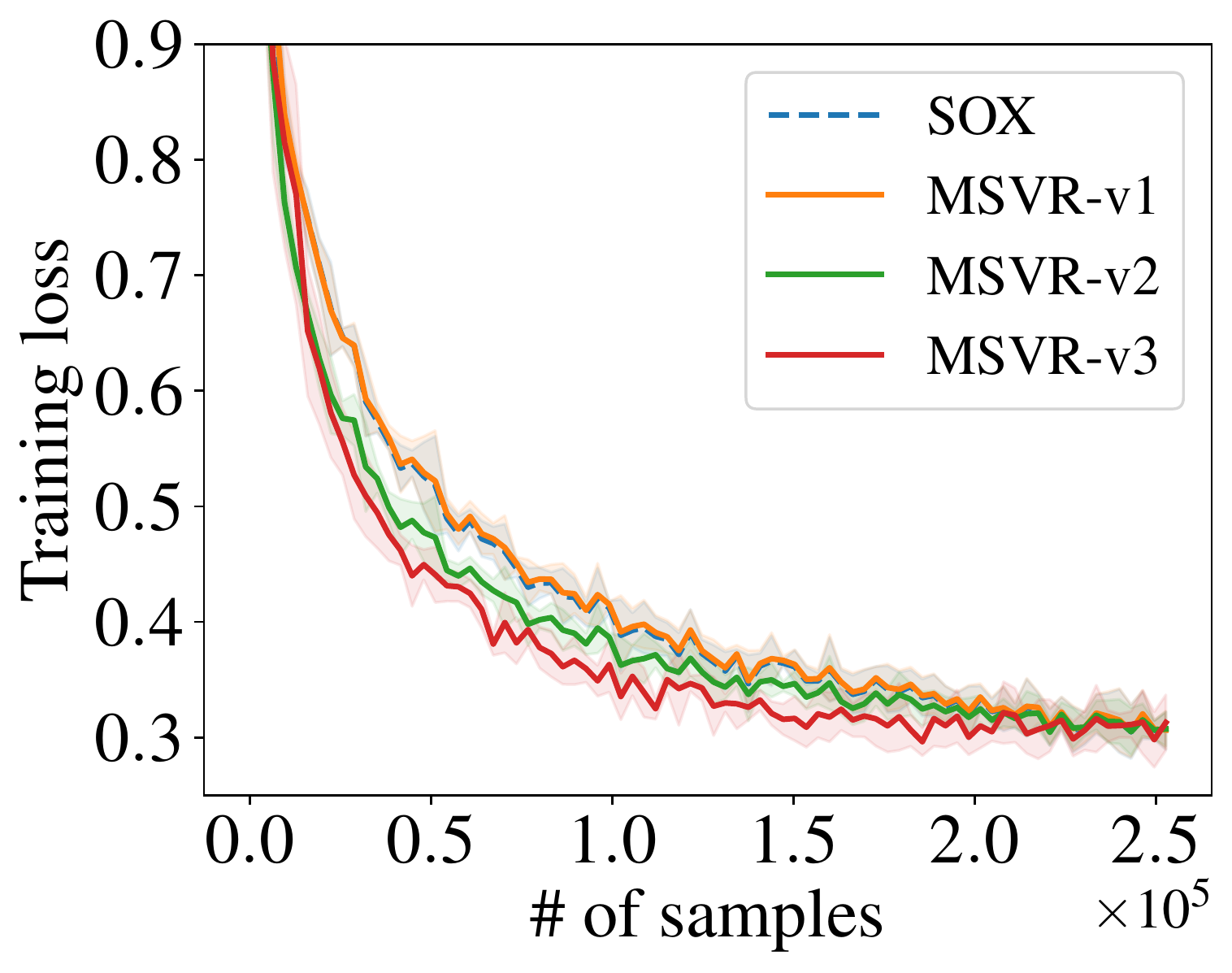}
	}
	\caption{Results for Multi-task AUC Optimization.}
	\label{fig:1}
\end{figure*}
\section{Experiments}\label{sec:exp}
In this section, we conduct experiments on the multi-task deep AUC maximization to evaluate the proposed methods and we will consider more applications in the long version of the paper. For binary classification (label $y = 1$ or $y = -1$),  AUC maximization can be formulated as minimizing the following composite loss~\citep{AUROC2022}:
\begin{equation*}
    \begin{split}
    \min _{\mathbf{w}, a, b} \mathrm{E}_{\mathbf{x} \mid y=1}\left[\left(h_{\mathbf{w}}(\mathbf{x})\right.\right.&\left.-a)^{2}\right]+\mathrm{E}_{\mathbf{x}^{\prime} \mid y^{\prime}=-1}\left[\left(h_{\mathbf{w}}\left(\mathbf{x}^{\prime}\right)-b\right)^{2}\right] +\ell(a(\mathbf{w})-b(\mathbf{w})),
    \end{split}
\end{equation*}
where $a(\mathbf{w})=\mathrm{E}\left[h_{\mathbf{w}}(\mathbf{x}) \mid y=1\right]$,  $b(\mathbf{w})=\mathrm{E}\left[h_{\mathbf{w}}(\mathbf{x}) \mid y=-1\right]$ and $\ell(\cdot)$ is a surrogate function. The above objective recovers the pairwise square loss and the min-max margin loss proposed by~\cite{robustdeepAUC} for deep AUC maximization by setting $\ell(\cdot)$ as the square function or squared hinge function, respectively.  When applied to  multi-task  classification (e.g., multiple classes), we can optimize the averaged AUC losses over all tasks, i.e., $AUC = \frac{1}{m} \sum_{i=1}^m AUC(i)$. The nested structure only comes from the term $\ell(a(\mathbf{w})-b(\mathbf{w}))$, and we can rewrite it as the form of FCCO problem, where
\begin{equation*}
    \begin{split}
        g_{i}(\mathbf{w})=\frac{1}{|\mathcal{D}^i_{+}|} \sum_{\mathbf{x} \in \mathcal{D}^i_{+}} h_{\mathbf{w}}(\mathbf{x})- \frac{1}{|\mathcal{D}^i_{-}|} \sum_{\mathbf{x} \in \mathcal{D}^i_{-}} h_{\mathbf{w}}(\mathbf{x}),\quad \quad f\left(g_{i}(\mathbf{w})\right)=\ell(g_{i}(\mathbf{w})).
    \end{split}
\end{equation*} 
where $\mathcal D^i_{+/-}$ denots the positive/negative datasets of the $i$-th task. 

\noindent{\bf Configurations.} In the experiment, we follow the setup in~\cite{AUROC2022} and set the surrogate function $\ell$ as squared hinge $\ell(x) =\frac{1}{2}(\max\{c+x,0\})^{2}$. We use ResNet18 as backbone network, and train on six datasets: STL10~\citep{Coates2011STL10}, CIFAR10~\citep{Krizhevsky2009Cifar10}, CIFAR100~\citep{Krizhevsky2009Cifar10}, MNIST~\citep{LeCun1998MNIST}, Fashion-MNIST~\citep{Xiao2017Fashion-MNIST}, and SVHN~\citep{Netzer2011SVHN}.  We compare our methods with previous SOTA algorithm SOX~\citep{dependent2022}.  For our methods, parameters $\alpha$ and $\beta$ are searched from $\{0.1, 0.5, 0.9, 1.0\}$. For SOX algorithm, its parameters $\beta$ and $\gamma$ are searched from the same set. $B_1$ is set as 50 for CIFAR100 and 5 for other datasets. Inner batch size $B_2$ is chosen as 128 for all methods. We tune the learning rate from the set $\{1e-4, 1e-3, 2e-3, 5e-3, 1e-2\}$ and pick the best one for each method. The experiments are conducted on single NVIDIA Tesla M40 GPU.

\noindent{\bf Results.} Figure~\ref{fig:1} shows the loss against the number of samples drawn by different methods, and all curves are averaged over 5 runs. We observe that MSVR-V1 is better than SOX on the CIFAR100 dataset, and close to it on other datasets. MSVR-v2 converges faster than SOX and MSVR-v1, and the loss of MSVR-v3 decreases most rapidly, demonstrating a low sample complexity.

\section{Conclusion and Future Work}
In this paper, we develop a novel MSVR estimator for tracking multiple functional mappings by probing only $\O(1)$ blocks. Equipped with this estimator, we design three algorithms for FCCO problems and obtain improved complexities across a spectrum of settings. Experimental results on multi-task deep AUC maximization also verify the effectiveness of our methods. In future work, we will investigate other applications that can be solved by using the proposed estimator.

\begin{ack}
W. Jiang, Y. Wang and L. Zhang were partially supported
by NSFC (62122037, 61921006). G. Li and T. Yang were partially supported by Amazon research award. The authors would like to thank the anonymous reviewers for their helpful comment.
\end{ack}

\bibliography{ref}
\bibliographystyle{abbrvnat}
\section*{Checklist}
\begin{enumerate}
\item For all authors...
\begin{enumerate}
  \item Do the main claims made in the abstract and introduction accurately reflect the paper's contributions and scope?
    \answerYes
  \item Did you describe the limitations of your work?
    \answerNo
  \item Did you discuss any potential negative societal impacts of your work?
    \answerNA This work is mainly theoretical.
  \item Have you read the ethics review guidelines and ensured that your paper conforms to them?
    \answerYes
\end{enumerate}

\item If you are including theoretical results...
\begin{enumerate}
  \item Did you state the full set of assumptions of all theoretical results?
    \answerYes See Section~\ref{assumption}.
        \item Did you include complete proofs of all theoretical results?
    \answerYes The complete proofs are provided in the supplementary.
\end{enumerate}

\item If you ran experiments...
\begin{enumerate}
  \item Did you include the code, data, and instructions needed to reproduce the main experimental results (either in the supplemental material or as a URL)?
    \answerNo 
  \item Did you specify all the training details (e.g., data splits, hyperparameters, how they were chosen)?
    \answerYes See Section~\ref{sec:exp}.
        \item Did you report error bars (e.g., with respect to the random seed after running experiments multiple times)?
    \answerYes See Section~\ref{sec:exp}
        \item Did you include the total amount of compute and the type of resources used (e.g., type of GPUs, internal cluster, or cloud provider)?
    \answerYes
\end{enumerate}

\item If you are using existing assets (e.g., code, data, models) or curating/releasing new assets...
\begin{enumerate}
  \item If your work uses existing assets, did you cite the creators?
    \answerYes
  \item Did you mention the license of the assets?
    \answerYes
  \item Did you include any new assets either in the supplemental material or as a URL?  \answerNA
  \item Did you discuss whether and how consent was obtained from people whose data you're using/curating?
    \answerNA We use the public benchmark datasets.
  \item Did you discuss whether the data you are using/curating contains personally identifiable information or offensive content?
    \answerNA We use the public benchmark datasets.
\end{enumerate}

\item If you used crowdsourcing or conducted research with human subjects...
\begin{enumerate}
  \item Did you include the full text of instructions given to participants and screenshots, if applicable?
    \answerNA
  \item Did you describe any potential participant risks, with links to Institutional Review Board (IRB) approvals, if applicable?
    \answerNA
  \item Did you include the estimated hourly wage paid to participants and the total amount spent on participant compensation?
    \answerNA
\end{enumerate}

\end{enumerate}


\appendix
\newpage
\section{Analysis }
\subsection{Proof of Lemma~\ref{lem:main1}} \label{p:L1}
\begin{proof}
Denote $\bar{\u}_t^i=(1-\beta_{t+1}) \u_{t}^{i}+\beta_{t+1} g_{i}(\w_{t+1};\xi_{t+1}^{i})+\gamma_{t+1} \left( g_{i}(\w_{t+1};\xi_{t+1}^{i}) -g_{i}(\w_{t};\xi_{t+1}^{i})\right)$. 
\begin{equation}\label{overall}
    \begin{split}
        \E \left[ \Norm{\u_{t+1}^i - g_i(\w_{t+1})}^2 \right] =& \E \left[(1-\frac{B_1}{m})\Norm{ \u_{t}^i - g_i(\w_{t+1})}^2 + \frac{B_1}{m} \Norm{\bar{\u}_t^i-g_{i}(\w_{t+1})}^2  \right]
        \end{split}
\end{equation}
For the first term, we can decompose it as:
\begin{equation}\label{first_term}
    \begin{split}
        &(1-\frac{B_1}{m})\E \left[\Norm{ \u_{t}^i - g_i(\w_{t+1})}^2 \right] \\
        =& (1-\frac{B_1}{m})\E \left[\Norm{ \u_{t}^i - g_i(\w_{t})}^2 \right]+ (1-\frac{B_1}{m}) \E \left[\Norm{ g_i(\w_{t}) - g_i(\w_{t+1})}^2 \right]  \\
        & + \underbrace{ 2 (1-\frac{B_1}{m}) \E \left[\left( \u_{t}^i - g_i(\w_{t}) \right) \left( g_i(\w_{t}) - g_i(\w_{t+1}) \right)  \right]} \limits_{\text{\textcircled{1}}}
        \end{split}
\end{equation}
Also, the second term can be written as:
\begin{equation}\label{second_term}
    \begin{split}
        &\frac{B_1}{m}\E \left[  \Norm{\bar{\u}_t^i-g_{i}(\w_{t+1})}^2  \right] \\
        = & \frac{B_1}{m}\E \left[  \left\|(1-\beta_{t+1}) \left(\u_{t}^{i} - g_i(\w_t)\right) + (1-\beta_{t+1}) \left(g_i(\w_t) - g_i(\w_{t+1})\right)  \right.\right. \\
        & \left.\left. + \beta_{t+1} \left(g_{i}(\w_{t+1};\xi_{t+1}^{i}) - g_i(\w_{t+1})\right)+\gamma_{t+1} \left( g_{i}(\w_{t+1};\xi_{t+1}^{i}) -g_{i}(\w_{t};\xi_{t+1}^{i})\right)\right\|^2  \right] \\
        =  & \frac{B_1}{m}\E \left[  \left\|(1-\beta_{t+1}) \left(\u_{t}^{i} - g_i(\w_t)\right)  + (1-\beta_{t+1}) \left(g_i(\w_t) - g_i(\w_{t+1})\right) \right.\right. \\
        & \left.\left.  +\gamma_{t+1} \left( g_{i}(\w_{t+1};\xi_{t+1}^{i}) -g_{i}(\w_{t};\xi_{t+1}^{i})\right)\right\|^2\right] +\frac{B_1 \beta_{t+1}^2}{m}\E\left[\Norm{ g_{i}(\w_{t+1};\xi_{t+1}^{i}) - g_i(\w_{t+1})}^2\right] \\
        & + \frac{2B_1 \beta_{t+1} \gamma_{t+1}}{m}\E\left[ \left(g_{i}(\w_{t+1};\xi_{t+1}^{i}) - g_i(\w_{t+1})\right) \left( g_{i}(\w_{t+1};\xi_{t+1}^{i}) -g_{i}(\w_{t};\xi_{t+1}^{i})\right) \right]\\
        = &  \frac{B_1 }{m} \E \left[(1-\beta_{t+1})^2  \left\| \left(\u_{t}^{i} - g_i(\w_t)\right)\right\|^2 \right]\\
        &+ \frac{B_1 }{m}\E \left[\left\|(1-\beta_{t+1}) \left(g_i(\w_t) - g_i(\w_{t+1})\right) +\gamma_{t+1} \left( g_{i}(\w_{t+1};\xi_{t+1}^{i}) -g_{i}(\w_{t};\xi_{t+1}^{i})\right)\right\|^2\right]  \\
        & +\underbrace{\frac{2B_1 }{m} (1-\beta_{t+1})(1-\beta_{t+1}-\gamma_{t+1}) \E \left[   \left(\u_{t}^{i} - g_i(\w_t)\right) \left( g_{i}(\w_{t}) -g_{i}(\w_{t+1})\right)\right]}\limits_{\text{\textcircled{2}}}  \\
        & +\frac{B_1 \beta_{t+1}^2}{m}\E\left[\Norm{ \left(g_{i}(\w_{t+1};\xi_{t+1}^{i}) - g_i(\w_{t+1})\right)}^2\right] \\
        & + \frac{2B_1 \beta_{t+1} \gamma_{t+1}}{m}\E\left[ \left(g_{i}(\w_{t+1};\xi_{t+1}^{i}) - g_i(\w_{t+1})\right) \left( g_{i}(\w_{t+1};\xi_{t+1}^{i}) -g_{i}(\w_{t};\xi_{t+1}^{i})\right) \right] 
        \end{split} 
\end{equation}
 The second equation is because of $\E \left[g_{i}(\w_{t+1};\xi_{t+1}^{i}) - g_i(\w_{t+1})\right] = 0$. The last equation is due to $\E \left[ g_{i}(\w_{t+1};\xi_{t+1}^{i})- g_{i}(\w_{t};\xi_{t+1}^{i})\right] = g_{i}(\w_{t+1}) - g_{i}(\w_{t})$ and $\u_t^i$ is independent of $\xi_{t+1}^{i}$. We want to ensure $\text{\textcircled{1}} + \text{\textcircled{2}}=0$, which requires that $2 (1-\frac{B_1}{m}) + \frac{2B_1 }{m} (1-\beta_{t+1})(1-\beta_{t+1}-\gamma_{t+1}) =0$. Solve $\gamma_{t+1}$ and we have $\gamma_{t+1} = \frac{m-B_1}{B_1(1-\beta_{t+1})}+(1-\beta_{t+1})$. According to equation~(\ref{first_term}) and equation~(\ref{second_term}), the equation~(\ref{overall}) can now be written as:
\begin{equation}\label{overall_2}
    \begin{split}
        &\E \left[ \Norm{\u_{t+1}^i - g_i(\w_{t+1})}^2 \right] \\
        =& \E \bigg[\left(1-\frac{B_1}{m} + \frac{(1-\beta_{t+1})^2B_1 }{m}\right)\Norm{ \u_{t}^i - g_i(\w_{t})}^2 + (1-\frac{B_1}{m})\Norm{ g_i(\w_{t}) - g_i(\w_{t+1})}^2  \\
        &+\frac{B_1 \beta_{t+1}^2}{m}\Norm{ \left(g_{i}(\w_{t+1};\xi_{t+1}^{i}) - g_i(\w_{t+1})\right)}^2\\
        &+ \frac{B_1 }{m}\left\|(1-\beta_{t+1}) \left(g_i(\w_t) - g_i(\w_{t+1})\right) +\gamma_{t+1} \left( g_{i}(\w_{t+1};\xi_{t+1}^{i}) -g_{i}(\w_{t};\xi_{t+1}^{i})\right)\right\|^2  \\
        &+ \frac{2B_1 \beta_{t+1} \gamma_{t+1}}{m} \left(g_{i}(\w_{t+1};\xi_{t+1}^{i}) - g_i(\w_{t+1})\right) \left( g_{i}(\w_{t+1};\xi_{t+1}^{i}) -g_{i}(\w_{t};\xi_{t+1}^{i})\right) \bigg] \\
        = & \E \bigg[\left(1-\frac{B_1}{m} + \frac{(1-\beta_{t+1})^2B_1 }{m}\right)\Norm{ \u_{t}^i - g_i(\w_{t})}^2 + (1-\frac{B_1}{m}) \Norm{ g_i(\w_{t}) - g_i(\w_{t+1})}^2  \\
        &+\frac{B_1 \beta_{t+1}^2}{m}\Norm{ \left(g_{i}(\w_{t+1};\xi_{t+1}^{i}) - g_i(\w_{t+1})\right)}^2\ + \frac{B_1 (1-\beta_{t+1})^2}{m}\left\| \left(g_i(\w_t) - g_i(\w_{t+1})\right)\right\|^2 \\
        & - \frac{2 B_1 (1-\beta_{t+1}) }{m}\gamma_{t+1}\left\| \left(g_i(\w_t) - g_i(\w_{t+1})\right)\right\|^2\\
        & +\frac{B_1\gamma_{t+1}^2}{m} \left\|\left( g_{i}(\w_{t+1};\xi_{t+1}^{i}) -g_{i}(\w_{t};\xi_{t+1}^{i})\right)\right\|^2  \\
        & + \frac{2B_1 \beta_{t+1} \gamma_{t+1}}{m} \left(g_{i}(\w_{t+1};\xi_{t+1}^{i}) - g_i(\w_{t+1})\right) \left( g_{i}(\w_{t+1};\xi_{t+1}^{i}) -g_{i}(\w_{t};\xi_{t+1}^{i})\right) \bigg]\\
        \leq & \left(1-\frac{\beta_{t+1} B_1}{m} \right)\E \left[\Norm{ \u_{t}^i - g_i(\w_{t})}^2 \right] + \frac{B_1 \beta_{t+1}^2}{m}\E \left[\Norm{ \left(g_{i}(\w_{t+1};\xi_{t+1}^{i}) - g_i(\w_{t+1})\right)}^2\right]\\
        &+\frac{4 m C_g^2}{B_1} \E\left[\left\| \w_{t+1} -\w_{t}\right\|^2\right] \\
        & + \frac{2B_1 \beta_{t+1} \gamma_{t+1}}{m}\E\left[ \left(g_{i}(\w_{t+1};\xi_{t+1}^{i}) - g_i(\w_{t+1})\right) \left( g_{i}(\w_{t+1};\xi_{t+1}^{i}) -g_{i}(\w_{t};\xi_{t+1}^{i})\right) \right] \\
        \leq & \left(1-\frac{\beta_{t+1} B_1}{m} \right)\E \left[\Norm{ \u_{t}^i - g_i(\w_{t})}^2 \right] + \frac{2B_1 \beta_{t+1}^2}{m}\E \left[\Norm{ \left(g_{i}(\w_{t+1};\xi_{t+1}^{i}) - g_i(\w_{t+1})\right)}^2\right]\\
        &+\frac{8 m C_g^2}{B_1} \E\left[\left\| \w_{t+1} -\w_{t}\right\|^2 \right]
        \end{split}
\end{equation}
The second equation is due to 
\begin{equation*}
    \begin{split}
    \E\left[ \left(g_{i}(\w_{t+1};\xi_{t+1}^{i}) -g_{i}(\w_{t};\xi_{t+1}^{i})\right) \left(g_{i}(\w_{t+1}) -g_{i}(\w_{t})\right)\right] = \Norm{ g_{i}(\w_{t+1}) -g_{i}(\w_{t})}^2.
    \end{split}
\end{equation*}
The first inequality is because of $\gamma_{t+1} \leq \frac{2m}{B_1}$ (since $\beta_{t+1} \leq \frac{1}{2}$) and $1-\frac{B_1}{m} + \frac{B_1}{m}(1-\beta_{t+1})^2 \leq 2\frac{B_1}{m}(1-\beta_{t+1})\gamma_{t+1}$.
The last inequality is due to the fact that
\begin{equation*}
\begin{split} 
    &\frac{2 B_1 \beta_{t+1} \gamma_{t+1}}{m}\E\left[\left(g_{i}(\w_{t+1};\xi_{t+1}^{i})-g_{i}(\w_{t+1})\right) \left(g_{i}(\w_{t+1};\xi_{t+1}^{i}) - g_{i}(\w_{t};\xi_{t+1}^{i})\right)\right]  \\
    \leq & \frac{2 B_1 \beta_{t+1} \gamma_{t+1}}{m}\E\left[\left\|g_{i}(\w_{t+1};\xi_{t+1}^{i})-g_{i}(\w_{t+1})\right\| \left\|g_{i}(\w_{t+1};\xi_{t+1}^{i}) - g_{i}(\w_{t};\xi_{t+1}^{i})\right\|\right]   \\
    \leq & \frac{B_1 \beta_{t+1}^2}{m} \E\left[\left\|g_{i}(\w_{t+1};\xi_{t+1}^{i})-g_{i}(\w_{t+1})\right\|^{2}\right]+ \frac{B_1 \gamma_{t+1}^2}{m}\E\left[\left\|g_{i}(\w_{t+1};\xi_{t+1}^{i}) - g_{i}(\w_{t};\xi_{t+1}^{i})\right\|^2\right] \\
    \leq & \frac{B_1 \beta_{t+1}^2}{m} \E\left[\left\|g_{i}(\w_{t+1};\xi_{t+1}^{i})-g_{i}(\w_{t+1})\right\|^{2}\right]+ \frac{4 m C_g^2}{B_1}\E\left[\left\|\w_{t+1} - \w_{t}\right\|^2\right]
\end{split}
\end{equation*}
Finally,  we have:
\begin{equation*}
    \begin{split}
        &\E \left[ \Norm{\u_{t+1} - g(\w_{t+1})}^2 \right]= \sum_{i=1}^{m}\E \left[\Norm{ \u_{t+1}^i - g_i(\w_{t+1})}^2 \right] \\
        \leq & \left(1-\frac{\beta_{t+1} B_1}{m} \right)\E \left[\Norm{ \u_{t} - g(\w_{t})}^2 \right] +\frac{8 m^2 C_g^2}{B_1} \E \left[\left\| \w_{t+1} -\w_{t}\right\|^2\right] + \frac{2 B_1 \sigma^2 \beta_{t+1}^2}{B_2} 
        \end{split}
\end{equation*}
\end{proof}

\subsection{Proof of Lemma~\ref{lem:main3}}
\begin{proof}
We will start with single block and them sum over multiple blocks. To start, we have
\begin{align*}
&\Norm{\u^i_{t+1} - g_i(\w_{t+1})}^2\\
=& \left(1 - \frac{B_1}{m}\right)\Norm{\u^i_t - g_i(\w_{t+1})}^2 \\
+& \frac{B_1}{m}\Norm{(1-\beta_{t+1}) \u^i_t  + \beta_{t+1}  g_i(\w_{t+1};\xi_{t+1}^i) + \gamma_{t+1} \nabla g_i(\w_{t+1};\xi_{t+1}^i)^{\top}(\w_{t+1} - \w_{t}) - g_i(\w_{t+1})}^2\\
=&\left(1 - \frac{B_1}{m}\right)\left(\Norm{\u^i_t - g_i(\w_{t})}^2 + \underbrace{2(\u^i_t - g_i(\w_{t}))^{\top}(g_i(\w_{t}) - g_i(\w_{t+1}))}\limits_{A_0} + \Norm{g_i(\w_{t}) - g_i(\w_{t+1})}^2\right) \\
 +& \frac{B_1}{m}\underbrace{\Norm{(1-\beta_{t+1}) \u^i_t  + \beta_{t+1} g_i(\w_{t+1};\xi_{t+1}^i) + \gamma_{t+1} \nabla g_i(\w_{t+1};\xi_{t+1}^i)^{\top}(\w_{t+1} - \w_{t}) - g_i(\w_{t+1})}^2}\limits_{A}\\
\end{align*}
Next, we will proceed to decompose $A$. 
\begin{align*}
A  =&  \bigg\|(1-\beta_{t+1}) (\u^i_t - g_i(\w_{t}))  + (1-\beta_{t+1})(g_i(\w_{t}) - g_i(\w_{t+1}))  \\ 
&\quad+ \beta_{t+1} ( g_i(\w_{t+1};\xi_{t+1}^i) - g_i(\w_{t+1}))  +  \gamma_{t+1} (g_i(\w_{t+1}) - g_i(\w_{t}))\\ 
&\quad +\gamma_{t+1}( g_i(\w_{t}) - g_i(\w_{t+1}) + \nabla  g_i(\w_{t+1};\xi_{t+1}^i)^{\top}(\w_{t+1} - \w_{t}) \bigg\|^2\\
=&  \bigg\|(1-\beta_{t+1}) (\u^i_t - g_i(\w_{t}))  + (\gamma_{t+1} + \beta_{t+1} -1) (g_i(\w_{t+1}) - g_i(\w_{t}))   \\
&  \quad + \gamma_{t+1}\underbrace{( g_i(\w_{t}) - g_i(\w_{t+1}) + \nabla g_i(\w_{t+1})^{\top}(\w_{t+1} - \w_{t}))}\limits_{\Delta_t} \\
&  \quad + \beta_{t+1} (g_i(\w_{t+1};\xi_{t+1}^i) - g_i(\w_{t+1}))\\
& \quad +\gamma_{t+1}(\nabla  g_i(\w_{t+1};\xi_{t+1}^i)- \nabla  g_i(\w_{t+1}))^{\top}(\w_{t+1} - \w_{t}) \bigg\|^2
\end{align*}
By taking expectation over $A$, we have
\begin{align*}
\E[A]  =& \E\bigg[(1-\beta_{t+1})^2 \|\u^i_t - g_i(\w_{t})\|^2  +  \gamma_{t+1}^2\|\Delta_t\|^2+  (\gamma_{t+1} + \beta_{t+1} -1)^2 \|g_i(\w_{t+1}) - g_i(\w_{t})\|^2\\
& +\underbrace{ 2 (1-\beta_{t+1})(\gamma_{t+1}+\beta_{t+1}-1)  (\u^i_t - g_i(\w_{t})^{\top}(g_i(\w_{t+1}) - g_i(\w_{t}))}\limits_{A_1} \\
& + 2\gamma_{t+1} (\gamma_{t+1}+\beta_{t+1}-1)\Delta_t^{\top}(g_i(\w_{t+1}) - g_i(\w_{t}))\\
&+2(1-\beta_{t+1})\gamma_{t+1}(\u^i_t - g_i(\w_{t}))^{\top}\Delta_t  + \frac{2\beta_{t+1}^2\sigma^2}{B_2} + \frac{2\gamma_{t+1}^2 \sigma^2}{B_2}\|\w_{t+1} - \w_{t}\|^2\bigg]
\end{align*}
Since $\gamma_{t+1} + \beta_{t+1} -1 = \frac{m-B_1}{B_1(1-\beta_{t+1})}$, the terms involving $A_0, A_1$ will cancel. As a result, we have
\begin{align*}
&\E\left[\Norm{\u^i_{t+1} - g_i(\w_{t+1})}^2\right] \\
=&\E \bigg[\left(1 - \frac{B_1}{m}\right)\Norm{\u^i_t - g_i(\w_{t})}^2  + \Norm{g_i(\w_{t}) - g_i(\w_{t+1})}^2 \\
& + \frac{B_1}{m}(1-\beta_{t+1})^2(1+\beta_{t+1}) \|\u^i_t - g_i(\w_{t})\|^2 +(2+\frac{1}{\beta_{t+1}}) \gamma_{t+1}^2\|\Delta_t\|^2  \\
&+  2(\gamma_{t+1} + \beta_{t+1} -1)^2 \|g_i(\w_{t+1}) - g_i(\w_{t})\|^2 +\frac{2\beta_{t+1}^2\sigma^2}{B_2} + \frac{2 \gamma_{t+1}^2 \sigma^2}{B_2}\|\w_{t+1} - \w_{t}\|^2 \bigg]\\
\end{align*}
Since $\|\Delta_t\|\leq \min(\frac{L_g}{2}\|\w_{t+1} - \w_{t}\|^2, 2C_g \|\w_{t+1} - \w_{t}\|)$ and $\|g_i(\w_{t+1}) - g_i(\w_{t})\|\leq C_g \|\w_{t+1} - \w_{t}\|^2$, we have
\begin{align*}
&\E\left[\Norm{\u^i_{t+1} - g_i(\w_{t+1})}^2\right] \\
=&\E\bigg[\left(1 - \frac{B_1}{m}\right)(\u^i_t - g_i(\w_{t}))^2  + C_g^2\|\w_{t+1}  - \w_t\|^2 \\
& + \frac{B_1}{m}(1-\beta_{t+1}) \|\u^i_t - g_i(\w_{t})\|^2 +(2+\frac{1}{\beta_{t+1}}) \gamma_{t+1}^2\frac{L_g^2\|\w_{t+1} - \w_{t}\|^2}{4}\|\w_{t+1} - \w_{t}\|^2 \\
& +  2(\gamma_{t+1} + \beta_{t+1} -1)^2 C_g^2\|\w_{t+1} - \w_{t}\|^2 +\frac{2\beta_{t+1}^2\sigma^2}{B_2} + \frac{2\gamma_{t+1}^2 \sigma^2}{B_2}\|\w_{t+1} - \w_{t}\|^2 \bigg]\\
\end{align*}
Note that we have $\|\w_{t+1} - \w_t\|^2\leq \eta_{t+1}^2 C_F^2$, $\gamma_{t+1}\leq \frac{2m}{B_1}$ and $\eta_{t+1} \leq \sqrt{\beta_{t+1}}$. Therefore 
\begin{align*}
&\E\left[\Norm{\u^i_{t+1} - g_i(\w_{t+1})}^2\right] \\
&\leq \left(1 - \frac{B_1\beta_{t+1}}{m}\right)\E\left[\Norm{\u^i_t - g_i(\w_{t})}^2 \right] + \frac{2B_1\beta_{t+1}^2\sigma^2}{m B_2}  \\
&\quad + \E\left[\frac{4m L_g^2}{\beta_{t+1} B_1}\|\w_{t+1} - \w_{t}\|^4+  \frac{9m C_g^2}{B_1}\|\w_{t+1} - \w_{t}\|^2 +  \frac{8m \sigma^2}{B_1B_2}\|\w_{t+1} - \w_{t}\|^2 \right] \\
&\leq \left(1 - \frac{B_1\beta_{t+1}}{m}\right)\E\left[\Norm{\u^i_t - g_i(\w_{t})}^2\right]  + \frac{2B_1\beta_{t+1}^2\sigma^2}{m B_2}   \\
&\quad + \left(4L_g^2C_F^2 + 9 C_g^2 +\frac{8 \sigma^2}{B_2}\right)\frac{m}{B_1}\E\left[\|\w_{t+1} - \w_{t}\|^2\right]
\end{align*}
Finally, we have:
\begin{align*}
\E[\Norm{\u_{t+1} - g(\w_{t+1})}^2] \leq & \left(1 - \frac{B_1\beta_{t+1}}{m}\right)\E\left[\Norm{\u_t - g(\w_{t})}^2\right]  + \frac{2B_1\beta_{t+1}^2\sigma^2}{B_2}  \\
&+ \left(4L_g^2C_F^2 + 9 C_g^2 +\frac{8 \sigma^2}{B_2}\right)\frac{m^2}{B_1}\E\left[\|\w_{t+1} - \w_{t}\|^2\right]
\end{align*}
\end{proof}
\subsection{Proof of Lemma~\ref{lem:main2}}
\begin{proof}
Note that we have:
\begin{equation}
\begin{split} \label{eqn:tau1}
    &\E\left[\left\| \widehat g_{i}(\w_{t+1} ; \xi_{t+1}^i) - g_{i}(\w_{t+1})\right\|^{2}\right]   \\
    =& \E\left[\left\| g_{i}(\w_{t+1}; \xi_{t+1}^i) -   g_{i}(\w_{\tau}; \xi_{t+1}^i) + g_{i}(\w_{\tau}) - g_{i}(\w_{t+1})\right\|^{2}\right]  \\
    =& \E\left[\left\| g_{i}(\w_{t+1}; \xi_{t+1}^i) -   g_{i}(\w_{\tau}; \xi_{t+1}^i) \right\|^{2}\right]+\left\| g_{i}(\w_{\tau}) - g_{i}(\w_{t+1})\right\|^{2}  \\
    &\quad\quad\quad +2\E\left[ g_{i}(\w_{t+1}; \xi_{t+1}^i) -   g_{i}(\w_{\tau}; \xi_{t+1}^i)\right] \left[ g_{i}(\w_{\tau}) - g_{i}(\w_{t+1})\right]  \\
    =& \E\left[\left\| g_{i}(\w_{t+1}; \xi_{t+1}^i) -   g_{i}(\w_{\tau}; \xi_{t+1}^i) \right\|^{2}\right]-\left\| g_{i}(\w_{\tau}) - g_{i}(\w_{t+1})\right\|^{2}  \\
    \leq& C_g^2\left\|\w_{t+1}-\w_{\tau}\right\|^{2}
\end{split}
\end{equation}
Since $\tau$ is the closest small index such that $\tau$ mod $I$ = 0, we have:
\begin{equation}
\begin{split}\label{eqn:tau2}
    \sum_{t=1}^{T} \left\|\w_{t+1}-\w_{\tau}\right\|^{2} &\leq \sum_{t=1}^T\left\|\sum_{k=\tau}^t (\w_{k+1}-\w_k)\right\|^2   \\
    &\leq  \sum_{t=1}^T \sum_{k=\tau}^t I \left\| \w_{k+1}-\w_k\right\|^2   
    \leq  I^2 \sum_{t=1}^T   \left\| \w_{t+1}-\w_t\right\|^2
\end{split}
\end{equation}
We can then apply the same analysis as in Section~\ref{p:L1}, until equation~(\ref{overall_2}):
\begin{equation*}
    \begin{split}
        &\E \left[ \Norm{\u_{t+1}^i - g_i(\w_{t+1})}^2 \right] \\
        \leq & \left(1-\frac{\beta B_1}{m} \right)\E \left[\Norm{ \u_{t}^i - g_i(\w_{t})}^2 \right] + \frac{2 B_1 \beta^2}{m}\E \left[\Norm{ \left(\widehat g_{i}(\w_{t+1};\xi_{t+1}^{i}) - g_i(\w_{t+1})\right)}^2\right]\\
        &+\frac{8 m C_g^2}{B_1} \E \left[\left\| \w_{t+1} -\w_{t}\right\|^2\right]\\
        \leq & \left(1-\frac{\beta B_1}{m} \right)\E \left[\Norm{ \u_{t}^i - g_i(\w_{t})}^2 \right] + \left(\frac{2 B_1 C_g^2\beta^2 I^2 }{m}+\frac{8 m C_g^2}{B_1} \right)\E \left[\left\| \w_{t+1} -\w_{t}\right\|^2\right] \\
        \leq & \left(1-\frac{\beta B_1}{m} \right)\E \left[\Norm{ \u_{t}^i - g_i(\w_{t})}^2 \right] +\frac{10 m C_g^2}{B_1} \E \left[\left\| \w_{t+1} -\w_{t}\right\|^2\right] 
        \end{split}
\end{equation*}
The last inequality is due to $\beta I \leq \frac{m}{B_1}$. Finally, we have:
\begin{equation*}
\begin{split}
\E\left[\left\|\u_{t+1}-g\left(\w_{t+1}\right)\right\|^{2}\right]  = & \sum_{i=1}^{m}\E \left[\Norm{ \u_{t+1}^i - g_i(\w_{t+1})}^2 \right]\\
\leq &(1-\frac{B_1\beta}{m})\E\left[\left\|\u_{t}-g\left(\w_{t}\right)\right\|^{2}\right]+\frac{10 m^{2} C_g^2}{B_1}\left\|\w_{t+1}-\w_{t}\right\|^{2}
\end{split}
\end{equation*}
\end{proof}

\subsection{Proof of Theorem~\ref{thm:1}}
We denote constant $C = \max\left\{1,C_g^2,L_F^2, C_F^2,\sigma^2, L_f^2C_g^2,L_g^2C_f^2,L_f^2C_g^4, L_f^2C_g^2\sigma^2, C_f^2(\sigma^2+C_g^2)  \right\}$.
\begin{lemma} \label{lem:1}(Lemma 2 in \cite{pmlr-v139-li21a}) Suppose function F is ${L_F}$-smooth and consider the update $\w_{t+1}:=\w_{t}-\eta_t \z_{t}$. With $\eta_t L\leq \frac{1}{2}$, we have: 
\begin{equation*}
\begin{split}
 F(\w_{t+1}) \leq F(\w_t) - \frac{\eta_t}{2} \|\nabla F(\w_t)\|^2 + \frac{\eta_t}{2} \|\z_{t} - \nabla F(\w_t)\|^2 - \frac{\eta_t}{4} \Norm{\z_t}^2
\end{split}
\end{equation*}
\end{lemma}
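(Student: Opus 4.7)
The plan is to apply the textbook descent lemma for $L_F$-smooth functions, together with the polarization identity to rewrite the cross term, and then absorb the quadratic term using the step-size condition $\eta_t L_F \le 1/2$.

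First I would invoke $L_F$-smoothness, which gives
\begin{equation*}
F(\w_{t+1}) \le F(\w_t) + \langle \nabla F(\w_t), \w_{t+1}-\w_t\rangle + \frac{L_F}{2}\|\w_{t+1}-\w_t\|^2.
\end{equation*}
Substituting the update $\w_{t+1}-\w_t = -\eta_t \z_t$ yields
\begin{equation*}
F(\w_{t+1}) \le F(\w_t) - \eta_t \langle \nabla F(\w_t), \z_t\rangle + \frac{L_F \eta_t^2}{2}\|\z_t\|^2.
\end{equation*}

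Next I would apply the polarization identity
\begin{equation*}
-\langle \nabla F(\w_t), \z_t\rangle = \tfrac{1}{2}\|\z_t-\nabla F(\w_t)\|^2 - \tfrac{1}{2}\|\nabla F(\w_t)\|^2 - \tfrac{1}{2}\|\z_t\|^2,
\end{equation*}
which turns the cross term into exactly the three squared norms appearing in the statement. Plugging this in and collecting the $\|\z_t\|^2$ coefficients gives
\begin{equation*}
F(\w_{t+1}) \le F(\w_t) - \tfrac{\eta_t}{2}\|\nabla F(\w_t)\|^2 + \tfrac{\eta_t}{2}\|\z_t-\nabla F(\w_t)\|^2 - \tfrac{\eta_t}{2}(1-L_F\eta_t)\|\z_t\|^2.
\end{equation*}

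Finally I would invoke the assumption $\eta_t L_F \le 1/2$, which implies $1-L_F\eta_t \ge 1/2$, so that $-\tfrac{\eta_t}{2}(1-L_F\eta_t)\|\z_t\|^2 \le -\tfrac{\eta_t}{4}\|\z_t\|^2$, yielding the claimed inequality. The proof is entirely routine algebraic manipulation; there is no substantive obstacle, only the bookkeeping of applying the polarization identity correctly and verifying the coefficient on $\|\z_t\|^2$ after using the step-size bound.
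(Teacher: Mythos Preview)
Your proof is correct and is exactly the standard argument for this descent inequality. The paper does not provide its own proof of this lemma; it simply cites it as Lemma~2 of \cite{pmlr-v139-li21a}, and your derivation via the smoothness inequality, polarization identity, and step-size bound is precisely the routine computation one expects there.
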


\begin{lemma}\label{lem:2}Denote $\|\u_{t}  - g(\w_t)\|^2 = \sum_{i=1}^m\|\u^{i}_{t}  - g_{i}(\w_t)\|^2$ and $\|\u_{t}  - \u_{t-1}\|^2 = \sum_{i=1}^m\|\u^{i}_{t}  - \u^{i}_{t-1}\|^2$.
\begin{equation*}
\begin{split}
&\mathbb{E}\left[ \|\z_{t+1} - \nabla F(\w_{t+1})\|^2 \right] \leq (1-\alpha_{t+1}) \mathbb{E}\left[ \|\z_{t} - \nabla F(\w_{t})\|^2 \right]+\frac{3 C \eta_t^{2} \mathbb{E}\left[\left\|\mathbf{z}_{t}\right\|^{2}\right]}{\alpha_{t+1}} \\
&\quad\quad +\frac{4 L_{f}^{2} C_{g}^{2}}{m} \mathbb{E}\left[\left\|\u_{t+1}-\u_{t}\right\|^{2}\right]+\frac{2 \alpha_{t+1}^{2} C_{f}^{2}\left(\sigma^{2}+C_{g}^{2}\right)}{\min \left\{B_{1}, B_{2}\right\}}+ \frac{5\alpha_{t+1} L_{f}^{2} C_{g}^{2}}{m} \mathbb{E}\left[\left\|\u_{t} - g(\w_{t})\right\|^{2}\right] 
\end{split}
\end{equation*}\end{lemma}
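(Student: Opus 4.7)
The plan is to derive the recurrence in the standard moving-average/momentum gradient estimator style, carefully tracking the bias and variance induced by the two independent sources of randomness: (i) sampling the block set $\mathcal{B}_1^{t+1}$ and (ii) drawing the inner stochastic samples $\xi_{t+1}^i$. Throughout, let $\mathbf{G}_{t+1}:=\tfrac{1}{B_1}\sum_{i\in\mathcal{B}_1^{t+1}}\nabla f_i(\u_t^i)\nabla g_i(\w_{t+1};\xi_{t+1}^i)$ denote the minibatch gradient used inside the update, and $H(\u,\w):=\tfrac{1}{m}\sum_{i=1}^m\nabla f_i(\u^i)\nabla g_i(\w)$ its conditional mean, i.e.\ $\E[\mathbf{G}_{t+1}\mid\mathcal{F}_t]=H(\u_t,\w_{t+1})$ where $\mathcal{F}_t$ is the history through iteration $t$. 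Since $\|\nabla F(\w_{t+1})\|\leq C_F$ and $\Pi_{C_F}$ is non-expansive, the projection can be dropped in the upper bound on $\|\z_{t+1}-\nabla F(\w_{t+1})\|$.

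The first step is to write
\begin{equation*}
\z_{t+1}-\nabla F(\w_{t+1}) = (1-\alpha_{t+1})\bigl[\z_t-\nabla F(\w_t)\bigr] + Y_1 + Y_2,
\end{equation*}
with $Y_1:=(1-\alpha_{t+1})\bigl[\nabla F(\w_t)-\nabla F(\w_{t+1})\bigr]+\alpha_{t+1}\bigl[H(\u_t,\w_{t+1})-\nabla F(\w_{t+1})\bigr]$ the bias piece and $Y_2:=\alpha_{t+1}\bigl[\mathbf{G}_{t+1}-H(\u_t,\w_{t+1})\bigr]$ mean-zero given $\mathcal{F}_t$. Expanding the squared norm, taking conditional expectation (which zeros out the $Y_2$-cross term), and applying Young's inequality with parameter $\alpha_{t+1}/(1-\alpha_{t+1})$ to the $Y_1$-cross term collapses the coefficient in front of $\|\z_t-\nabla F(\w_t)\|^2$ to $1-\alpha_{t+1}$ and yields the master inequality
\begin{equation*}
\E\bigl[\|\z_{t+1}-\nabla F(\w_{t+1})\|^2\mid\mathcal{F}_t\bigr] \leq (1-\alpha_{t+1})\|\z_t-\nabla F(\w_t)\|^2 + \tfrac{\|Y_1\|^2}{\alpha_{t+1}} + \E\bigl[\|Y_2\|^2\mid\mathcal{F}_t\bigr].
\end{equation*}

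Next, bound the two remaining pieces. For $\E[\|Y_2\|^2\mid\mathcal{F}_t]$, condition first on $\mathcal{B}_1^{t+1}$: Assumption~2 controls the $\xi$-variance by $\alpha_{t+1}^2 C_f^2\sigma^2/(B_1 B_2)$, and the residual variance from the uniform-without-replacement sampling of $B_1$ out of $m$ summands (each with norm at most $C_f C_g$) is bounded by $\alpha_{t+1}^2 C_f^2 C_g^2/B_1$; combining them via $\tfrac{1}{B_1 B_2}+\tfrac{1}{B_1}\leq\tfrac{2}{\min\{B_1,B_2\}}$ produces the fourth term of the RHS. For $\|Y_1\|^2/\alpha_{t+1}$, split via $\|a+b\|^2\leq 2\|a\|^2+2\|b\|^2$: the drift part, combined with $L_F$-smoothness and $\w_{t+1}-\w_t=-\eta_t\z_t$, contributes $2L_F^2\eta_t^2\|\z_t\|^2/\alpha_{t+1}$, which is absorbed into the $\tfrac{3C\eta_t^2\|\z_t\|^2}{\alpha_{t+1}}$ umbrella term; the bias part, using $L_f$-Lipschitz $\nabla f_i$ and $\|\nabla g_i(\w_{t+1})\|\leq C_g$, is at most $\tfrac{2\alpha_{t+1}L_f^2 C_g^2}{m}\sum_i\|\u_t^i-g_i(\w_{t+1})\|^2$, which is decomposed through the intermediate points $\u_{t+1}^i$ and $g_i(\w_t)$. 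Because the MSVR update enforces $\u_{t+1}^i=\u_t^i$ for $i\notin\mathcal{B}_1^{t+1}$, the $\u$-differences collapse to $\sum_i\|\u_{t+1}^i-\u_t^i\|^2=\|\u_{t+1}-\u_t\|^2$, and a Young's-weighted split produces the $\tfrac{4L_f^2 C_g^2}{m}\|\u_{t+1}-\u_t\|^2$ and $\tfrac{5\alpha_{t+1}L_f^2 C_g^2}{m}\|\u_t-g(\w_t)\|^2$ terms claimed, with the $C_g^2\|\w_{t+1}-\w_t\|^2$ residual folded into the $\eta_t^2\|\z_t\|^2/\alpha_{t+1}$ piece.

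The main obstacle is the precise coefficient bookkeeping in the final decomposition of $\|\u_t^i-g_i(\w_{t+1})\|^2$: the $\|\u_{t+1}-\u_t\|^2$ contribution must appear \emph{without} an $\alpha_{t+1}$ prefactor (in contrast to the $\u_t$-vs-$g(\w_t)$ term, which naturally carries $\alpha_{t+1}$), which forces a non-uniform choice of Young's parameter across the two pieces of the split and careful use of the block-sampling structure to confine $\u$-differences to the sampled indices. Taking total expectation then produces the lemma.
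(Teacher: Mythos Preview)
Your derivation is correct and coincides with the paper's route, which simply invokes Lemma~1 of \cite{dependent2022} (the standard moving-average gradient-error recursion you reconstruct from scratch) and then tightens the constraint on $\alpha_{t+1}$ to replace $\|\u_{t+1}-g(\w_{t+1})\|^2$ by $\|\u_t-g(\w_t)\|^2$. One minor point: the ``main obstacle'' you flag is not a real obstacle---once you bound the bias piece by $\tfrac{2\alpha_{t+1}L_f^2C_g^2}{m}\sum_i\|\u_t^i-g_i(\w_{t+1})\|^2$ and split $\u_t^i-g_i(\w_{t+1})$ only through $g_i(\w_t)$ (not through $\u_{t+1}^i$), you already obtain the $\|\u_t-g(\w_t)\|^2$ term with the correct $\alpha_{t+1}$ prefactor, and the $\tfrac{4L_f^2C_g^2}{m}\|\u_{t+1}-\u_t\|^2$ term in the statement can simply be added as nonnegative slack rather than manufactured via a weighted Young's inequality.
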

\begin{proof}
According to Lemma 1 in \cite{dependent2022}, if $\alpha \leq \frac{2}{7}$, we have:
\begin{equation*}
\begin{split}
&\mathbb{E}\left[ \|\z_{t+1} - \nabla F(\w_{t+1})\|^2 \right] \leq(1-\alpha_{t+1}) \mathbb{E}\left[ \|\z_{t} - \nabla F(\w_{t})\|^2 \right]+\frac{2 L_{F}^{2} \eta_t^{2} \mathbb{E}\left[\left\|\mathbf{z}_{t}\right\|^{2}\right]}{\alpha_{t+1}} \\
&\quad +\frac{3 L_{f}^{2} C_{g}^{2}}{m} \mathbb{E}\left[\left\|\u_{t+1}-\u_{t}\right\|^{2}\right]+\frac{2 \alpha_{t+1}^{2} C_{f}^{2}\left(\sigma^{2}+C_{g}^{2}\right)}{\min \left\{B_{1}, B_{2}\right\}}+ \frac{5\alpha_{t+1} L_{f}^{2} C_{g}^{2}}{m} \mathbb{E}\left[\left\|\u_{t+1} - g(\w_{t+1})\right\|^{2}\right].
\end{split}
\end{equation*}
By setting $\alpha \leq \frac{1}{15}$, we have the above lemma.
\end{proof}

\begin{lemma}\label{lem:3} If $\beta_{t+1} \leq \frac{1}{2}$, we have:
\begin{equation*}
\begin{split}
  \E\left[\ \|\u_{t+1}-\u_{t}\|^2\right]   &\leq \frac{2B_1\beta_{t+1}^{2}  \sigma^{2} }{B_2}   + \frac{4B_1 \beta_{t+1}^{2} }{m} \E\left[ \left\|\u_{t}- g(\w_{t})\right\|^{2}\right]+\frac{9 m^2 C_g^2}{B_1}\E\left[\|\w_{t+1}-\w_{t}\|^{2}\right]
\end{split}
\end{equation*}
\end{lemma}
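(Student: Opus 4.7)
\textbf{Proof proposal for Lemma~\ref{lem:3}.} The plan is to decompose $\|\u_{t+1}-\u_t\|^2$ blockwise and to treat the randomness of the block selection $\mathcal B_1^{t+1}$ separately from the randomness of $\xi_{t+1}^i$. Since $\u_{t+1}^i=\u_t^i$ whenever $i\notin\mathcal B_1^{t+1}$ and each index is selected with probability $B_1/m$, taking expectation yields $\E[\|\u_{t+1}-\u_t\|^2] = \frac{B_1}{m}\sum_{i=1}^m\E[\|\bar\u_{t+1}^i - \u_t^i\|^2]$, where $\bar\u_{t+1}^i$ denotes the ``updated'' value in the MSVR rule. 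This reduces the lemma to a per-block bound that is multiplied by $B_1/m$ at the end.

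For a fixed $i$ I would rewrite $\bar\u_{t+1}^i - \u_t^i = \beta_{t+1}\bigl(g_i(\w_{t+1};\xi_{t+1}^i)-\u_t^i\bigr) + \gamma_{t+1}\bigl(g_i(\w_{t+1};\xi_{t+1}^i)-g_i(\w_t;\xi_{t+1}^i)\bigr)$ and apply $\|a+b\|^2\le 2\|a\|^2+2\|b\|^2$ to split into two pieces. The $\gamma_{t+1}$ piece is controlled by the average-Lipschitz assumption on $g_i(\cdot;\xi)$, giving $C_g^2\|\w_{t+1}-\w_t\|^2$ in expectation. For the $\beta_{t+1}$ piece I would add and subtract $g_i(\w_{t+1})$, use the bounded-variance assumption to dispose of the stochastic term at cost $\sigma^2/B_2$, and further split $\|g_i(\w_{t+1})-\u_t^i\|^2\le 2C_g^2\|\w_{t+1}-\w_t\|^2 + 2\|g_i(\w_t)-\u_t^i\|^2$ using Lipschitzness of $g_i$. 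Summing over $i\in[m]$, the stochastic-variance contributions reassemble into $\frac{2B_1\beta_{t+1}^2\sigma^2}{B_2}$ and the estimation-error contributions into $\frac{4B_1\beta_{t+1}^2}{m}\E[\|\u_t-g(\w_t)\|^2]$, exactly matching the first two terms of the claim.

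The last step is to collapse the coefficient of $\|\w_{t+1}-\w_t\|^2$, namely $(4B_1\beta_{t+1}^2 + 2B_1\gamma_{t+1}^2)C_g^2$, into the advertised $\frac{9m^2 C_g^2}{B_1}$. Using $\beta_{t+1}\le 1/2$ together with the definition $\gamma_{t+1}=\frac{m-B_1}{B_1(1-\beta_{t+1})}+(1-\beta_{t+1})$ yields $\gamma_{t+1}\le 2m/B_1$, hence $2B_1\gamma_{t+1}^2\le 8m^2/B_1$, while $4B_1\beta_{t+1}^2\le B_1\le m^2/B_1$ (using $B_1\le m$); the two contributions sum to at most $9m^2/B_1$. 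I do not foresee a serious obstacle beyond careful bookkeeping; the only delicate choice is the decomposition of $\|g_i(\w_{t+1})-\u_t^i\|^2$ into a drift part and an estimation-error part, so that the residual estimation-error term reappears on the right-hand side with the advertised $\frac{4B_1\beta_{t+1}^2}{m}$ coefficient rather than contaminating the step-size coefficient.
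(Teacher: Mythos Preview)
Your proposal is correct and follows essentially the same route as the paper: blockwise reduction via the $B_1/m$ sampling probability, the split $\|a+b\|^2\le 2\|a\|^2+2\|b\|^2$ on the $\beta$- and $\gamma$-pieces, the average-Lipschitz bound on the $\gamma$-piece, the zero-mean decomposition $g_i(\w_{t+1};\xi)-\u_t^i=(g_i(\w_{t+1};\xi)-g_i(\w_{t+1}))+(g_i(\w_{t+1})-\u_t^i)$ for the $\beta$-piece, and then the drift/estimation-error split of $\|g_i(\w_{t+1})-\u_t^i\|^2$. The only point worth making explicit in your write-up is that the ``at cost $\sigma^2/B_2$'' step relies on $\E[g_i(\w_{t+1};\xi_{t+1}^i)-g_i(\w_{t+1})\mid\mathcal F_t]=0$ so the cross term vanishes and no extra factor of~2 appears there; the paper states this explicitly, and it is what makes your constants $\frac{2B_1\beta_{t+1}^2\sigma^2}{B_2}$ and $\frac{4B_1\beta_{t+1}^2}{m}$ come out exactly right.
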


\begin{proof} 
Note that with $\beta_{t+1} \leq \frac{1}{2}$, we have $\gamma_{t+1} \leq \frac{2m}{B_1}$
\begin{equation*}
\begin{split}
&\E\left[\|\u_{t+1}-\u_t\|^2 \right] \\
=& \frac{B_1}{m} \sum_{i=1}^{m} \E\left[\left\|\beta_{t+1}\left(g_{i}(\w_{t+1};\xi_{t+1}^i)-\u_{t}^{i}\right)+\gamma_{t+1}\left(g_{i}(\w_{t+1};\xi_{t+1}^{i})-g_{i}(\w_{t};\xi_{t+1}^{i})\right)\right\|^{2}\right] \\
\leq& \frac{B_1}{m} \sum_{i=1}^{m} \E\left[2\beta_{t+1}^{2} \left\|g_{i}(\w_{t+1};\xi_{t+1}^{i})-\u_{t}^{i}\right\|^{2}+2\gamma_{t+1}^{2} \|g_{i}(\w_{t+1};\xi_{t+1}^{i})-g_{i}(\w_{t};\xi_{t+1}^{i})\|^{2}\right] \\
\leq& \E\left[\frac{2B_1\beta_{t+1}^{2}}{m} \sum_{i=1}^{m}  \left\|g_{i}(\w_{t+1};\xi_{t+1}^{i})  -\u_{t}^{i}\right\|^{2}+{2B_1\gamma_{t+1}^{2} C_g^2}\|\w_{t+1}-\w_{t}\|^{2}\right] \\
\leq& \frac{2B_1\beta_{t+1}^{2}}{m} \sum_{i=1}^{m} \left(\E\left[ \left\|g_{i}(\w_{t+1};\xi_{t+1}^{i}) - g_{i}(\w_{t+1})\right\|^{2}\right]+ \E\left[ \left\| g_{i}(\w_{t+1}) -\u_{t}^{i}\right\|^{2}\right]\right)\\
&\quad\quad\quad+{2B_1\gamma_{t+1}^{2} C_g^2} \E\left[ \|\w_{t+1}-\w_{t}\|^{2}\right] \\
\leq& \frac{2 B_1 \beta_{t+1}^{2} \sigma^{2}}{B_2} +\frac{2 B_1 \beta_{t+1}^{2}}{m} \sum_{i=1}^{m}\E\left[\left\|g_{i}(\w_{t+1})-\u^{i}_t\right\|^{2}\right]+\frac{8m^2 C_g^2}{B_1}\E\left[\|\w_{t+1}-\w_{t}\|^{2}\right] \\
\leq& \frac{2 B_1 \beta_{t+1}^{2} \sigma^{2}}{B_2} +\frac{4 B_1 \beta_{t+1}^{2}}{m} \sum_{i=1}^{m}\left\| g_{i}(\w_{t+1}) - g_{i}(\w_{t})\right\|^{2}+\frac{4 B_1 \beta_{t+1}^{2}}{m} \sum_{i=1}^{m}\E\left[\left\| g_{i}(\w_{t})-\u^{i}_t\right\|^{2}\right]\\
&\quad\quad\quad +\frac{8m^2 C_g^2}{B_1}\E\left[\|\w_{t+1}-\w_{t}\|^{2}\right] \\
\leq& \frac{2 B_1 \beta_{t+1}^{2} \sigma^{2}}{B_2} +\frac{4 B_1 \beta_{t+1}^{2}}{m} \sum_{i=1}^{m}\E\left[\left\| g_{i}(\w_{t}) - \u^{i}_t\right\|^{2}\right]+\frac{9m^2 C_g^2}{B_1}\E\left[\|\w_{t+1}-\w_{t}\|^{2}\right].
\end{split}
\end{equation*}
The third inequality is due to $\E \left[g_{i}(\w_{t+1};\xi_{t+1}^{i}) - g_{i}(\w_{t+1})\right] = 0$. 
\end{proof}

\paragraph{The rest proof of Theorem~\ref{thm:1}}  Let  $\Gamma_{t}=F(\w_t)+\frac{B_1}{c_0 \eta_{t-1} m^2}\Norm{\u_{t} - g(\w_{t})}^2+  \frac{1}{c_0}\|\z_{t} - \nabla F(\w_{t})\|^2 $. By setting $\eta_t =\frac{2 \alpha_{t+1}}{c_0}$, $C_0 = 72 C$, $\eta_t \leq \frac{B_1}{4 m}$ we have:
\begin{equation*}
    \begin{split}
        &\E\left[\Gamma_{t+1}-\Gamma_{t} \right]\\
	    = &\E\bigg[F(\w_{t+1}) - F(\w_t)+\frac{B_1}{c_0 \eta_{t} m^2}\Norm{\u_{t+1} - g(\w_{t+1})}^2 + \frac{1}{c_0}\|\z_{t+1} - \nabla F(\w_{t+1})\|^2 \\
	    &  - \frac{B_1}{c_0 \eta_{t-1} m^2 }\Norm{\u_{t} - g(\w_{t})}^2 - \frac{1}{c_0}\|\z_{t} - \nabla F(\w_{t})\|^2 \bigg]\\
	    \leq & \E\bigg[- \frac{\eta_t}{2} \|\nabla F(\w_t)\|^2 + \frac{\eta_t}{2} \|\z_{t} - \nabla F(\w_t)\|^2 - \frac{\eta_t}{4} \Norm{\z_t}^2  -\frac{\alpha_{t+1}}{c_0}  \|\z_{t} - \nabla F(\w_{t})\|^2\\
        &  +\frac{3 C \eta_t^{2} }{\alpha_{t+1} c_0} \left\|\mathbf{z}_{t}\right\|^{2}+\frac{4 L_{f}^{2} C_{g}^{2}}{m c_0} \left\|\u_{t+1}-\u_{t}\right\|^{2}+\frac{2 \alpha_{t+1}^{2} C_{f}^{2}\left(\sigma^{2}+C_{g}^{2}\right)}{\min \left\{B_{1}, B_{2}\right\} c_0} +\frac{8 C \eta_{t}}{c_0} \Norm{\z_t}^2\\
	    & +  \left(\frac{5\alpha_{t+1} L_{f}^{2} C_{g}^{2}}{m c_0} +\frac{B_1}{c_0 \eta_{t} m^2} - \frac{B_1^2 \beta_{t+1}}{m^3 c_0 \eta_{t}}-\frac{B_1}{c_0 \eta_{t-1} m^2 }\right) \Norm{\u_{t} - g(\w_{t})}^2  + \frac{2 B_1^2 \beta_{t+1}^{2} \sigma^{2}}{B_2 m^2 c_0 \eta_{t}} \bigg] \\
        \leq & \E\bigg[ - \frac{\eta_t}{2} \|\nabla F(\w_t)\|^2  +\frac{4 L_{f}^{2} C_{g}^{2}}{m c_0} \left\|\u_{t+1}-\u_{t}\right\|^{2}+\frac{2 \alpha_{t+1}^{2} C_{f}^{2}\left(\sigma^{2}+C_{g}^{2}\right)}{\min \left\{B_{1}, B_{2}\right\} c_0}  - \frac{\eta_t}{8} \Norm{\z_t}^2\\
	    &  \quad+  \left(\frac{5\alpha_{t+1} C}{m c_0}  + \frac{B_1}{c_0 \eta_{t} m^2} - \frac{B_1^2 \beta_{t+1}}{m^3 c_0 \eta_{t}}-\frac{B_1}{c_0 \eta_{t-1} m^2 }\right) \Norm{\u_{t} - g(\w_{t})}^2 + \frac{2 B_1^2 \beta_{t+1}^{2} \sigma^{2}}{B_2 m^2 c_0 \eta_{t}} \bigg]\\
	    \leq & \E\bigg[- \frac{\eta_t}{2} \|\nabla F(\w_t)\|^2  +\frac{2 \alpha_{t+1}^{2} C}{\min \left\{B_{1}, B_{2}\right\} c_0}  + \frac{4 B_1^2 \beta_{t+1}^{2} C}{B_2 m^2 c_0 \eta_{t}}\\
	    &  \quad +  \left(\frac{5\alpha_{t+1} C}{m c_0}  + \frac{B_1}{c_0 \eta_{t} m^2} - \frac{B_1^2 \beta_{t+1}}{m^3 c_0 \eta_{t}}-\frac{B_1}{c_0 \eta_{t-1} m^2 } + \frac{16 B_1 \beta_{t+1}^2 C}{m^2 c_0}\right) \Norm{\u_{t} - g(\w_{t})}^2 \bigg]
    \end{split}
\end{equation*}
By setting $\beta_{t+1} =  \frac{256 m^2 C^2 \eta_t^2}{B_1^2}$ (and note that $c_0 = 72C$, $\alpha_{t+1}=36C \eta_t$), we have:
\begin{equation*}
    \begin{split}
        \E\left[\Gamma_{t+1}-\Gamma_{t}\right] 
	    \leq & \E\left[- \frac{\eta_t}{2} \|\nabla F(\w_t)\|^2  +\frac{2 \alpha_{t+1}^{2} C}{\min \left\{B_{1}, B_{2}\right\} c_0}  + \frac{4 B_1^2 \beta_{t+1}^{2}  C}{B_2 m^2 c_0 \eta_{t}}\right] \\
	    \leq & \E\left[- \frac{\eta_t}{2} \|\nabla F(\w_t)\|^2  +\frac{36 C^2 \eta_t^2}{\min \left\{B_{1}, B_{2}\right\}}  + \frac{16^4 m^2 C^4 \eta_{t}^{3}}{18 B_2 B_1^2} \right]
    \end{split}
\end{equation*}
This means that, by setting $\eta_t = \min\{\sqrt{\min\{B_1,B_2\} }(a+t)^{-1/2}, \left(\frac{B_1\sqrt{B_2}}{m}\right)^{2/3}(a+t)^{-1/3}\}$:
\begin{equation*}
\begin{split}
    \frac{\eta_T}{2}\E\left[\sum_{t=1}^{T}\|\nabla F(\w_t) \|^2\right] \leq& \E\left[\Gamma_{1}-\Gamma_{T+1}\right] +\frac{36 C^2}{ \min \left\{B_{1}, B_{2}\right\}}\E\left[ \sum_{t=1}^{T} \eta_{t}^{2}\right]   + \frac{16^4 m^2 C^4}{18 B_2 B_1^2}\E\left[ \sum_{t=1}^{T} \eta_{t}^{3}\right]  \\
    \leq& \E\left[\Gamma_{1}\right] + 16^3 C^4 \E\left[ \sum_{t=1}^{T} (a+t)^{-1}\right] \\
    \leq& \Delta_F + \frac{2 C}{c_0 \eta_0} + 16^3 C^4 \ln{(1+T)} \\
\end{split}
\end{equation*}
Similar to the proof of Theorem 1 in STORM \citep{cutkosky2019momentum}, denote $M= \Delta_F + \frac{2 C}{c_0 \eta_0} + 16^3 C^4 \ln{(1+T)}$. Using Cauchy-Schwarz inequality, we have:
\begin{align*}
     &\mathbb{E}\left[\sqrt{\sum_{t=1}^{T}\left\|\nabla F\left(\boldsymbol{\w}_{t}\right)\right\|^{2}}\right]^{2} \leq \mathbb{E}\left[1 / \eta_{T}\right] \mathbb{E}\left[\eta_{T} \sum_{t=1}^{T}\left\|\nabla F\left(\boldsymbol{\w}_{t}\right)\right\|^{2}\right]  \leq \mathbb{E}\left[\frac{M}{\eta_{T}}\right]\\ 
     &\quad\quad\quad\quad \leq \mathbb{E}\left[M \max\left\{ \frac{1}{\sqrt{\min\{B_1,B_2\}}} \left(a+T\right)^{1 / 2},\left(\frac{m}{B_1 \sqrt{B_2}} \right)^{2/3}\left(a+T\right)^{1 / 3}\right\}\right],
\end{align*}
which indicate that
\begin{align*}
    &\mathbb{E}\left[\sqrt{\sum_{t=1}^{T}\left\|\nabla F\left(\w_{t}\right)\right\|^{2}}\right] \\
    \leq &\sqrt{M} \max\left\{\left(\min \left\{ B_1,B_2\right\}\right)^{-1/4}  \left(a+T \right)^{1 / 4}, \left(\frac{m}{B_1 \sqrt{B_2}} \right)^{1/3} \left(a+T \right)^{1 / 6}\right\}.
\end{align*}
Finally, using Cauchy-Schwarz we have $
\sum_{t=1}^{T}\left\|\nabla F\left(\w_{t}\right)\right\| / T \leq \sqrt{\sum_{t=1}^{T}\left\|\nabla F\left(\w_{t}\right)\right\|^{2}} / \sqrt{T}$ so that:
\begin{align*}
    &\mathbb{E}\left[\sum_{t=1}^{T} \frac{\left\|\nabla F\left(\boldsymbol{\w}_{t}\right)\right\|}{T}\right] \\
    \leq&  \max\left\{\sqrt{M} \left(\min \left\{ B_1,B_2\right\}\right)^{-1/4}  \frac{\left(a+T \right)^{1 / 4}}{\sqrt{T}}, \sqrt{M}\left(\frac{m}{B_1 \sqrt{B_2}} \right)^{1/3} \frac{\left(a+T \right)^{1 / 6}}{\sqrt{T}}\right\}\\
    \leq& \max\left\{\sqrt{M}\left(\min \left\{ B_1,B_2\right\}\right)^{-1/4} \left( \frac{a^{1 / 4}}{\sqrt{T}} + \frac{1}{T^{1/4}}\right), \sqrt{ M}\left(\frac{m}{B_1 \sqrt{B_2}} \right)^{1/3}  \left(\frac{a^{1 / 6} }{\sqrt{T}}+\frac{1}{T^{1 / 3}}\right)\right\} \\
    \leq& \O\left( \max\left\{\left(\frac{1}{\min \left\{ B_1,B_2\right\} T}\right)^{1/4}, \left(\frac{m}{B_1 \sqrt{B_2} T} \right)^{1/3}  \right\} \right),
\end{align*}
where the last inequality is due to $(a+b)^{1 / 3} \leq a^{1 / 3}+b^{1 / 3}$. So, we can achieve the stationary point with $T=\mathcal{O}\left(\max\left\{  \frac{m}{ B_1 \sqrt{B_2} \epsilon^3 },\frac{1}{ \min \left\{ B_{1}, B_{2}\right\} \epsilon^4  } \right\}\right)$.

\subsection{Proof of Theorem~\ref{thm:2}}
\begin{lemma} \label{lem:5}
Denote $\|\u_{t}  - g(\w_t)\|^2 = \sum_{i=1}^m\|\u^{i}_{t}  - g_{i}(\w_t)\|^2$ and $\|\u_{t}  - \u_{t-1}\|^2 = \sum_{i=1}^m\|\u^{i}_{t}  - \u^{i}_{t-1}\|^2$.
\begin{equation*}
\begin{split}
\E\left[\|\z_{t} - \nabla F(\w_t)\|^2\right] 
&\leq 4\E\left[\bigg\|\z_{t} -  \frac{1}{m} \sum_{i=1}^m \nabla g_{i}(\w_t)\nabla f_i(\u^i_{t-1})\bigg\|^2\right] \\
&\quad + \frac{4C_g^2L_f^2}{m} \E\left[\|  \u_t - \u_{t-1} \|^2\right] +\frac{2 C_g^2L_f^2}{m} \E\left[\|  \u_t - g(\w_t) \|^2 \right]
\end{split}
\end{equation*}
\end{lemma}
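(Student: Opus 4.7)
The plan is to decompose the gradient error into three algebraic pieces by interpolating through $\nabla f_i(\u^i_{t-1})$ and $\nabla f_i(\u^i_t)$, and then bound each piece separately. Writing $\nabla F(\w_t)=\frac{1}{m}\sum_{i=1}^m \nabla g_i(\w_t)\nabla f_i(g_i(\w_t))$, I would split
\[
\z_t-\nabla F(\w_t)=A+B+C,
\]
where $A=\z_t-\frac{1}{m}\sum_i \nabla g_i(\w_t)\nabla f_i(\u^i_{t-1})$, $B=\frac{1}{m}\sum_i \nabla g_i(\w_t)[\nabla f_i(\u^i_{t-1})-\nabla f_i(\u^i_t)]$, and $C=\frac{1}{m}\sum_i \nabla g_i(\w_t)[\nabla f_i(\u^i_t)-\nabla f_i(g_i(\w_t))]$. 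Inserting $\u^i_t$ as the intermediate point is the key design choice: it lets the stale index $t-1$ talk to the MSVR tracking quantity $\u_t-g(\w_t)$ that appears in Lemma~\ref{lem:main1} and Lemma~\ref{lem:main3}.

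Next I would apply Young's inequality in the grouped form $\|A+B+C\|^2\le 2\|A+B\|^2+2\|C\|^2\le 4\|A\|^2+4\|B\|^2+2\|C\|^2$. This is exactly what produces the asymmetric $(4,4,2)$ coefficients appearing in the statement; any other pairing would give wrong constants, so the choice of grouping is not cosmetic.

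The remaining step is to control $\|B\|^2$ and $\|C\|^2$ using the assumed Lipschitz/smoothness constants. By Jensen's inequality $\|\frac{1}{m}\sum_i v_i\|^2\le \frac{1}{m}\sum_i \|v_i\|^2$, and then by $\|\nabla g_i(\w_t)\|_{\mathrm{op}}\le C_g$ (from $C_g$-Lipschitzness of $g_i$) and $\|\nabla f_i(a)-\nabla f_i(b)\|\le L_f\|a-b\|$ (from $L_f$-smoothness of $f_i$), I get
\[
\|B\|^2\le \tfrac{C_g^2 L_f^2}{m}\sum_i \|\u^i_{t-1}-\u^i_t\|^2=\tfrac{C_g^2 L_f^2}{m}\|\u_{t-1}-\u_t\|^2,
\]
and likewise $\|C\|^2\le \tfrac{C_g^2 L_f^2}{m}\|\u_t-g(\w_t)\|^2$. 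Taking expectations and substituting into the Young bound yields the claimed inequality.

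I do not anticipate a hard step here; the whole lemma is a structural decomposition, and the only thing that could go wrong is picking the wrong Young grouping (giving, say, $(6,3,3)$ or $(2,4,4)$ constants rather than $(4,4,2)$). In particular, there is no probabilistic averaging over $\mathcal{B}_1^t$ happening in this lemma—that randomness is already absorbed into $A$, which is left unexpanded for later use together with the STORM-style recursion on $\z_t$ in Lemma~\ref{lem:2}.
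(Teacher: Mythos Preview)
Your proposal is correct and follows essentially the same route as the paper: the paper first inserts $\u_t$ and applies $\|a+b\|^2\le 2\|a\|^2+2\|b\|^2$, bounds the resulting $\u_t-g(\w_t)$ piece, and then inserts $\u_{t-1}$ into the remaining term with a second application of Young---which is exactly your grouped inequality $\|A+B+C\|^2\le 2\|A+B\|^2+2\|C\|^2\le 4\|A\|^2+4\|B\|^2+2\|C\|^2$ written out sequentially. One minor remark: the $A$ term is later controlled by Lemma~\ref{lem:6} (the STORM-style recursion for $\z_t$ in MSVR-v2), not Lemma~\ref{lem:2}, but this does not affect your argument here.
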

\begin{proof}
\begin{equation*}
\begin{split}
&\E\left[\|\z_{t} - \nabla F(\w_t)\|^2 \right]\\
=& 2\E\Bigg[\bigg\|\z_{t} -  \frac{1}{m} \sum_{i=1}^m \nabla g_{i}(\w_t)\nabla f_i(\u^i)\bigg\|^2\\
&\quad + 2\bigg\| \frac{1}{m} \sum_{i=1}^m \nabla g_{i}(\w_t)\nabla f_i(\u^i) -  \frac{1}{m} \sum_{i=1}^m \nabla g_i(\w_{t})\nabla f_i(g_i(\w_{t})) \bigg\|^2\Bigg]\\
\leq&\E\left[ 2\bigg\|\z_{t} -  \frac{1}{m} \sum_{i=1}^m \nabla g_{i}(\w_t)\nabla f_i(\u^i)\bigg\|^2 + \frac{2}{m} \sum_{i=1}^m\bigg\|  \nabla g_{i}(\w_t)\nabla f_i(\u^i) -   \nabla g_i(\w_{t})\nabla f_i(g_i(\w_{t})) \bigg\|^2\right]\\
\leq& \E\left[2\bigg\|\z_{t} -  \frac{1}{m} \sum_{i=1}^m \nabla g_{i}(\w_t)\nabla f_i(\u^i)\bigg\|^2 + \frac{2 C_g^2L_f^2}{m} \sum_{i=1}^m\bigg\|  \u_t^i - g_i(\w_t) \bigg\|^2 \right]\\
\leq& \E\left[4\bigg\|\z_{t} -  \frac{1}{m} \sum_{i=1}^m \nabla g_{i}(\w_t)\nabla f_i(\u^i_{t-1})\bigg\|^2 + \frac{4C_g^2L_f^2}{m} \bigg\|  \u_t - \u_{t-1} \bigg\|^2 +\frac{2 C_g^2L_f^2}{m} \bigg\|  \u_t - g(\w_t) \bigg\|^2\right]
\end{split}
\end{equation*}
\end{proof}

\begin{lemma} \label{lem:6} 
\begin{equation*}
\begin{split}
      \E\left[\left\|\z_{t} - \frac{1}{m} \sum_{i=1}^m \nabla g_{i}(\w_t)\nabla f_i(\u^i_{t-1})\right\|^2 \right]
    \leq   \E\Bigg[(1-\alpha_{t}) \left\|\z_{t-1} - \frac{1}{m} \sum_{i=1}^m \nabla g_{i}(\w_{t-1})\nabla f_i(\u^i_{t-2})\right\|^2 \\ 
  +\frac{2\alpha_{t}^2\sigma^2}{B_1} + \frac{4C_g^2L_f^2}{m} \|\u_{t-1} - \u_{t-2}\|^2 + {4C_f^2 L_g^2}\|\w_{t} - \w_{t-1}\|^2 \Bigg]
\end{split}
\end{equation*}
\end{lemma}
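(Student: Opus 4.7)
The plan is to view equation~(\ref{eqn:z}) as a STORM-style recursion for tracking $\v_t := \frac{1}{m}\sum_{i=1}^m \nabla g_i(\w_t)\nabla f_i(\u^i_{t-1})$. First I would drop the outer projection: since each $\nabla f_i$ is bounded in norm by $C_f$ and each $\nabla g_i$ by $C_g$, the point $\v_t$ lies in the $C_F$-ball, so non-expansiveness of $\Pi_{C_F}$ gives $\|\z_t-\v_t\|^2 \le \|y_t-\v_t\|^2$, where $y_t$ denotes the argument of $\Pi_{C_F}$ in~(\ref{eqn:z}). Writing $\hat p_i^t := \nabla f_i(\u^i_{t-1})\nabla g_i(\w_t;\xi_t^i)$ and $\hat q_i^t := \nabla f_i(\u^i_{t-2})\nabla g_i(\w_{t-1};\xi_t^i)$, a direct rearrangement yields
\begin{equation*}
y_t - \v_t = (1-\alpha_t)(\z_{t-1}-\v_{t-1}) + \alpha_t A_t + (1-\alpha_t) B_t,
\end{equation*}
with $A_t := \frac{1}{B_1}\sum_{i\in\mathcal B_1^t}\hat p_i^t - \v_t$ and $B_t := \frac{1}{B_1}\sum_{i\in\mathcal B_1^t}(\hat p_i^t - \hat q_i^t) - (\v_t-\v_{t-1})$.

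Next I would show $\E[A_t\mid\mathcal F_{t-1}]=0$ and $\E[B_t\mid\mathcal F_{t-1}]=0$. Uniform block sampling makes $\E_{\mathcal B_1^t}[\frac{1}{B_1}\sum_{i\in\mathcal B_1^t}X_i]=\frac{1}{m}\sum_{i=1}^m X_i$ for any per-block quantity $X_i$, and Assumption~\ref{asm:stochastic2} gives $\E_{\xi_t^i}[\nabla g_i(\w;\xi_t^i)]=\nabla g_i(\w)$; composing these, the per-block conditional expectations match $\v_t$ and $\v_t-\v_{t-1}$ respectively. Squaring the displayed identity, taking expectation, noting that the cross term with $\z_{t-1}-\v_{t-1}$ vanishes, and combining $(1-\alpha_t)^2\le 1-\alpha_t$ with $\|\alpha_t A_t + (1-\alpha_t) B_t\|^2\le 2\alpha_t^2\|A_t\|^2 + 2(1-\alpha_t)^2\|B_t\|^2$, I obtain
\begin{equation*}
\E\|\z_t-\v_t\|^2 \le (1-\alpha_t)\E\|\z_{t-1}-\v_{t-1}\|^2 + 2\alpha_t^2 \E\|A_t\|^2 + 2(1-\alpha_t)^2 \E\|B_t\|^2.
\end{equation*}

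For the SGD residual $A_t$, I would further center the stochastic samples around $\frac{1}{B_1}\sum_{i\in\mathcal B_1^t}\nabla f_i(\u^i_{t-1})\nabla g_i(\w_t)$, so that the inner noise (unbiased and independent across $i\in\mathcal B_1^t$) and the block-sampling noise both contribute a $1/B_1$ factor to the variance, with $\|\nabla f_i\|\le C_f$ and Assumption~\ref{asm:stochastic2} producing the $\sigma^2/B_1$ bound (constants absorbed as in the statement). For the STORM residual $B_t$, dropping the centering only decreases the second moment, and Jensen's inequality over the uniform subset gives $\E\|B_t\|^2 \le \frac{1}{m}\sum_{i=1}^m \E\|\hat p_i^t - \hat q_i^t\|^2$. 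Splitting
\begin{equation*}
\hat p_i^t - \hat q_i^t = \nabla f_i(\u^i_{t-1})\bigl[\nabla g_i(\w_t;\xi_t^i) - \nabla g_i(\w_{t-1};\xi_t^i)\bigr] + \bigl[\nabla f_i(\u^i_{t-1}) - \nabla f_i(\u^i_{t-2})\bigr]\nabla g_i(\w_{t-1};\xi_t^i),
\end{equation*}
using $\|\nabla f_i\|\le C_f$ with the average Jacobian-Lipschitz continuity of $g_i$ for the first piece, and $L_f$-smoothness of $f_i$ with the $C_g$ bound on $\nabla g_i(\w;\xi)$ in squared mean for the second, yields $\E\|\hat p_i^t - \hat q_i^t\|^2 \le 2C_f^2 L_g^2\|\w_t-\w_{t-1}\|^2 + 2L_f^2 C_g^2\|\u^i_{t-1}-\u^i_{t-2}\|^2$. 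Summing over $i$, dividing by $m$, and absorbing the factor $2(1-\alpha_t)^2\le 2$ produces the claimed $4C_f^2 L_g^2\|\w_t-\w_{t-1}\|^2$ and $\frac{4C_g^2 L_f^2}{m}\|\u_{t-1}-\u_{t-2}\|^2$ terms. The main obstacle I anticipate is keeping the two coupled sources of randomness (the subset $\mathcal B_1^t$ and the per-block samples $\xi_t^i$) cleanly decoupled: in particular, justifying that centering $A_t$ properly separates into an inner-noise piece and a block-sampling piece whose cross term cancels under conditioning, so that the $1/B_1$ scaling of the noise emerges without an extra $m$ factor.
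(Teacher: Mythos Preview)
Your proposal is correct and follows essentially the same route as the paper: decompose $y_t-\v_t$ into the contraction term $(1-\alpha_t)(\z_{t-1}-\v_{t-1})$ plus the two mean-zero pieces $\alpha_t A_t$ and $(1-\alpha_t)B_t$, use $\|a+b\|^2\le 2\|a\|^2+2\|b\|^2$ on the noise, and bound $\E\|B_t\|^2$ via Jensen and the same add-and-subtract split of $\hat p_i^t-\hat q_i^t$. Two minor points of comparison: you explicitly invoke non-expansiveness of $\Pi_{C_F}$ (the paper's proof silently drops the projection), and you attempt to \emph{derive} the $\sigma^2/B_1$ bound on $\E\|A_t\|^2$ by separating inner-sample noise from block-sampling noise, whereas the paper simply states ``We assume that $\E[\|\ldots\|^2]\le \sigma^2/B_1$'' and moves on---so your anticipated obstacle about the two coupled randomness sources is real but is sidestepped in the paper by folding the resulting constants into $\sigma^2$.
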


\begin{proof}
\begin{equation*}
\begin{split}
&\E\left[\|\z_{t} - \frac{1}{m} \sum_{i=1}^m \nabla g_{i}(\w_t)\nabla f_i(\u^i_{t-1})\|^2 \right]\\
=&\E\left[\bigg\| (1-\alpha_{t}) \left(\z_{t-1} -  \frac{1}{m} \sum_{i=1}^m \nabla g_{i}(\w_{t-1})\nabla f_i(\u^i_{t-2})\right)\right. \\
&\left. + \alpha_{t}\left(\frac{1}{B_1}\sum_{i \in \mathcal{B}_{1}^{t}}  \nabla g_{i}(\w_t;\xi_t^{i}) \nabla f_{i}(\u_{t-1}^{i}) - \frac{1}{m} \sum_{i=1}^m \nabla g_{i}(\w_t)\nabla f_i(\u^i_{t-1}) \right)\right.\\
&\left. + (1-\alpha_{t})\left(\frac{1}{B_1}\sum_{i \in \mathcal{B}_{1}^{t}} \nabla g_{i}(\w_t;\xi_t^{i}) \nabla f_{i}(\u_{t-1}^{i})- \frac{1}{B_1}\sum_{i \in \mathcal{B}_{1}^{t}}\nabla g_{i}(\w_{t-1};\xi_t^{i})\nabla f_{i}(\u_{t-2}^{i})\right.\right.\\
&\left.\left. \quad\quad\quad\quad\quad  - \frac{1}{m} \sum_{i=1}^m \nabla g_{i}(\w_{t})\nabla f_i(\u^i_{t-1}) + \frac{1}{m} \sum_{i=1}^m \nabla g_{i}(\w_{t-1})\nabla f_i(\u^i_{t-2})\right)  \bigg\|^2\right]
\end{split}
\end{equation*}
We assume that $\E\left[\|\frac{1}{B_1}\sum_{i \in \mathcal{B}_{1}^{t}}  \nabla g_{i}(\w_t;\xi_t^{i}) \nabla f_{i}(\u_{t-1}^{i}) - \frac{1}{m} \sum_{i=1}^m \nabla g_{i}(\w_t)\nabla f_i(\u^i_{t-1})\|^2\right]\leq \frac{\sigma^2}{B_1}$. Due to the fact that the expectation over the last two terms equals zero, we have:
\begin{equation*}
\begin{split}
&\E\left[\left\|\z_{t} - \frac{1}{m} \sum_{i=1}^m \nabla g_{i}(\w_t)\nabla f_i(\u^i_{t-1})\right\|^2 \right]\\
& \leq \E\Bigg[(1-\alpha_{t})^2 \left\|\z_{t-1} - \frac{1}{m} \sum_{i=1}^m \nabla g_{i}(\w_{t-1})\nabla f_i(\u^i_{t-2})\right\|^2  + \frac{2\alpha_{t}^2\sigma^2}{B_1 } \\
&\quad\quad + 2(1-\alpha_{t})^2\frac{1}{B_1}\sum_{i \in \mathcal{B}_{1}^{t}}\left\| \nabla g_{i}(\w_t;\xi_t^{i}) \nabla f_{i}(\u_{t-1}^{i})- \nabla g_{i}(\w_{t-1};\xi_t^{i})\nabla f_{i}(\u_{t-2}^{i})\right\|^2\Bigg]\\
& \leq \E\Bigg[(1-\alpha_{t}) \left\|\z_{t-1} - \frac{1}{m} \sum_{i=1}^m \nabla g_{i}(\w_{t-1})\nabla f_i(\u^i_{t-2})\right\|^2  + \frac{2\alpha_{t}^2\sigma^2} {B_1 }\\
&\quad\quad+4(1-\alpha_{t})^2 \frac{1}{B_1}\sum_{i \in \mathcal{B}_{1}^{t}}\bigg\|\nabla g_{i}(\w_t;\xi_t^{i})\left(\nabla f_{i}(\u_{t-1}^{i})-\nabla f_{i}(\u_{t-2}^{i}) \right)\bigg\|^2 \\
&\quad\quad  +4(1-\alpha_{t})^2\frac{1}{B_1} \sum_{i \in \mathcal{B}_{1}^{t}}\bigg\| \nabla f_{i}(\u_{t-2}^{i}) \left(\nabla g_{i}(\w_t;\xi_t^{i}) - \nabla g_{i}(\w_{t-1};\xi_t^{i})\right)\bigg\|^2\Bigg]\\
& \leq \E\Bigg[(1-\alpha_{t}) \left\|\z_{t-1} - \frac{1}{m} \sum_{i=1}^m \nabla g_{i}(\w_{t-1})\nabla f_i(\u^i_{t-2})\right\|^2  + \frac{2\alpha_{t}^2\sigma^2}{B_1} + \frac{4C_g^2L_f^2}{m} \|\u_{t-1} - \u_{t-2}\|^2 \\
&\quad\quad + {4C_f^2 L_g^2}\|\w_{t} - \w_{t-1}\|^2 \Bigg]
\end{split}
\end{equation*}
\end{proof}
\begin{lemma}
Suppose that $\beta \leq \frac{1}{32 C}$ and $B_1 \beta_{t+1} \leq m \alpha_{t+1}$. Then, we have:
\begin{equation*}
    \begin{split}
    &\E\left[\frac{1}{m}\left\|\u_{t+1}-g\left(\w_{t+1}\right)\right\|^{2} + \left\|\z_{t+1} - \frac{1}{m} \sum_{i=1}^m \nabla g_{i}(\w_{t+1})\nabla f_i(\u^i_{t})\right\|^2\right]    \\
    \leq&  (1-\frac{B_1\beta_{t+1}}{m})\frac{1}{m}\E\left[\left\|\u_{t}-g\left(\w_{t}\right)\right\|^{2}\right]+\frac{8 m C_g^2}{B_1}\left\|\w_{t+1}-\w_{t}\right\|^{2}+ \frac{2B_1\beta_{t+1}^{2} \sigma^{2}}{B_2 m}\\ 
    &\quad + (1-\alpha_{t+1}) \E\left[\left\|\z_{t} - \frac{1}{m} \sum_{i=1}^m \nabla g_{i}(\w_{t})\nabla f_i(\u^i_{t-1})\right\|^2\right] +\frac{2\alpha_{t+1}^2\sigma^2}{B_1}  \\
    &\quad + \frac{4C_g^2L_f^2}{m} \E\left[\|\u_{t} - \u_{t-1}\|^2 \right] + {4C_f^2 L_g^2}\E\left[\|\w_{t+1} - \w_{t}\|^2 \right] \\
    \leq&  (1-\frac{B_1\beta_{t+1}}{m})\E\left[\frac{1}{m}\left\|\u_{t}-g\left(\w_{t}\right)\right\|^{2} + \left\|\z_{t} - \frac{1}{m} \sum_{i=1}^m \nabla g_{i}(\w_{t})\nabla f_i(\u^i_{t-1})\right\|^2 \right]\\
    &\quad +\frac{12 m C}{B_1}\left\|\w_{t+1}-\w_{t}\right\|^{2}+ \frac{2B_1\beta_{t+1}^{2} \sigma^{2}}{B_2 m} +\frac{2\alpha_{t+1}^2\sigma^2}{B_1}  + \frac{4C_g^2L_f^2}{m} \E\left[\|\u_{t} - \u_{t-1}\|^2 \right] \\
     \leq & (1-\frac{B_1\beta_{t+1}}{2m})\E\left[\frac{1}{m}\left\|\u_{t}-g\left(\w_{t}\right)\right\|^{2} + \left\|\z_{t} - \frac{1}{m} \sum_{i=1}^m \nabla g_{i}(\w_{t})\nabla f_i(\u^i_{t-1})\right\|^2 \right]\\ 
    &\quad +\frac{48 m C}{B_1}\left\|\w_{t+1}-\w_{t}\right\|^{2}+ \frac{10 B_1\beta_{t+1}^{2} C}{B_2 m} +\frac{2\alpha_{t+1}^2 C}{B_1}
    \end{split}
\end{equation*}
\end{lemma}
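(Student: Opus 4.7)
The plan is to stitch together the three recursive bounds already developed: Lemma~\ref{lem:main1} for the inner-value tracking error, Lemma~\ref{lem:6} (with indices shifted by one) for the gradient-tracking error, and Lemma~\ref{lem:3} for the consecutive difference $\|\u_t-\u_{t-1}\|^2$. The statement packages these into a single Lyapunov-style inequality for $V_{t+1}:=\tfrac{1}{m}\|\u_{t+1}-g(\w_{t+1})\|^2 + \|\z_{t+1}-\tfrac{1}{m}\sum_i\nabla g_i(\w_{t+1})\nabla f_i(\u_t^i)\|^2$, so my strategy is to derive each of the three displayed inequalities in turn.

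\textbf{Step 1 (first inequality).} I would divide the conclusion of Lemma~\ref{lem:main1} by $m$ and add it to Lemma~\ref{lem:6} after reindexing $t\mapsto t+1$ (the latter is valid since the lemma holds at every iteration). On the left one recovers exactly $\E[V_{t+1}]$; on the right one collects two separate recursive pieces (with contractions $1-B_1\beta_{t+1}/m$ and $1-\alpha_{t+1}$), the variance terms $\tfrac{2B_1\beta_{t+1}^2\sigma^2}{B_2 m}$ and $\tfrac{2\alpha_{t+1}^2\sigma^2}{B_1}$, the displacement contributions $\tfrac{8mC_g^2}{B_1}\|\w_{t+1}-\w_t\|^2$ and $4C_f^2L_g^2\|\w_{t+1}-\w_t\|^2$, and an extra $\tfrac{4C_g^2L_f^2}{m}\|\u_t-\u_{t-1}\|^2$ term. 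This is the first displayed bound verbatim.

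\textbf{Step 2 (second inequality).} I would invoke the hypothesis $B_1\beta_{t+1}\le m\alpha_{t+1}$, which gives $1-\alpha_{t+1}\le 1-B_1\beta_{t+1}/m$, allowing the two contractions to be consolidated into a single factor acting on $V_t$. All absolute constants $C_g^2,C_f^2,L_g^2,L_f^2,\sigma^2$ are then absorbed into the universal constant $C$ defined earlier; in particular, $\tfrac{8mC_g^2}{B_1}+4C_f^2L_g^2\le \tfrac{12mC}{B_1}$ using $B_1\le m$. This yields the second inequality.

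\textbf{Step 3 (third inequality).} I would finally apply Lemma~\ref{lem:3} (shifted back one step) to $\|\u_t-\u_{t-1}\|^2$, producing a variance piece $\tfrac{2B_1\beta_t^2\sigma^2}{B_2}$, a residual $\tfrac{4B_1\beta_t^2}{m}\|\u_{t-1}-g(\w_{t-1})\|^2$, and a drift in $\|\w_t-\w_{t-1}\|^2$. After multiplying by $\tfrac{4C_g^2L_f^2}{m}\le \tfrac{4C}{m}$, the variance piece combines with the pre-existing $B_2$-scaled variance to fit under $\tfrac{10B_1\beta_{t+1}^2 C}{B_2 m}$, and the drift folds into the $\tfrac{48mC}{B_1}$ displacement coefficient. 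The hypothesis $\beta\le\tfrac{1}{32C}$ is precisely what is needed so that the residual $\|\u_{t-1}-g(\w_{t-1})\|^2$-type contribution can be absorbed into half of the leading contraction, which is why the effective factor loosens from $(1-B_1\beta_{t+1}/m)$ to $(1-B_1\beta_{t+1}/(2m))$.

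\textbf{Main obstacle.} The bookkeeping in Step 3 is the delicate piece: Lemma~\ref{lem:3} naturally returns quantities at the previous time index ($\|\u_{t-1}-g(\w_{t-1})\|^2$ and $\|\w_t-\w_{t-1}\|^2$), while the target inequality is phrased in current-step displacements $\|\w_{t+1}-\w_t\|^2$ and in the current potential $V_t$. Reconciling the two — either by identifying the previous-step residual with the potential already accounted for on the right, or by allowing the drift coefficient to loosen so that consecutive displacements can be uniformly bounded when the lemma is later telescoped — is the only nontrivial step; everything else is direct substitution and constant-chasing under the two smallness hypotheses.
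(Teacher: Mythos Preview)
Your plan is correct and is exactly what the paper does: it supplies no separate proof for this lemma, and the displayed chain of inequalities is obtained precisely by adding $\tfrac{1}{m}\times$(Lemma~\ref{lem:main1}) to Lemma~\ref{lem:6} (reindexed $t\mapsto t+1$), invoking $B_1\beta_{t+1}\le m\alpha_{t+1}$ to merge the two contraction factors, and then substituting Lemma~\ref{lem:3} with $\beta\le\tfrac{1}{32C}$. The time-index mismatch you flag in Step~3 is real and is not resolved in the paper either; it is tacitly absorbed when the bound is telescoped inside the potential $\Gamma_t$ in the proof of Theorem~\ref{thm:2}, so you should not regard it as a defect in your reconstruction.
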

\paragraph{The rest proof of Theorem~\ref{thm:2}}  Set $\eta_t \leq \frac{B_1}{m c_0}$. Denote  $\Gamma_{t}=F(\w_t) +\frac{B_1}{c_0 \eta_t m} \Delta_t$, where $\Delta_t = \frac{1}{m}\Norm{\u_{t} - g(\w_{t})}^2+ \left\|\z_{t} - \frac{1}{m} \sum_{i=1}^m \nabla g_{i}(\w_t)\nabla f_i(\u^i_{t-1})\right\|^2$. We have:
\begin{equation*}
    \begin{split}
        &\E\left[\Gamma_{t+1}-\Gamma_{t}\right] \\
	    = & \E\left[F(\w_{t+1}) - F(\w_t)+\frac{B_1}{c_0 \eta_{t} m}\Delta_{t+1}   - \frac{B_1}{c_0 \eta_{t-1} m }\Delta_{t}\right] \\
	    \leq & \E\bigg[ - \frac{\eta_t}{2} \|\nabla F(\w_t)\|^2 + \frac{\eta_t}{2} \|\z_{t} - \nabla F(\w_t)\|^2 - \frac{\eta_t}{4} \Norm{\z_t}^2  \\
	    & +  \left(\frac{B_1}{c_0 \eta_{t} m} - \frac{B_1^2 \beta_{t+1}}{2 m^2 c_0 \eta_{t}}-\frac{B_1}{c_0 \eta_{t-1} m }\right)\Delta_{t}+\frac{48 C}{c_0 \eta_{t}}\left\|\w_{t+1}-\w_{t}\right\|^{2}+ \frac{10 B_1^2 \beta_{t+1}^{2} C}{B_2 m^2 c_0 \eta_{t}} +\frac{2\alpha_{t+1}^2 C}{m c_0 \eta_{t}} \bigg]\\
	    \leq & \E\bigg[- \frac{\eta_t}{2} \|\nabla F(\w_t)\|^2  - \frac{\eta_t}{4} \Norm{\z_t}^2 +\frac{66 C}{c_0 \eta_{t}}\left\|\w_{t+1}-\w_{t}\right\|^{2}+ \frac{14 B_1^2 \beta_{t+1}^{2} C}{B_2 m^2 c_0 \eta_{t}} +\frac{2\alpha_{t+1}^2 C}{m c_0 \eta_{t}}\\
	    &  +  \left(2C\eta_t + \frac{B_1}{c_0 \eta_{t} m} - \frac{B_1^2 \beta_{t+1}}{2 m^2 c_0 \eta_{t}}-\frac{B_1}{c_0 \eta_{t-1} m }\right)\Delta_{t}\bigg]
    \end{split}
\end{equation*}
By setting $264 C =  c_0$, $\eta_{t}^2 = \frac{32 B_1^2 \beta_{t+1}}{m^2 c_0^2 }$, $\alpha_{t+1} = \frac{B_1 \beta_{t+1}}{m}$ and $B_2 \leq m$, we have:
\begin{equation*}
    \begin{split}
        \E\left[\Gamma_{t+1}-\Gamma_{t} \right]
	    \leq & \E\left[- \frac{\eta_t}{2} \|\nabla F(\w_t)\|^2  + \frac{14 B_1^2\beta_{t+1}^{2} C}{B_2 m^2 c_0 \eta_{t}} +\frac{2\alpha_{t+1}^2 C}{m c_0 \eta_{t}} \right]\\
	    \leq &\E\left[ - \frac{\eta_t}{2} \|\nabla F(\w_t)\|^2  + \frac{m^2 \eta_{t}^{3} c_0^4}{512 B_2 B_1^2 } \right]
    \end{split}
\end{equation*}

This means that, by setting $\eta_t = (\frac{B_1 \sqrt{B_2}}{m})^{\frac{2}{3}}(a+t)^{-\frac{1}{3}}$
\begin{equation*}
\begin{split}
    \frac{\eta_T}{2}\E\left[\sum_{t=1}^{T}\|\nabla F(\w_t) \|^2\right] \leq& \E\left[\Gamma_{1}-\Gamma_{T+1} + \frac{m^2 c_0^4}{512 B_2 B_1^2 } \sum_{t=1}^{T} \eta_{t}^{3}\right] \\
    \leq& \E\left[\Gamma_{1} + \frac{c_0^4}{512}  \sum_{t=1}^{T} (a+t)^{-1}\right] \\
    \leq& \Delta_F + \frac{1}{8 \eta_0} + \frac{c_0^4}{512 } \ln{(1+T)}
\end{split}
\end{equation*}
Denote $M=\Delta_F + \frac{1}{8 \eta_0} + \frac{c_0^4}{512 } \ln{(1+T)}$. Using Cauchy-Schwarz inequality, we have:
\begin{align*}
     \mathbb{E}\left[\sqrt{\sum_{t=1}^{T}\left\|\nabla F\left(\boldsymbol{\w}_{t}\right)\right\|^{2}}\right]^{2} &\leq \mathbb{E}\left[1 / \eta_{T}\right] \mathbb{E}\left[\eta_{T} \sum_{t=1}^{T}\left\|\nabla F\left(\boldsymbol{\w}_{t}\right)\right\|^{2}\right]  \leq \mathbb{E}\left[\frac{M}{\eta_{T}}\right]\\ &\leq \mathbb{E}\left[M \left(\frac{m}{B_1 \sqrt{B_2}} \right)^{2/3}\left(a+T\right)^{1 / 3}\right],
\end{align*}
which indicate that
\begin{align*}
    \mathbb{E}\left[\sqrt{\sum_{t=1}^{T}\left\|\nabla F\left(\w_{t}\right)\right\|^{2}}\right] \leq \sqrt{M} \left(\frac{m}{B_1 \sqrt{B_2}} \right)^{1/3} \left(a+T \right)^{1 / 6}.
\end{align*}
Finally, using Cauchy-Schwarz we have $
\sum_{t=1}^{T}\left\|\nabla F\left(\w_{t}\right)\right\| / T \leq \sqrt{\sum_{t=1}^{T}\left\|\nabla F\left(\w_{t}\right)\right\|^{2}} / \sqrt{T}$ so that:
\begin{align*}
    \mathbb{E}\left[\sum_{t=1}^{T} \frac{\left\|\nabla F\left(\boldsymbol{\w}_{t}\right)\right\|}{T}\right] &\leq \frac{\sqrt{M} \left(a+T\right)^{1 / 6}}{\sqrt{T}} \left(\frac{m}{B_1 \sqrt{B_2}} \right)^{1/3}
    \leq \mathcal{O} \left(\frac{a^{1 / 6} \sqrt{ M}}{\sqrt{T}}+\left(\frac{m}{B_1 \sqrt{B_2} T} \right)^{1/3}\right) \\ &= \mathcal{O}\left(\left(\frac{m}{T B_1 \sqrt{B_2}} \right)^{1 / 3}\right),
\end{align*}
where the last inequality is due to $(a+b)^{1 / 3} \leq a^{1 / 3}+b^{1 / 3}$. So, we can achieve the stationary point with $T=\mathcal{O}\left(m /B_1 \sqrt{B_2} \epsilon^{3}\right)$.

\subsection{Proof of Theorem~\ref{thm:4}}
We would show that the complexity can be further improved if the objective function satisfies the Polyak-Łojasiewicz (PL) condition or convexity. To achieve this, we utilize the previous analysis and use a stage-wise version method~\citep{yuan2019stagewise}. In the new algorithm, we decrease $\alpha_s$ and $\beta_s$ after each stage and increase the number of iterations $T_s$. At the end of each stage, we save the output and use it to restart the next stage. With these modifications, we can obtain a better convergence guarantee under the PL condition or convexity. The new method is summarized in Algorithm~\ref{alg:3}, named Stage-wise MSVR. Next, we will show the proof for optimal MSVR-v2 with Stage-wise version, and the proof for MSVR-v1 is nearly the same as the MSVR-v2.

\begin{algorithm}[tb]
	\caption{Stage-wise MSVR method}
	\label{alg:3}
	\begin{algorithmic}
	\STATE {\bfseries Input:} initial points $\left(\w_0,\u_0,\z_0\right)$
		\FOR{stage $s = 1$ {\bfseries to} $S$}
		\STATE $\w_{s},\u_{s},\z_{s}$ = MSVR (with $T_{s}$, $\alpha_s$, $\beta_s$, $\eta_{s}$ and  $\left(\w_{s-1},\u_{s-1},\z_{s-1}\right)$)
		\ENDFOR
	\STATE Return $\w_{S}$
	\end{algorithmic}
\end{algorithm}

Note that in below the numerical subscripts denote the stage index $\{1, \ldots, S\}$. Denote $\Delta_s =  \left\|\z_{s} - \frac{1}{m} \sum_{i=1}^m \nabla g_{i}(\w_s)\nabla f_i(\u^i_{s-1})\right\|^2 + \frac{1}{m} \left\|\u_{s} -  g(\w_s)\right\|^2$. Let's consider the first stage, $ \Delta_1 \leq 2C = \mu \epsilon_1$ and $F(\w_1)-F_{*} \leq \epsilon_1$, where $\epsilon_1 = \max \{\frac{2C}{\mu}, \Delta_F \}$. Starting form the second stage, we would prove by induction.

Suppose at stage $s-1$, we have $ \Delta_{s-1} \leq \mu \epsilon_{s-1}$ and $F\left(\w_{s-1}\right)-F_{*} \leq \epsilon_{s-1}$. Then at $s$ stage, by setting $264 C =  c_0$, $\eta_{s}^2 = \frac{32 B_1^2 \beta_{s}}{m^2 c_0^2 }$, $\alpha_{s} = \frac{B_1 \beta_{s}}{m}$ and $B_2 \leq m$, we have:
\begin{equation*}
    \begin{split}
        \E\left[\Gamma_{t+1}-\Gamma_{t} \right]
	     \leq \E\left[ - \frac{\eta_s}{2} \|\nabla F(\w_t)\|^2  + \frac{m^2 \eta_{s}^{3} c_0^4}{512 B_2 B_1^2 } \right]
    \end{split}
\end{equation*}
This means that by setting $T_s =  \max\left\{\frac{m c_0^2}{B_1 \mu \sqrt{B_2 \mu \epsilon_s}},\frac{m c_0^4}{B_1 B_2 \mu \epsilon_s} \right\}$, $\eta_s = \frac{8 B_1 \sqrt{B_2 \mu \epsilon_s}}{m c_0^2}$, we have: 
\begin{equation*}
\begin{split}
    &\frac{1}{T}\E\left[\sum_{t=1}^{T}\|\nabla F(\w_t) \|^2\right]\\ \leq &\E\left[\frac{2(\Gamma_{1}-\Gamma_{T+1})}{\eta_s T} + \frac{m^2 c_0^4 \eta_{s}^{2}}{256 B_2 B_1^2 } \right]\\
    \leq &\E\left[\frac{2(F(\w_{s-1}) - F_{*})}{\eta_s T} + \frac{2 B_1\Delta_{s-1}}{c_0 \eta_s^2 T m}+ \frac{m^2 c_0^4 \eta_{s}^{2}}{256 B_2 B_1^2 } \right]\\
    \leq & 2\mu_{s} \epsilon_{s}
\end{split}
\end{equation*}
Due to the PL condition, we have:
\begin{equation*}
\begin{split}
    F(\w_s) - F_{*} \leq \frac{1}{2\mu T }\E\left[\sum_{t=1}^{T}\|\nabla F(\w_t) \|^2\right]  \leq \epsilon_s
\end{split}
\end{equation*}
On the other hand, by setting $\beta_s = \frac{B_2 \mu \epsilon_s}{80 C}$ and $\alpha_{s} = \frac{B_1 \beta_{s}}{m}$, we have:
\begin{equation*}
\begin{split}
    \Delta_s \leq&  \frac{2m}{B_1\beta_s T }\Delta_{s-1} +\frac{96 m^2 C}{B_1^2 \beta_s T}\sum_{t=1}^T\left\|\w_{t+1}-\w_{t}\right\|^{2}+ \frac{20 \beta_s C}{B_2} +\frac{4 m \alpha_s^2 C}{B_1^2 \beta} \\
    \leq & \frac{2m \mu \epsilon_{s-1}}{B_1 \beta_s T}+\frac{96 m^2 \eta_s^2  C}{B_1^2 \beta_s T}\sum_{t=1}^T\left\|\z_{t}\right\|^{2}+ \frac{20 \beta_s C}{B_2} +\frac{4 m \alpha_s^2 C}{B_1^2 \beta} \\
    \leq & \mu \epsilon_s
\end{split} 
\end{equation*}
So, we proved that $F\left(\w_{s}\right)-F_{*} \leq \epsilon_{s}$. That is to say, $F\left(\w_{s}\right)-F_{*} \leq \epsilon$ when $S = \log _{2}\left(\frac{2\epsilon_{1}}{\epsilon}\right) $, and the iteration complexity  is computed as:
\begin{equation*}
\begin{split}
    T_{1}+\sum_{s=2}^{S} T_{s} &\stackrel{\mu \geq \epsilon}{=}\mathcal{O}\left(\sum_{s=2}^{S} \frac{m}{B_1\sqrt{B_2} \mu \epsilon_s}\right) \\ & \leq  \mathcal{O}\left(\frac{m}{B_1 \sqrt{B_2} \mu \epsilon}\right)
\end{split}
\end{equation*}
When $F(\w)$ is convex, we define $\hat{F}(\w) = F(\w) + \frac{\mu}{2}\|\w\|^2$. We know that $\hat{F}(\w)$ is $\mu$-strongly convex, which implies $\mu$-PL condition. We have proved: for any $\delta > 0$,  there exist $T=\mathcal{O}\left(\frac{m}{\mu \delta}\right)$ such that $\hat{F}(\w_{T}) - \hat{F}_{*} \leq \delta$.  It indicates that $F(\w_{T}) - F_{*} \leq \delta + \frac{\mu}{2} \|\w_{*}\|^2 -  \frac{\mu}{2} \|\w_{T}\|^2 \leq \delta + \frac{\mu}{2}D$. For any $\epsilon > 0$, if we choose $\mu = \frac{\epsilon}{D}$ and $\delta = \frac{\epsilon}{2}$, we get $F(\w_{T}) - F_{*} \leq \epsilon$, for some $T=\mathcal{O}\left(\frac{m}{\epsilon^2}\right)$.
\subsection{Proof of Theorem~\ref{thm:3} }

\begin{lemma}\label{lem:7} If $\beta \leq \frac{1}{2}$ and $\beta I \leq \frac{m}{B_1}$, we have:
\begin{equation*}
\begin{split}
  \E\left[\sum_{t=1}^{T} \|\u_{t+1}-\u_{t}\|^2\right]   &\leq   \frac{4B_1 \beta^{2} }{m} \E\left[ \sum_{t=1}^{T}\left\|\u_{t}- g(\w_{t})\right\|^{2}\right] +\frac{11 m^2 C_g^2}{B_1}\sum_{t=1}^{T}\|\w_{t+1}-\w_{t}\|^{2}
\end{split}
\end{equation*}
\end{lemma}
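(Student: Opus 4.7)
The plan is to mirror the per-iteration argument of Lemma~\ref{lem:3} while substituting the SVRG-style noise bound of equation~(\ref{eqn:tau1}) for the static variance bound $\sigma^2/B_2$, then sum over $t$ and apply the telescoping bound of equation~(\ref{eqn:tau2}) together with the hypothesis $\beta I \leq m/B_1$ to recover the advertised constants.

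First I would reproduce the decomposition used for Lemma~\ref{lem:3}. Only blocks $i \in \mathcal{B}_1^t$ move, so
\[
\E\left[\|\u_{t+1}-\u_t\|^2\right] = \frac{B_1}{m}\sum_{i=1}^{m}\E\left[\|\bar\u_t^i - \u_t^i\|^2\right],
\]
and for each updated block
\[
\bar\u_t^i - \u_t^i = \beta\bigl(\widehat g_i(\w_{t+1};\xi_{t+1}^i)-\u_t^i\bigr) + \gamma\bigl(g_i(\w_{t+1};\xi_{t+1}^i)-g_i(\w_t;\xi_{t+1}^i)\bigr).
\]
The $\gamma$-piece is handled exactly as in Lemma~\ref{lem:3}, using $\gamma \leq 2m/B_1$ and Lipschitz continuity of $g_i$, producing the familiar $\tfrac{8 m^2 C_g^2}{B_1}\|\w_{t+1}-\w_t\|^2$ contribution. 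For the $\beta$-piece I would insert $\pm g_i(\w_{t+1})$ and $\pm g_i(\w_t)$ and apply Young's inequality, arriving at three pieces: a noise term $\|\widehat g_i(\w_{t+1};\xi_{t+1}^i)-g_i(\w_{t+1})\|^2$, a Lipschitz term $\|g_i(\w_{t+1})-g_i(\w_t)\|^2\leq C_g^2\|\w_{t+1}-\w_t\|^2$, and the MSVR error $\|g_i(\w_t)-\u_t^i\|^2$. The first of these is the only genuine departure from Lemma~\ref{lem:3}: equation~(\ref{eqn:tau1}) gives $\E\|\widehat g_i(\w_{t+1};\xi_{t+1}^i)-g_i(\w_{t+1})\|^2\leq C_g^2\|\w_{t+1}-\w_\tau\|^2$, in place of the $\sigma^2/B_2$ bound used before.

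With these substitutions, the per-iteration inequality takes the form of Lemma~\ref{lem:3} with $\tfrac{2B_1\beta^2\sigma^2}{B_2}$ replaced by $2B_1\beta^2 C_g^2\,\E\|\w_{t+1}-\w_\tau\|^2$. Summing over $t=1,\dots,T$ and applying equation~(\ref{eqn:tau2}) yields $\sum_t \|\w_{t+1}-\w_\tau\|^2 \leq I^2\sum_t \|\w_{t+1}-\w_t\|^2$. The crucial algebraic step is then to invoke the second hypothesis $\beta I \leq m/B_1$, so that $\beta^2 I^2 \leq m^2/B_1^2$, turning the coefficient $2B_1\beta^2 C_g^2 I^2$ into exactly $\tfrac{2m^2 C_g^2}{B_1}$. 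Combined with the $\tfrac{9 m^2 C_g^2}{B_1}$ Lipschitz contribution carried over from Lemma~\ref{lem:3}, this gives the advertised $\tfrac{11 m^2 C_g^2}{B_1}$; the $\tfrac{4B_1\beta^2}{m}\E\|\u_t-g(\w_t)\|^2$ term is unchanged from Lemma~\ref{lem:3}.

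The main obstacle is essentially bookkeeping: the exact constants ($4$ on the MSVR error and $11$ on the squared-step term) require the same tuned Young's-inequality splits used in the proof of Lemma~\ref{lem:3}, namely $\|a+b\|^2 \leq (1+c)\|a\|^2+(1+1/c)\|b\|^2$ with $c$ chosen in terms of $\beta$ and $\gamma$ rather than the naive $2,2$ split, together with the clean matching $\beta^2 I^2 \leq m^2/B_1^2$. No new analytic tool is required beyond the noise bound in equation~(\ref{eqn:tau1}) and the telescoping bound in equation~(\ref{eqn:tau2}) already established in the proof of Lemma~\ref{lem:main2}.
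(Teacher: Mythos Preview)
Your proposal is correct and follows essentially the same route as the paper: reuse the Lemma~\ref{lem:3} decomposition, replace the $\sigma^2/B_2$ noise bound by the SVRG-style bound~(\ref{eqn:tau1}), sum and apply~(\ref{eqn:tau2}), then absorb via $\beta^2 I^2 \leq m^2/B_1^2$ to turn $2B_1\beta^2 C_g^2 I^2$ into $2m^2 C_g^2/B_1$ and add it to the $9m^2 C_g^2/B_1$ carried over. One small correction on the bookkeeping: the paper does not use a tuned Young split; it uses the naive $2,2$ split together with the zero-mean identity $\E\|\widehat g_i - \u_t^i\|^2 = \E\|\widehat g_i - g_i(\w_{t+1})\|^2 + \E\|g_i(\w_{t+1})-\u_t^i\|^2$, and then a second naive $2,2$ split on $\|g_i(\w_{t+1})-\u_t^i\|^2$ to pass to $\|g_i(\w_t)-\u_t^i\|^2$, which is exactly where the factor $4$ arises.
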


\begin{proof}
Following the analysis of Lemma~\ref{lem:3}, we have:
\begin{equation*}
\begin{split}
\E\left[\|\u_{t+1}-\u_t\|^2 \right] \leq& \frac{2B_1\beta^{2}}{m} \sum_{i=1}^{m} \left(\E\left[ \left\| \widehat g_{i}(\w_{t+1};\xi_{t+1}^{i}) - g_{i}(\w_{t+1})\right\|^{2}\right]+ \E\left[ \left\| g_{i}(\w_{t+1}) -\u_{t}^{i}\right\|^{2}\right]\right)\\
&\quad\quad\quad+\frac{8m^2 C_g^2}{B_1}\|\w_{t+1}-\w_{t}\|^{2}
\end{split}
\end{equation*}
So, we have:
\begin{equation*}
\begin{split}
\E\left[\|\u_{t+1}-\u_t\|^2 \right] 
\leq& 2B_1\beta^{2} C_g^2\left\|\w_{t+1}-\w_{\tau+1}\right\|^{2} + \frac{2B_1\beta^{2}}{m} \E\left[ \left\| g(\w_{t+1}) -\u_{t}\right\|^{2}\right]\\
&\quad\quad\quad+\frac{8m^2 C_g^2}{B_1}\|\w_{t+1}-\w_{t}\|^{2}
\end{split}
\end{equation*}
So, with $\beta^2 I^2 \leq m^2 / B_1^2$, we have:
\begin{equation*}
\begin{split}
\E\left[\sum_{t=1}^{T}\|\u_{t+1}-\u_t\|^2 \right] \leq&  \frac{2B_1\beta^{2}}{m} \E\left[ \sum_{t=1}^{T}\left\| g(\w_{t+1}) -\u_{t}\right\|^{2}\right]\\
&+2B_1\beta^{2} C_g^2\sum_{t=1}^{T}\left\|\w_{t+1}-\w_{\tau+1}\right\|^{2} +\frac{8m^2 C_g^2}{B_1}\sum_{t=1}^{T}\|\w_{t+1}-\w_{t}\|^{2} \\
\leq&  \frac{4B_1\beta^{2}}{m} \E\left[ \sum_{t=1}^{T}\left\| g(\w_{t}) -\u_{t}\right\|^{2}\right]\\
&+ 2B_1\beta^{2} C_g^2 I^2 \sum_{t=1}^T   \left\| \w_{t+1}-\w_t\right\|^2+\frac{9m^2 C_g^2}{B_1}\sum_{t=1}^{T}\|\w_{t+1}-\w_{t}\|^{2} \\
\leq&  \frac{4B_1\beta^{2}}{m} \E\left[ \sum_{t=1}^{T}\left\| g(\w_{t}) -\u_{t}\right\|^{2}\right]+\frac{11m^2 C_g^2}{B_1}\sum_{t=1}^{T}\|\w_{t+1}-\w_{t}\|^{2}
\end{split}
\end{equation*}
\end{proof}
We can also replace Lemma~\ref{lem:6} with following lemma.
\begin{lemma} \label{lem:9} 
With $\alpha I \leq 1$ ,  we have:
\begin{equation*}
\begin{split}
     &\E\left[\sum_{t=1}^T \left\|\z_{t} - \frac{1}{m} \sum_{i=1}^m \nabla g_{i}(\w_t)\nabla f_i(\u^i_{t-1})\right\|^2 \right]
    \leq  \frac{1}{\alpha}\left\|\z_{1} - \frac{1}{m} \sum_{i=1}^m \nabla g_{i}(\w_1)\nabla f_i(\u^i_{0})\right\|^2  \\
    &\quad\quad\quad\quad\quad\quad+\frac{8C_g^2L_f^2}{m \alpha} \E\left[\sum_{t=1}^T\|\u_{t} - \u_{t-1}\|^2 \right]+ \frac{8C_f^2 L_g^2}{ \alpha}\E\left[\sum_{t=1}^T\|\w_{t+1} - \w_t\|^2 \right]
\end{split}
\end{equation*}
\end{lemma}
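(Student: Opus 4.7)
The result follows the same one-step recursion template used for the non-SVRG gradient tracker in Lemma~\ref{lem:6}, with two important changes: (i) the $\alpha^2\sigma^2/B_1$ variance term is replaced by a \emph{distance}-based bound produced by the SVRG control variate inside $\h_t$, and (ii) the one-step inequality is telescoped across all iterations $t=1,\ldots,T$, which is precisely where the $1/\alpha$ prefactor in the statement originates. Denote the residual by $e_t := \z_t - \tfrac{1}{m}\sum_i \nabla g_i(\w_t)\nabla f_i(\u_{t-1}^i)$. Substituting~(\ref{eqn:z+}) and adding/subtracting $(1-\alpha)\cdot\tfrac{1}{m}\sum_i \nabla g_i(\w_{t-1})\nabla f_i(\u_{t-2}^i)$, I would rewrite $e_t = (1-\alpha)e_{t-1} + \alpha A_t + (1-\alpha) B_t$, where $A_t = \h_t - \tfrac{1}{m}\sum_i \nabla g_i(\w_t)\nabla f_i(\u_{t-1}^i)$ is the SVRG noise and $B_t$ is exactly the centered STORM increment that appears in Lemma~\ref{lem:6}'s derivation. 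Both have zero conditional mean given $\mathcal{F}_{t-1}$, so the cross terms with $e_{t-1}$ vanish in expectation, and Young's inequality gives $\E\|e_t\|^2 \le (1-\alpha)\E\|e_{t-1}\|^2 + 2\alpha^2\E\|A_t\|^2 + 2(1-\alpha)^2\E\|B_t\|^2$.

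For the two variance terms, the bound $\E\|B_t\|^2 \le \tfrac{2C_g^2L_f^2}{m}\|\u_{t-1}-\u_{t-2}\|^2 + 2C_f^2L_g^2\|\w_t-\w_{t-1}\|^2$ is identical to the one proved in Lemma~\ref{lem:6}. The new and critical piece is $\E\|A_t\|^2$. The observation is that $\h_t - \tfrac{1}{m}\sum_i\nabla f_i(\u_{\tau-1}^i)\nabla g_i(\w_\tau)$ is a $\mathcal{B}_1^t$-sample mean of the differences $\nabla f_i(\u_{t-1}^i)\nabla g_i(\w_t;\xi_t^i) - \nabla f_i(\u_{\tau-1}^i)\nabla g_i(\w_\tau;\xi_t^i)$, so by $\mathrm{Var}(X)\le \E\|X-c\|^2$ with $c = \tfrac{1}{m}\sum_i\nabla f_i(\u_{\tau-1}^i)\nabla g_i(\w_\tau)$, Jensen's inequality over $\mathcal{B}_1^t$ together with a triangle-inequality split using the Lipschitz/bounded-gradient assumptions yields the distance-based bound $\E\|A_t\|^2 \le \tfrac{2C_g^2L_f^2}{m}\|\u_{t-1}-\u_{\tau-1}\|^2 + 2C_f^2L_g^2\|\w_t-\w_\tau\|^2$. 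Notably, no $\sigma^2/B_1$ term survives: this is precisely the variance-reduction benefit of the SVRG control variate.

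For the final telescoping, iterating the one-step inequality from $t=2$ to $T$ and using the geometric sum $\sum_{s=2}^{t} (1-\alpha)^{t-s}\le 1/\alpha$ gives $\sum_{t=1}^T \E\|e_t\|^2 \le \tfrac{1}{\alpha}\E\|e_1\|^2 + \tfrac{1}{\alpha}\sum_t \bigl[2\alpha^2 \E\|A_t\|^2 + 2(1-\alpha)^2\E\|B_t\|^2\bigr]$. Plugging in the two variance bounds and applying the epoch-length trick of equation~(\ref{eqn:tau2}), namely $\sum_t \|\w_t-\w_\tau\|^2 \le I^2\sum_t \|\w_{t+1}-\w_t\|^2$ and the analogue for $\u$, the $A_t$ contribution carries a prefactor $\alpha^2 I^2$; the hypothesis $\alpha I \le 1$ caps this at $1$, so it sits on the same $1/\alpha$ scale as the $B_t$ contribution. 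Adding the two contributions with constants $4$ from $A_t$ plus $4$ from $B_t$ on each of the $\u$ and $\w$ sums produces exactly the $\tfrac{8C_g^2 L_f^2}{m\alpha}$ and $\tfrac{8C_f^2 L_g^2}{\alpha}$ coefficients in the lemma. The main obstacle in the argument is the bound on $\E\|A_t\|^2$: recognizing the full-batch term inside $\h_t$ as the right control variate so that Jensen's inequality, after recentering around $\tfrac{1}{m}\sum_i\nabla f_i(\u_{\tau-1}^i)\nabla g_i(\w_\tau)$, converts its variance into Lipschitz-controlled snapshot distances is the key insight; the remaining telescoping and $\alpha I\le 1$ bookkeeping are routine once that bound is in hand.
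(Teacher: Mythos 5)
Your proposed argument matches the paper's proof step for step: the same decomposition $e_t=(1-\alpha)e_{t-1}+\alpha A_t+(1-\alpha)B_t$ with zero-mean cross terms killed, the same variance bound on $\|A_t\|^2$ obtained by recentering around the full-batch snapshot and invoking the average Lipschitz assumption, the same Young split producing the factors $2\alpha^2$ and $2(1-\alpha)^2$, the same telescoping to get the $1/\alpha$ prefactor, and the same use of equation~(\ref{eqn:tau2}) together with $\alpha I\le 1$ to collapse the snapshot-distance sums onto the per-step displacement sums with total coefficient $4+4=8$. Correct, and no meaningful deviation from the paper's route.
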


\begin{proof}
First, since $\h_t$ is an unbiased estimation of $\frac{1}{m} \sum_{i=1}^m \nabla g_{i}(\w_t)\nabla f_i(\u^i_{t-1})$, we have:
\begin{equation*}
\begin{split} 
    &\E\left[\left\| \h_t - \frac{1}{m} \sum_{i=1}^m \nabla g_{i}(\w_t)\nabla f_i(\u^i_{t-1})\right\|^{2}\right]  \\
    =& \E\left[\left\| \frac{1}{B_1}\sum_{i \in \mathcal{B}_{1}^{t}}  \nabla f_{i}(\u_{t-1}^{i}) \nabla g_{i}(\w_t;\xi_t^{i}) - \frac{1}{B_1}\sum_{i \in \mathcal{B}_{1}^{t}}  \nabla f_{i}(\u_{\tau-1}^{i}) \nabla g_{i}(\w_{\tau};\xi_t^{i}) \right.\right. \\
    &\quad\quad\quad \left.\left. + \frac{1}{m}\sum_{i=1}^{m}  \nabla f_{i}(\u_{\tau-1}^{i}) \nabla g_{i}(\w_{\tau};\xi_t^{i})- \frac{1}{m} \sum_{i=1}^m \nabla g_{i}(\w_t)\nabla f_i(\u^i_{t-1})\right\|^{2}\right] \\
    \leq& \E\left[\left\| \frac{1}{B_1}\sum_{i \in \mathcal{B}_{1}^{t}}  \nabla f_{i}(\u_{t-1}^{i}) \nabla g_{i}(\w_t;\xi_t^{i}) - \frac{1}{B_1}\sum_{i \in \mathcal{B}_{1}^{t}}  \nabla f_{i}(\u_{\tau-1}^{i}) \nabla g_{i}(\w_{\tau};\xi_t^{i}) \right\|^{2}\right]\\
    \leq& \E\left[\frac{1}{B_1}\sum_{i \in \mathcal{B}_{1}^{t}}\left\|   \nabla f_{i}(\u_{t-1}^{i}) \nabla g_{i}(\w_t;\xi_t^{i}) -   \nabla f_{i}(\u_{\tau-1}^{i}) \nabla g_{i}(\w_{\tau};\xi_t^{i}) \right\|^{2}\right]\\
    =& 2C_f^2L_g^2 \left\|\w_{t} -\w_{\tau} \right\|^{2} +  \frac{2C_g^2L_f^2}{m}   \left\|\u_{t-1} -\u_{\tau-1} \right\|^{2}
\end{split}
\end{equation*}
Next, we have:
\begin{equation*}
\begin{split}
&\E\left[\|\z_{t} - \frac{1}{m} \sum_{i=1}^m \nabla g_{i}(\w_t)\nabla f_i(\u^i_{t-1})\|^2 \right]\\
=&\E\left[\bigg\| (1-\alpha) \left(\z_{t-1} -  \frac{1}{m} \sum_{i=1}^m \nabla g_{i}(\w_{t-1})\nabla f_i(\u^i_{t-2})\right) + \alpha\left(\h_t - \frac{1}{m} \sum_{i=1}^m \nabla g_{i}(\w_t)\nabla f_i(\u^i_{t-1}) \right)\right.\\
&\left. + (1-\alpha)\left(\frac{1}{B_1}\sum_{i \in \mathcal{B}_{1}^{t}} \nabla g_{i}(\w_t;\xi_t^{i}) \nabla f_{i}(\u_{t-1}^{i})- \frac{1}{B_1}\sum_{i \in \mathcal{B}_{1}^{t}}\nabla g_{i}(\w_{t-1};\xi_t^{i})\nabla f_{i}(\u_{t-2}^{i})\right.\right.\\
&\left.\left. \quad\quad\quad\quad\quad  - \frac{1}{m} \sum_{i=1}^m \nabla g_{i}(\w_{t})\nabla f_i(\u^i_{t-1}) + \frac{1}{m} \sum_{i=1}^m \nabla g_{i}(\w_{t-1})\nabla f_i(\u^i_{t-2})\right)  \bigg\|^2\right]\\
& \leq \E\bigg[(1-\alpha)^2 \left\|\z_{t-1} - \frac{1}{m} \sum_{i=1}^m \nabla g_{i}(\w_{t-1})\nabla f_i(\u^i_{t-2})\right\|^2  + 4\alpha^2 C_f^2L_g^2 \left\|\w_{t} -\w_{\tau} \right\|^{2} \\
&\quad\quad+  \frac{4\alpha^2 C_g^2L_f^2}{m}   \left\|\u_{t-1} -\u_{\tau-1} \right\|^{2}  \\
&\quad\quad + 2(1-\alpha)^2\frac{1}{B_1}\sum_{i \in \mathcal{B}_{1}^{t}}\left\| \nabla g_{i}(\w_t;\xi_t^{i}) \nabla f_{i}(\u_{t-1}^{i})- \nabla g_{i}(\w_{t-1};\xi_t^{i})\nabla f_{i}(\u_{t-2}^{i})\right\|^2\bigg]\\
& \E\bigg[\leq (1-\alpha) \|\z_{t-1} - \frac{1}{m} \sum_{i=1}^m \nabla g_{i}(\w_{t-1})\nabla f_i(\u^i_{t-2})\|^2  + 4\alpha^2 C_f^2L_g^2 \left\|\w_{t} -\w_{\tau} \right\|^{2}\\
&\quad\quad +  \frac{4\alpha^2 C_g^2L_f^2}{m}   \left\|\u_{t-1} -\u_{\tau-1} \right\|^{2}  + \frac{4C_g^2L_f^2}{m} \|\u_{t-1} - \u_{t-2}\|^2 + {4C_f^2 L_g^2}\|\w_{t} - \w_{t-1}\|^2 \bigg]
\end{split}
\end{equation*}
The first inequality is due to the fact that the last two terms equal zero in expectation.

Summing up, we have:
\begin{equation*}
\begin{split}
    &\sum_{t=1}^T \left\|\z_{t} - \frac{1}{m} \sum_{i=1}^m \nabla g_{i}(\w_t)\nabla f_i(\u^i_{t-1})\right\|^2 \\
    \leq & \frac{1}{\alpha}\left\|\z_{1} - \frac{1}{m} \sum_{i=1}^m \nabla g_{i}(\w_1)\nabla f_i(\u^i_{0})\right\|^2  + 4\alpha C_f^2L_g^2 \sum_{t=1}^T \left\|\w_{t} -\w_{\tau} \right\|^{2}  \\
      &+\frac{4\alpha C_g^2L_f^2}{m}   \sum_{t=1}^T \left\|\u_{t-1} -\u_{\tau-1} \right\|^{2} + \frac{4C_g^2L_f^2}{m \alpha} \sum_{t=1}^T\|\u_{t} - \u_{t-1}\|^2 + \frac{4C_f^2 L_g^2}{ \alpha}\sum_{t=1}^T\|\w_{t+1} - \w_t\|^2  \\
        \leq & \frac{1}{\alpha}\left\|\z_{1} - \frac{1}{m} \sum_{i=1}^m \nabla g_{i}(\w_1)\nabla f_i(\u^i_{0})\right\|^2  +
      4\alpha C_f^2L_g^2 I^2 \sum_{t=1}^T \left\|\w_{t+1} -\w_{t} \right\|^{2}  \\
      &+\frac{4\alpha C_g^2L_f^2 I^2}{m}   \sum_{t=1}^T \left\|\u_{t} -\u_{t-1} \right\|^{2}  + \frac{4C_g^2L_f^2}{m \alpha} \sum_{t=1}^T\|\u_{t} - \u_{t-1}\|^2 + \frac{4C_f^2 L_g^2}{ \alpha}\sum_{t=1}^T\|\w_{t+1} - \w_t\|^2  \\
       \leq & \frac{1}{\alpha}\left\|\z_{1} - \frac{1}{m} \sum_{i=1}^m \nabla g_{i}(\w_1)\nabla f_i(\u^i_{0})\right\|^2  + \frac{8C_g^2L_f^2}{m \alpha} \sum_{t=1}^T\|\u_{t} - \u_{t-1}\|^2 + \frac{8C_f^2 L_g^2}{ \alpha}\sum_{t=1}^T\|\w_{t+1} - \w_t\|^2 
\end{split}
\end{equation*}
The last inequality is due to $\alpha I \leq 1$.
\end{proof}

\paragraph{The rest proof of Theorem~\ref{thm:3}}
According to Lemma~\ref{lem:5}, we have:
\begin{equation*}
\begin{split}
\sum_{t=1}^{T}\|\z_{t} - \nabla F(\w_t)\|^2 
&\leq 4\sum_{t=1}^{T}\bigg\|\z_{t} -  \frac{1}{m} \sum_{i=1}^m \nabla g_{i}(\w_t)\nabla f_i(\u^i_{t-1})\bigg\|^2 \\
&\quad + \frac{4C_g^2L_f^2}{m}\sum_{t=1}^{T} \|  \u_t - \u_{t-1} \|^2 +\frac{2 C_g^2L_f^2}{m} \sum_{t=1}^{T}\|  \u_t - g(\w_t) \|^2 
\end{split}
\end{equation*}
We use Lemma~\ref{lem:9} to replace $\sum_{t=1}^T \left\|\z_{t} - \frac{1}{m} \sum_{i=1}^m \nabla g_{i}(\w_t)\nabla f_i(\u^i_{t-1})\right\|^2$:
\begin{equation*}
\begin{split}
    &\E\left[\sum_{t=1}^{T}\|\z_{t} - \nabla F(\w_t)\|^2 \right] \\
    \leq& \frac{4}{\alpha}\left\|\z_{1} - \frac{1}{m} \sum_{i=1}^m \nabla g_{i}(\w_1)\nabla f_i(\u^i_{0})\right\|^2 +\frac{32C_g^2L_f^2}{m \alpha} \sum_{t=1}^T\|\u_{t} - \u_{t-1}\|^2   \\
    &\quad + \frac{32C_f^2 L_g^2}{ \alpha}\sum_{t=1}^T\|\w_{t+1} - \w_t\|^2 + \frac{4C_g^2L_f^2}{m} \sum_{t=1}^{T} \|  \u_t - \u_{t-1} \|^2 +\frac{2 C_g^2L_f^2}{m} \sum_{t=1}^{T}\|  \u_t - g(\w_t) \|^2 \\
    \leq &\frac{4}{\alpha}\left\|\z_{1} - \frac{1}{m} \sum_{i=1}^m \nabla g_{i}(\w_1)\nabla f_i(\u^i_{0})\right\|^2 +\frac{36C_g^2L_f^2}{m \alpha} \E\left[\sum_{t=1}^T\|\u_{t} - \u_{t-1}\|^2 \right]  \\
    &\quad  +\frac{32C_f^2 L_g^2}{ \alpha}\E\left[\sum_{t=1}^T\|\w_{t+1} - \w_t\|^2 \right]+\frac{2 C_g^2L_f^2}{m} \E\left[\sum_{t=1}^{T}\|  \u_t - g(\w_t) \|^2 \right]
\end{split}
\end{equation*}
Set $\beta B_1 \leq m \alpha$. We use Lemma~\ref{lem:7} to replace $\E\left[\sum_{t=1}^{T} \|\u_{t}-\u_{t-1}\|^2\right]$ (set $\u_0 = \u_1$):
\begin{equation*}
\begin{split}
    &\E\left[\sum_{t=1}^{T}\|\z_{t} - \nabla F(\w_t)\|^2 \right] \\
    &\leq \frac{4}{\alpha}\left\|\z_{1} - \frac{1}{m} \sum_{i=1}^m \nabla g_{i}(\w_1)\nabla f_i(\u^i_{0})\right\|^2 + \frac{144(C_g^2L_f^2)B_1 \beta^{2}}{m^2 \alpha}\E\left[ \sum_{t=1}^{T}\left\|\u_{t}- g(\w_{t})\right\|^{2}\right]   \\
    &\quad +\frac{396 m(C_g^4L_f^2)}{\alpha B_1}\sum_{t=1}^{T}\|\w_{t+1}-\w_{t}\|^{2} + \frac{32C_f^2 L_g^2}{ \alpha}\E\left[\sum_{t=1}^T\|\w_{t+1} - \w_t\|^2 \right]\\
    &\quad +\frac{2 C_g^2L_f^2}{m} \E\left[\sum_{t=1}^{T}\|  \u_t - g(\w_t) \|^2 \right] \\
    &\leq \frac{4}{\alpha}\left\|\z_{1} - \frac{1}{m} \sum_{i=1}^m \nabla g_{i}(\w_1)\nabla f_i(\u^i_{0})\right\|^2+\frac{428 m C}{\alpha B_1}\E\left[\sum_{t=1}^T\|\w_{t+1} - \w_t\|^2 \right]\\
    &\quad+\frac{146 C_g^2L_f^2}{m} \E\left[\sum_{t=1}^{T}\|  \u_t - g(\w_t) \|^2 \right] 
\end{split}
\end{equation*}
We use Lemma~\ref{lem:main2} to replace $\mathbb{E}\left[\sum_{t=1}^{T}\left\|\u_t - g(\w_t)\right\|^{2}\right]$:
\begin{equation*}
\begin{split}
    &\E\left[\sum_{t=1}^{T}\|\z_{t} - \nabla F(\w_t)\|^2 \right] \\
    \leq& \frac{4}{\alpha}\left\|\z_{1} - \frac{1}{m} \sum_{i=1}^m \nabla g_{i}(\w_1)\nabla f_i(\u^i_{0})\right\|^2 +\frac{428 m C}{\alpha B_1} \sum_{t=1}^T\|\w_{t+1} - \w_t\|^2 \\
    &\quad\quad + \frac{146 C_g^2L_f^2\E\left[\left\|\u_{1}-g\left(\w_{1}\right)\right\|^{2}\right]}{B_1 \beta}+\frac{1460 m^{2} C_g^4L_f^2}{B_1^2 \beta} \sum_{t=1}^{T}\left\|\w_{t+1} - \w_{t}\right\|^{2}\\
    \leq& \frac{4}{\alpha}\left\|\z_{1} - \frac{1}{m} \sum_{i=1}^m \nabla g_{i}(\w_1)\nabla f_i(\u^i_{0})\right\|^2  +  \frac{146 C_g^2L_f^2\E\left[\left\|\u_{1}-g\left(\w_{1}\right)\right\|^{2}\right]}{B_1 \beta}\\
    &\quad\quad+\frac{1888 m^{2} C}{B_1^2 \beta} \sum_{t=1}^{T}\left\|\w_{t+1} - \w_{t}\right\|^{2}\\
    \leq& \frac{\Delta_0}{\alpha T_0} +  \frac{\Delta_0}{\beta T_0}  +\frac{1888 m^{2} C}{B_1^2 \beta} \sum_{t=1}^{T}\left\|\w_{t+1} - \w_{t}\right\|^{2}\\
\end{split}
\end{equation*}
Set $\frac{1888 m^{2} C \eta^{2}}{B_1^2 \beta} \leq \frac{1}{2}$. We have:
\begin{equation*}
\begin{split}
    \mathbb{E}\left[ \sum_{t=1}^{T} \|\z_{t} - \nabla F(\w_{t})\|^2 \right] \leq & \frac{\Delta_0}{\alpha T_0} +  \frac{\Delta_0}{\beta T_0}  +\frac{1}{2} \sum_{t=1}^{T}\left\|\z_{t}\right\|^{2} 
\end{split}
\end{equation*}
According to Lemma~\ref{lem:1}, we have:
\begin{equation*}
\begin{split}
    \E\left[\sum_{t=1}^{T}\|\nabla F(\w_t) \|^2\right] \leq& \frac{2 F(\w_1)}{\eta} + \sum_{t=1}^{T} \E\left[\|\z_{t}- \nabla F(\w_t) \|^2\right] -\frac{1}{2} \sum_{t=1}^{T}\left\|\z_{t}\right\|^{2} \\
    \leq& \frac{2 F(\w_1)}{\eta} + \frac{\Delta_0}{\alpha T_0} +  \frac{\Delta_0}{\beta T_0}  
\end{split}
\end{equation*}
Finally,
\begin{equation*}
\begin{split}
    \frac{1}{T}\E\left[\sum_{t=1}^{T}\|\nabla F(\w_t) \|^2\right]
    \leq \frac{2 F(\w_1)}{\eta T} + \frac{\Delta_0}{\alpha T_0 T} +  \frac{\Delta_0}{\beta T_0 T} 
\end{split}
\end{equation*}
Note that the sample complexity is $\left(B_1 B_2 T + \frac{m n T}{I}\right)$. To ensure the first term and the second term at the same order, we set $I = \left(\frac{m n}{B_1 B_2}\right)$. Also, since we assume that $\alpha I \leq 1$ and $\beta I \leq \frac{m}{B_1}$, we directly set $\alpha = \frac{B_1 B_2}{m n}$ and $\beta = \frac{B_2}{n}$. This setting also satisfies the requirement $B_1 \beta \leq m \alpha$. We also require $\frac{1888 m^2 C \eta^2}{B_1^2 \beta} \leq \frac{1}{2}$. So, we set $\eta = \mathcal{O}(\frac{B_1 \sqrt{B_2}}{m \sqrt{n}})$.
With $T = \mathcal{O}\left( \frac{m \sqrt{n}}{ B_1 \sqrt{B_2} \epsilon^2  } \right)$ and $T_0 = \mathcal{O}\left(\frac{\sqrt{n}}{\sqrt{B_2}}\right)$, We have: $\frac{1}{T}\E\left[\sum_{t=1}^{T}\|\nabla F(\w_t) \|^2\right] \leq \epsilon^2$.   

\subsection{Proof of Theorem~\ref{thm:6}}
The analysis is very similar form Theorem~\ref{thm:4}. We still use Algorithm~\ref{alg:3} but employ MSVR-v3 instead. Also, we do not need to decrease $\alpha$, $\beta$, $\eta$ and increase $T$ during each stage. Let's consider the first stage, $ 4\left\|\z_{1} - \frac{1}{m} \sum_{i=1}^m \nabla g_{i}(\w_1)\nabla f_i(\u^i_{0})\right\|^2 \leq 4C \leq \mu\epsilon_1 $, $\frac{146 C_g^2L_f^2}{m}\left\|\u_{1} -  g(\w_1)\right\|^2 \leq 146 C \leq \mu\epsilon_1$ and $F(\w_1)-F_{*} \leq  \Delta_F \leq \epsilon_1$, where we set $\epsilon_1 = \max\{\Delta_F, \frac{146 C}{\mu}  \}$. Note that in below the numerical subscripts denote the stage index $\{1, \ldots, S\}$. Set $\alpha = \frac{B_1 B_2}{m n}$, $\beta = \frac{B_2}{n}$, $\eta = \mathcal{O}(\frac{B_1 \sqrt{B_2}}{m \sqrt{n}})$ and $T =\O\left(\max \left\{\frac{m n}{B_1 B_2}, \frac{m \sqrt{n}}{\mu B_1 \sqrt{B_2}}\right\} \right)$. 

Starting form the second stage, we would prove by induction. Suppose at the stage $s-1$, we have $ F\left(\w_{s-1}\right)-F_{*} \leq \epsilon_{s-1}$, $ 4 \left\|\z_{s-1} - \frac{1}{m} \sum_{i=1}^m \nabla g_{i}(\w_1)\nabla f_i(\u^i_{s-2})\right\|^2 \leq \mu  \epsilon_{s-1}$, and $\frac{146 C_g^2 L_f^2}{m} \left\|\u_{s-1} -  g(\w_{s-1})\right\|^2 \leq \mu \epsilon_{s-1}$. Then at $s$ stage, we have:
\begin{equation*}
\begin{split}
    F(\w_s)-F_{*} &\leq \frac{1}{2\mu} \Norm{\nabla F(\w_s)}^2\\
    & \leq \frac{\epsilon_{s-1}}{\mu \eta T} + \frac{\epsilon_{s-1}}{\alpha T} + \frac{m \epsilon_{s-1}}{\beta B_1 T} \\
    & \leq \epsilon_s
\end{split}
\end{equation*}
On the other hand, following the very similar analysis in Theorem~\ref{thm:4}, we have:
\begin{equation*}
\begin{split}
    4 \left\|\z_{s} - \frac{1}{m} \sum_{i=1}^m \nabla g_{i}(\w_1)\nabla f_i(\u^i_{s-1})\right\|^2 \leq  \mu \epsilon_s \\
    \frac{146 C_g^2 L_f^2}{m} \left\|\u_{s} -  g(\w_{s})\right\|^2 \leq \mu \epsilon_{s} 
\end{split}
\end{equation*}

We proved that $F\left(\w_{s}\right)-F_{*} \leq \epsilon_{s}$. That is to say, $F\left(\w_{S}\right)-F_{*} \leq \epsilon$ when $S = \log _{2}\left(\frac{2\epsilon_{1}}{\epsilon}\right) = \log _{2}\left(\frac{L}{ \epsilon}\right)$, and the iteration complexity until this stage is computed as:
\begin{equation*}
\begin{split}
    \sum_{s=1}^{S} T_{s}  \leq \O\left(\max \left\{\frac{m n}{B_1 B_2}, \frac{m \sqrt{n}}{\mu B_1 \sqrt{B_2}} \right\} \cdot
    \log\frac{1}{\epsilon}\right)
\end{split}
\end{equation*}
When $F(\w)$ is convex, we define $\hat{F}(\w) = F(\w) + \frac{\mu}{2}\|\w\|^2$. We know that $\hat{F}(\w)$ is $\mu$-strongly convex, which implies $\mu$-PL condition. We have proved: for any $\delta > 0$,  there exist $T=\O\left(\frac{m \sqrt{n}}{\mu B_1 \sqrt{B_2}}  \cdot \log\frac{1}{\epsilon}\right)$ such that $\hat{F}(\w_{T}) - \hat{F}_{*} \leq \delta$.  It indicates that $F(\w_{T}) - F_{*} \leq \delta + \frac{\mu}{2} \|\w_{*}\|^2 -  \frac{\mu}{2} \|\w_{T}\|^2 \leq \delta + \frac{\mu}{2}D$. For any $\epsilon > 0$, if we choose $\mu = \frac{\epsilon}{D}$ and $\delta = \frac{\epsilon}{2}$, we get $F(\w_{T}) - F_{*} \leq \epsilon$, for some $T=\O\left(\frac{m \sqrt{n}}{\epsilon B_1 \sqrt{B_2}}  \cdot
    \log\frac{1}{\epsilon}\right)$.

\section{MSVR with Adaptive Learning Rates} \label{adaptive}
Now we show that the proposed MSVR method can be extended to adaptive learning rates and remains the same sample complexity. To use adaptive learning rates, we can revise the weight update step $\w_{t+1} = \w_t - \eta_t \z_t$ in origin MSVR method as follows:
\begin{equation}\label{rule1}
    \begin{split}
        \w_{t+1} &= \w_t - \frac{\eta_t}{\sqrt{\h_{t}}+\delta} \Pi_{L_f}[\z_t],\\
        \mathbf{h}_{t}^{\prime}&=\left(1-\beta_{t}^{\prime}\right) \mathbf{h}_{t-1}^{\prime}+\beta_{t}^{\prime} \z_{t}^{2},
    \end{split}
\end{equation}
where $\delta > 0$ is a parameter to avoid dividing zero, $\Pi_{L_f}$ denotes the projection onto the ball with radius $L_f$ and $\mathbf{h}_{t} = \mathbf{h}_{t}^{\prime}$ (Adam-style) or $\mathbf{h}_{t} =\max \left(\mathbf{h}_{t-1}, \mathbf{h}_{t}^{\prime}\right)$ (AMSGrad-style).  Inspired by the recent study of Adam-style methods~\citep{guo2022stochastic}, we can give the sample complexity of the Adaptive MSVR using similar analysis. We show the proof of adaptive MSVR-v2 for example:
\begin{thm}
	If we choose parameters $\alpha_{t+1} = \O(\frac{m \eta_{t}^2}{B_1})$, $\beta_{t+1} = \O\left(\frac{m^2 \eta_t^2}{B_1^2}\right)$, $a=O(\frac{m B_2}{B_1}$) and $\eta_t = \O\left((\frac{B_1 \sqrt{B_2}}{m})^{2/3}(a+t)^{-1/3} \right)$, Adaptive MSVR-v2 with learning rate defined in (\ref{rule1}), can obtain a stationary point in $ \mathcal{O}\left(\frac{m \epsilon^{-3} }{ B_1 \sqrt{B_2} } \right)$ iterations.
\end{thm}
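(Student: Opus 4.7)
The strategy is to mirror the proof of Theorem~\ref{thm:2} but replace the vanilla descent analysis (Lemma~\ref{lem:1}) with an adaptive descent inequality in the spirit of~\cite{guo2022stochastic}. Concretely, I would keep Lemmas~\ref{lem:main1}, \ref{lem:2}, \ref{lem:5}, and \ref{lem:6} unchanged since they control the MSVR estimator error $\|\u_{t}-g(\w_t)\|^2$ and the STORM-type gradient tracking error $\|\z_t - \tfrac{1}{m}\sum_i \nabla g_i(\w_t)\nabla f_i(\u^i_{t-1})\|^2$, and their validity does not depend on how $\w_t$ is updated beyond a bound on $\|\w_{t+1}-\w_t\|^2$. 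The projection $\Pi_{L_f}[\z_t]$ plus the factor $1/(\sqrt{\h_t}+\delta)$ yields an explicit uniform bound $\|\w_{t+1}-\w_t\|\le \eta_t L_f/\delta$, so all recurrences go through.

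\textbf{Key steps.} First, I would derive an adaptive descent lemma: by $L_F$-smoothness, for $\tilde{\z}_t:=\Pi_{L_f}[\z_t]$ and $V_t:=\mathrm{diag}(1/(\sqrt{\h_t}+\delta))$, one obtains
\begin{equation*}
F(\w_{t+1}) \le F(\w_t) - \eta_t \nabla F(\w_t)^\top V_t \tilde{\z}_t + \tfrac{L_F \eta_t^2}{2}\|V_t \tilde{\z}_t\|^2.
\end{equation*}
Splitting $\nabla F(\w_t)^\top V_t \tilde{\z}_t = \nabla F(\w_t)^\top V_t \nabla F(\w_t) + \nabla F(\w_t)^\top V_t(\tilde{\z}_t - \nabla F(\w_t))$ and using Young's inequality gives a bound of the form
\begin{equation*}
F(\w_{t+1}) \le F(\w_t) - \tfrac{\eta_t}{2(G+\delta)}\|\nabla F(\w_t)\|^2 + \tfrac{\eta_t}{2\delta}\|\z_t - \nabla F(\w_t)\|^2 + \tfrac{L_F \eta_t^2 L_f^2}{2\delta^2},
\end{equation*}
where $G$ is an a priori upper bound on $\sqrt{\h_t}$ that follows from $\|\Pi_{L_f}[\z_t]\|\le L_f$ and the Adam/AMSGrad update. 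Second, I would construct the same Lyapunov potential as in Theorem~\ref{thm:2}, namely $\Gamma_t = F(\w_t) + \tfrac{B_1}{c_0\eta_{t-1} m}\Delta_t$, and telescope using the recurrences from Lemmas~\ref{lem:main1} and \ref{lem:6}, choosing $\alpha_{t+1}=\Theta(m\eta_t^2/B_1)$ and $\beta_{t+1}=\Theta(m^2\eta_t^2/B_1^2)$ exactly as before, but with $\eta_t$ absorbing the effective adaptive prefactor $1/(G+\delta)$.

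\textbf{Main obstacle.} The delicate part is controlling the adaptive preconditioner: one needs matching lower and upper bounds $\delta\le \sqrt{\h_t}+\delta\le G+\delta$ uniformly in $t$, so that the signal term $\nabla F(\w_t)^\top V_t \nabla F(\w_t)$ keeps a non-negligible fraction of $\|\nabla F(\w_t)\|^2$, while the error and second-order terms remain proportional to constants. The projection onto the ball of radius $L_f$ is exactly what supplies the upper bound on $\h_t$, and the $+\delta$ regularization supplies the lower bound; the AMSGrad monotonicity (if used) avoids tail oscillations. After this uniform equivalence, the analysis reduces term-by-term to the one in Theorem~\ref{thm:2}: with $\eta_t=\Theta((B_1\sqrt{B_2}/m)^{2/3}(a+t)^{-1/3})$, the telescoped potential yields $\tfrac{1}{T}\sum_t \E\|\nabla F(\w_t)\|^2 = \O((m/(TB_1\sqrt{B_2}))^{2/3})$, giving the claimed $\O(m\epsilon^{-3}/(B_1\sqrt{B_2}))$ iteration complexity. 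The bookkeeping constants differ from Theorem~\ref{thm:2} only by factors depending on $\delta$, $L_f$ and $G$, which are treated as problem-dependent constants absorbed into the $\mathcal{O}(\cdot)$.
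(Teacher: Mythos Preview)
Your overall strategy---replace Lemma~\ref{lem:1} by an adaptive descent inequality, keep the MSVR and STORM recursions intact, reuse the Lyapunov potential $\Gamma_t = F(\w_t)+\tfrac{B_1}{c_0\eta_{t-1}m}\Delta_t$, and exploit uniform two-sided bounds $c_l\le 1/(\sqrt{\h_t}+\delta)\le c_u$ on the preconditioner---is exactly the route the paper takes. However, the specific descent inequality you write down is too weak, and with it the telescoping does not close.

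The gap is the missing negative $\|\z_t\|^2$ term. Your Young-inequality split produces
\[
F(\w_{t+1}) \le F(\w_t) - \tfrac{\eta_t}{2(G+\delta)}\|\nabla F(\w_t)\|^2 + \tfrac{\eta_t}{2\delta}\|\z_t-\nabla F(\w_t)\|^2 + \tfrac{L_F\eta_t^2 L_f^2}{2\delta^2},
\]
with no $-\Theta(\eta_t)\|\z_t\|^2$ contribution. But in the Lyapunov step, the combined MSVR/STORM recursion injects a term $\tfrac{C'}{c_0\eta_t}\|\w_{t+1}-\w_t\|^2$ (cf.\ the $\tfrac{48mC}{B_1}\|\w_{t+1}-\w_t\|^2$ in the $\Delta_t$ recursion, scaled by $\tfrac{B_1}{c_0\eta_t m}$). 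Using your uniform bound $\|\w_{t+1}-\w_t\|^2\le (L_f/\delta)^2\eta_t^2$, this becomes a constant multiple of $\eta_t$ per step---the \emph{same order} as the signal $-\Theta(\eta_t)\|\nabla F(\w_t)\|^2$. With $\eta_t\asymp (a+t)^{-1/3}$ one gets $\sum_t\eta_t\asymp T^{2/3}$, so after dividing by $\eta_T\asymp T^{-1/3}$ the average gradient norm is only $O(1)$, not $o(1)$. No choice of $c_0$ fixes this, since $c_0$ cannot depend on $T$ or $\epsilon$.

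The paper resolves this by invoking a sharper adaptive descent lemma (Lemma~3 of \cite{guo2022stochastic}), obtained via the polarization identity $-a^\top b=\tfrac12(\|a-b\|^2-\|a\|^2-\|b\|^2)$ rather than Young:
\[
F(\w_{t+1}) \le F(\w_t) + \tfrac{\eta_t c_u}{2}\|\nabla F(\w_t)-\z_t\|^2 - \tfrac{\eta_t c_l}{2}\|\nabla F(\w_t)\|^2 - \tfrac{\eta_t c_l}{4}\|\z_t\|^2.
\]
The extra $-\tfrac{\eta_t c_l}{4}\|\z_t\|^2$ then cancels the $\|\w_{t+1}-\w_t\|^2$ contribution exactly as in Theorem~\ref{thm:2}, using $\|\w_{t+1}-\w_t\|^2\le c_u^2\eta_t^2\|\z_t\|^2$ rather than the crude constant bound. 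After that, your plan goes through verbatim with $c_0=\Theta(Cc_u/c_l)$ and all remaining constants shifted by factors of $c_l,c_u$.
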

\textbf{Remark:} The sample complexity is still at the order of $\mathcal{O}\left(\epsilon^{-3}\right)$. For MSVR-v1 and MSVR-v3, or under the convexity or PL condition, adaptive method can still get the same complexity as the origin rate using a very similar analysis.
\begin{proof}
Note that since the norm of estimated gradient $\left\| \z_{t} \right\|$ is bounded, the value of the learning rate scaling factor $\mathbf{c}=1 /\left(\sqrt{\mathbf{h}_{t}}+\delta\right)$ is also upper bounded and lower bounded, which can be presented as $c_{l} \leq\left\|\mathbf{c}\right\|_{\infty} \leq c_{u}$. (Note that projection onto a ball of radius $C_F$ does not change the analysis, since $\nabla F$ is also in this ball.) With this property, We have: 
\begin{lemma}\label{lem:starter_} (Lemma 3 in \citep{guo2022stochastic})
    For $\mathbf{w}_{t+1}=\mathbf{w}_{t}-\tilde{\eta}_{t} \mathbf{z}_{t}$, with $\eta_t c_{l} \leq \tilde{\eta}_{t} \leq \eta_t c_{u} $ and $ \eta_t L_F\leq {c_l}/{2 c_u^2 }$, we have following guarantee:
	\begin{align*}
		F(\w_{t+1}) \leq F(\w_t) + \frac{\eta_t c_{u}}{2}\Norm{\nabla F(\w_t) - \z_t}^2 - \frac{\eta_t c_{l}}{2}\Norm{\nabla F(\w_t)}^2 - \frac{\eta_t c_{l}}{4}\Norm{\z_t}^2.
	\end{align*}
\end{lemma}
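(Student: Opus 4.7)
The plan is to start from $L_F$-smoothness of $F$, treat the adaptive step as an element-wise/diagonal scaling, and then use a weighted ``polarization identity'' to split the cross term $-\langle\nabla F(\w_t),\tilde\eta_t\z_t\rangle$ cleanly into a gradient-error piece, a $\|\nabla F\|^2$ piece, and a $\|\z_t\|^2$ piece. The step size condition $\eta_tL_F\le c_l/(2c_u^2)$ then lets us absorb the smoothness quadratic into the $\|\z_t\|^2$ term.

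Concretely, I would first write the $L_F$-smoothness inequality
\[
F(\w_{t+1})\le F(\w_t)+\langle\nabla F(\w_t),\,\w_{t+1}-\w_t\rangle+\frac{L_F}{2}\Norm{\w_{t+1}-\w_t}^2,
\]
and substitute $\w_{t+1}-\w_t=-\tilde\eta_t\z_t$, where I view $\tilde\eta_t$ as the positive diagonal operator with entries in $[\eta_tc_l,\eta_tc_u]$. For any $x$, write $\Norm{x}_{\tilde\eta_t}^2:=\langle x,\tilde\eta_t x\rangle$, which satisfies $\eta_tc_l\Norm{x}^2\le\Norm{x}_{\tilde\eta_t}^2\le\eta_tc_u\Norm{x}^2$. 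The key algebraic move is the identity
\[
-\langle \nabla F(\w_t),\tilde\eta_t\z_t\rangle=\tfrac12\Norm{\nabla F(\w_t)-\z_t}_{\tilde\eta_t}^2-\tfrac12\Norm{\nabla F(\w_t)}_{\tilde\eta_t}^2-\tfrac12\Norm{\z_t}_{\tilde\eta_t}^2,
\]
which is just the expansion of $\tfrac12\Norm{a-b}_{\tilde\eta_t}^2$ with $a=\nabla F(\w_t)$, $b=\z_t$. Applying the bound $c_u$ on the first term and $c_l$ on the other two gives
\[
-\langle\nabla F(\w_t),\tilde\eta_t\z_t\rangle\le\tfrac{\eta_tc_u}{2}\Norm{\nabla F(\w_t)-\z_t}^2-\tfrac{\eta_tc_l}{2}\Norm{\nabla F(\w_t)}^2-\tfrac{\eta_tc_l}{2}\Norm{\z_t}^2.
\]

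For the quadratic term from smoothness I use $\Norm{\tilde\eta_t\z_t}^2\le(\eta_tc_u)^2\Norm{\z_t}^2$, which yields an extra $\tfrac{L_F\eta_t^2c_u^2}{2}\Norm{\z_t}^2$. Combining, the coefficient of $\Norm{\z_t}^2$ is $\tfrac{L_F\eta_t^2c_u^2}{2}-\tfrac{\eta_tc_l}{2}$. Under the hypothesis $\eta_tL_F\le c_l/(2c_u^2)$, the first summand is at most $\tfrac{\eta_tc_l}{4}$, so the net coefficient is at most $-\tfrac{\eta_tc_l}{4}$, producing exactly the stated inequality.

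The main (mild) obstacle is keeping the element-wise scaling notation unambiguous: $\tilde\eta_t$ acts coordinate-wise, so one must justify that the weighted polarization identity is valid and that $\Norm{\tilde\eta_t\z_t}^2\le(\eta_tc_u)^2\Norm{\z_t}^2$ actually uses the $\ell_\infty$ bound $\Norm{\mathbf c}_\infty\le c_u$ on the scaling vector coordinates rather than any spectral assumption. Once that is stated cleanly, all remaining steps are routine algebra and the choice of step size threshold is tight enough to leave a strictly negative $\Norm{\z_t}^2$ residual, matching the claim.
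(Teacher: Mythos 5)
Your proof is correct, and it is the standard argument for this type of adaptive-step descent lemma. The paper itself does not prove this statement but cites it as Lemma 3 of Guo et al.\ (2022); the route you take (smoothness, the weighted polarization identity $-\langle a,\tilde\eta_t b\rangle=\tfrac12\|a-b\|_{\tilde\eta_t}^2-\tfrac12\|a\|_{\tilde\eta_t}^2-\tfrac12\|b\|_{\tilde\eta_t}^2$, and absorbing the $\tfrac{L_F}{2}\|\tilde\eta_t\z_t\|^2$ term via $\eta_tL_F\le c_l/(2c_u^2)$) is exactly how this lemma is established there, and your bookkeeping on the $\|\z_t\|^2$ coefficient, $\tfrac{L_F\eta_t^2c_u^2}{2}-\tfrac{\eta_tc_l}{2}\le-\tfrac{\eta_tc_l}{4}$, is right.
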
	
Then very similar to the proof to Theorem~\ref{thm:2}.
Denote  $\Gamma_{t}=F(\w_t) + \frac{B_1}{c_0 \eta_{t-1} m}\Delta_t$, where $\Delta_t = +\frac{1}{m}\Norm{\u_{t} - g(\w_{t})}^2+ \left\|\z_{t} - \frac{1}{m} \sum_{i=1}^m \nabla g_{i}(\w_t)\nabla f_i(\u^i_{t-1})\right\|^2$. We have:
\begin{equation*}
    \begin{split}
        &\Gamma_{t+1}-\Gamma_{t} \\
	    = &F(\w_{t+1}) - F(\w_t)+\frac{B_1}{c_0 \eta_{t} m}\Delta_{t+1}  - \frac{B_1}{c_0 \eta_{t-1} m }\Delta_t \\
	    \leq & - \frac{\eta_t c_l}{2} \|\nabla F(\w_t)\|^2 + \frac{\eta_t c_u}{2} \|\z_{t} - \nabla F(\w_t)\|^2 - \frac{\eta_t c_l}{4} \Norm{\z_t}^2 \\
	    &  +  \left(\frac{B_1}{c_0 \eta_{t} m} - \frac{B_1^2 \beta_{t+1}}{2 m^2 c_0 \eta_{t}}-\frac{B_1}{c_0 \eta_{t-1} m }\right)\Delta_t  +\frac{48 C}{c_0 \eta_{t}}\left\|\w_{t+1}-\w_{t}\right\|^{2}+ \frac{10 B_1^2 \beta_{t+1}^{2} C}{B_2 m^2 c_0 \eta_{t}} +\frac{2\alpha_{t+1}^2 C}{m c_0 \eta_{t}}\\
	    \leq & - \frac{\eta_t c_l}{2} \|\nabla F(\w_t)\|^2  - \frac{\eta_t c_l}{4} \Norm{\z_t}^2 +\frac{64 C c_u}{c_0 \eta_{t}}\left\|\w_{t+1}-\w_{t}\right\|^{2}+ \frac{14 B_1^2 \beta_{t+1}^{2} C c_u}{B_2 m^2 c_0 \eta_{t}} +\frac{2\alpha_{t+1}^2 C c_u}{m c_0 \eta_{t}}\\
	    &  +  \left(2C c_u \eta_t + \frac{B_1}{c_0 \eta_{t} m} - \frac{B_1^2 \beta_{t+1}}{2 m^2 c_0 \eta_{t}}-\frac{B_1}{c_0 \eta_{t-1} m }\right)\Delta_t
    \end{split}
\end{equation*}
By setting $256 C c_u / c_l =  c_0$, $\eta_{t}^2 = \frac{32 B_1^2 \beta_{t+1}}{m^2 c_0^2 c_l}$, $\alpha_{t+1} = \frac{B_1 \beta_{t+1}}{m}$ and $B_2 \leq m$, we have:
\begin{equation*}
    \begin{split}
        \Gamma_{t+1}-\Gamma_{t} 
	    &\leq - \frac{\eta_t c_l}{2} \|\nabla F(\w_t)\|^2  + \frac{14 B_1^2\beta_{t+1}^{2} C c_u}{B_2 m^2 c_0 \eta_{t}} +\frac{2\alpha_{t+1}^2 C c_u}{m c_0 \eta_{t}}  \\
	    &\leq  - \frac{\eta_t c_l}{2} \|\nabla F(\w_t)\|^2  + \frac{m^2 \eta_{t}^{3} c_0^4 c_l^3}{512 B_2 B_1^2} 
    \end{split}
\end{equation*}

This means that, by setting $\eta_t = (\frac{B_1 \sqrt{B_2}}{m})^{\frac{2}{3}}(a+t)^{-\frac{1}{3}}$
\begin{equation*}
\begin{split}
    \frac{\eta_T}{2}\E\left[\sum_{t=1}^{T}\|\nabla F(\w_t) \|^2\right] \leq& \frac{\Gamma_{1}-\Gamma_{T+1}}{c_l} + \frac{m^2  c_0^4 c_l^2}{512 B_2 B_1^2 } \E\left[ \sum_{t=1}^{T} \eta_{t}^{3}\right] \\
    \leq& \frac{\Gamma_{1}}{c_l} + \frac{c_0^4 c_l^2}{16^5} \E\left[ \sum_{t=1}^{T} (a+t)^{-1}\right] \\
    \leq& \frac{\Delta_F}{c_l} + \frac{1}{8 \eta_0 c_l} + \frac{c_0^4 c_l^2}{512}\ln{(1+T)}
\end{split}
\end{equation*}
Denote $M=\frac{\Delta_F}{c_l} + \frac{1}{8 \eta_0 c_l} + \frac{c_0^4 c_l^2}{16^5}\ln{(1+T)}$. Using Cauchy-Schwarz inequality, we have:
\begin{align*}
     \mathbb{E}\left[\sqrt{\sum_{t=1}^{T}\left\|\nabla F\left(\boldsymbol{\w}_{t}\right)\right\|^{2}}\right]^{2} &\leq \mathbb{E}\left[1 / \eta_{T}\right] \mathbb{E}\left[\eta_{T} \sum_{t=1}^{T}\left\|\nabla F\left(\boldsymbol{\w}_{t}\right)\right\|^{2}\right] \\
     &\leq \mathbb{E}\left[M \left(\frac{m}{B_1 \sqrt{B_2}} \right)^{2/3}\left(a+T\right)^{1 / 3}\right],
\end{align*}
Then following the same analysis, we will finally have :
\begin{align*}
    \mathbb{E}\left[\sum_{t=1}^{T} \frac{\left\|\nabla F\left(\boldsymbol{\w}_{t}\right)\right\|}{T}\right] &\leq \frac{\sqrt{M} \left(a+T\right)^{1 / 6}}{\sqrt{T}} \left(\frac{m}{B_1 \sqrt{B_2}} \right)^{1/3}
    \leq \mathcal{O} \left(\frac{a^{1 / 6} \sqrt{ M}}{\sqrt{T}}+\left(\frac{m}{B_1 \sqrt{B_2} T} \right)^{1/3}\right) \\ &= \mathcal{O}\left(\left(\frac{m}{T B_1 \sqrt{B_2}} \right)^{1 / 3}\right),
\end{align*}
where the last inequality is due to $(a+b)^{1 / 3} \leq a^{1 / 3}+b^{1 / 3}$. So, we can achieve the stationary point with $T=\mathcal{O}\left(m /B_1 \sqrt{B_2} \epsilon^{3}\right)$.
\end{proof}
\begin{figure*}[t]
	\centering
	\subfigure{
		\includegraphics[width=0.3\textwidth]{./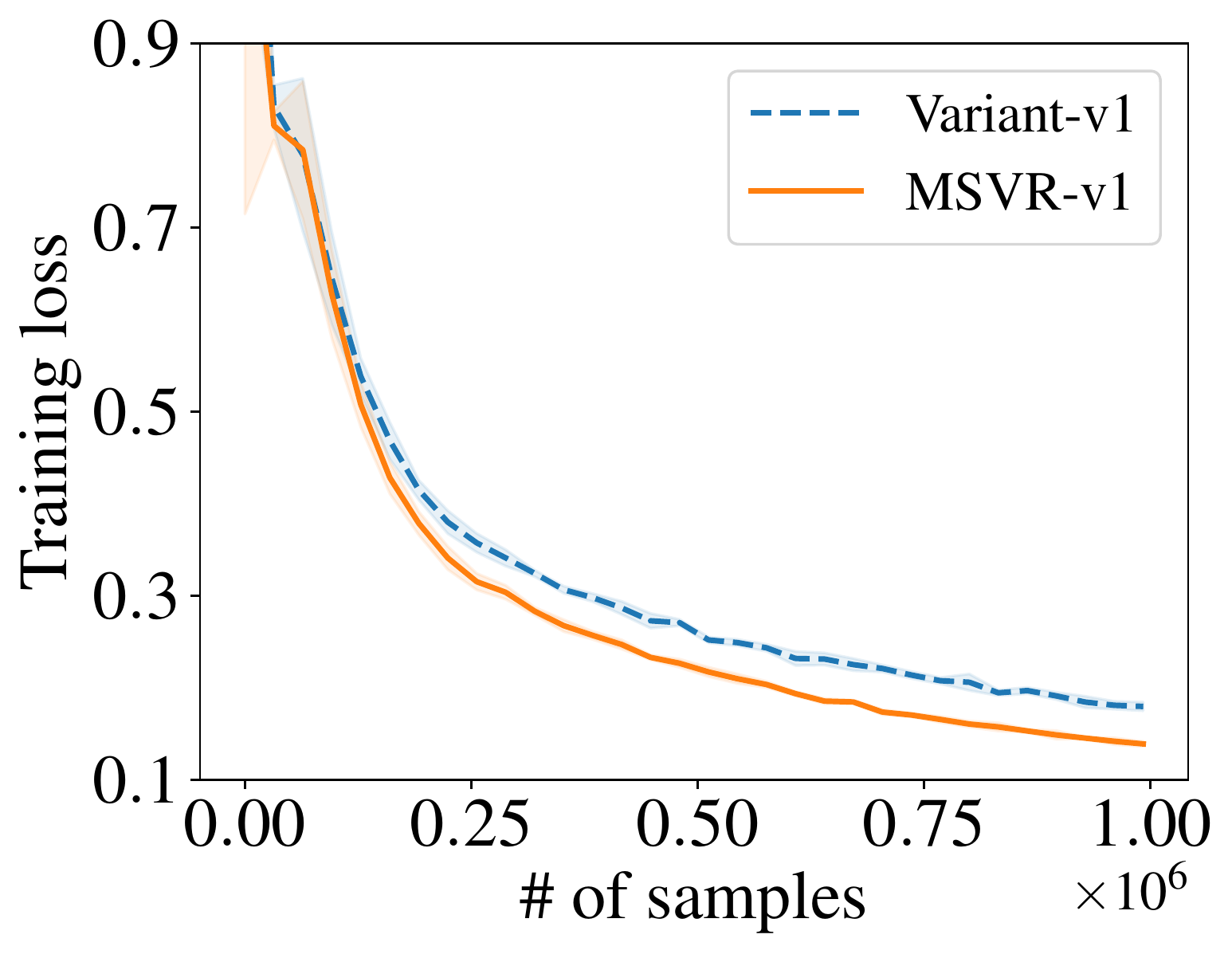}
	}
	\subfigure{
		\includegraphics[width=0.3\textwidth]{./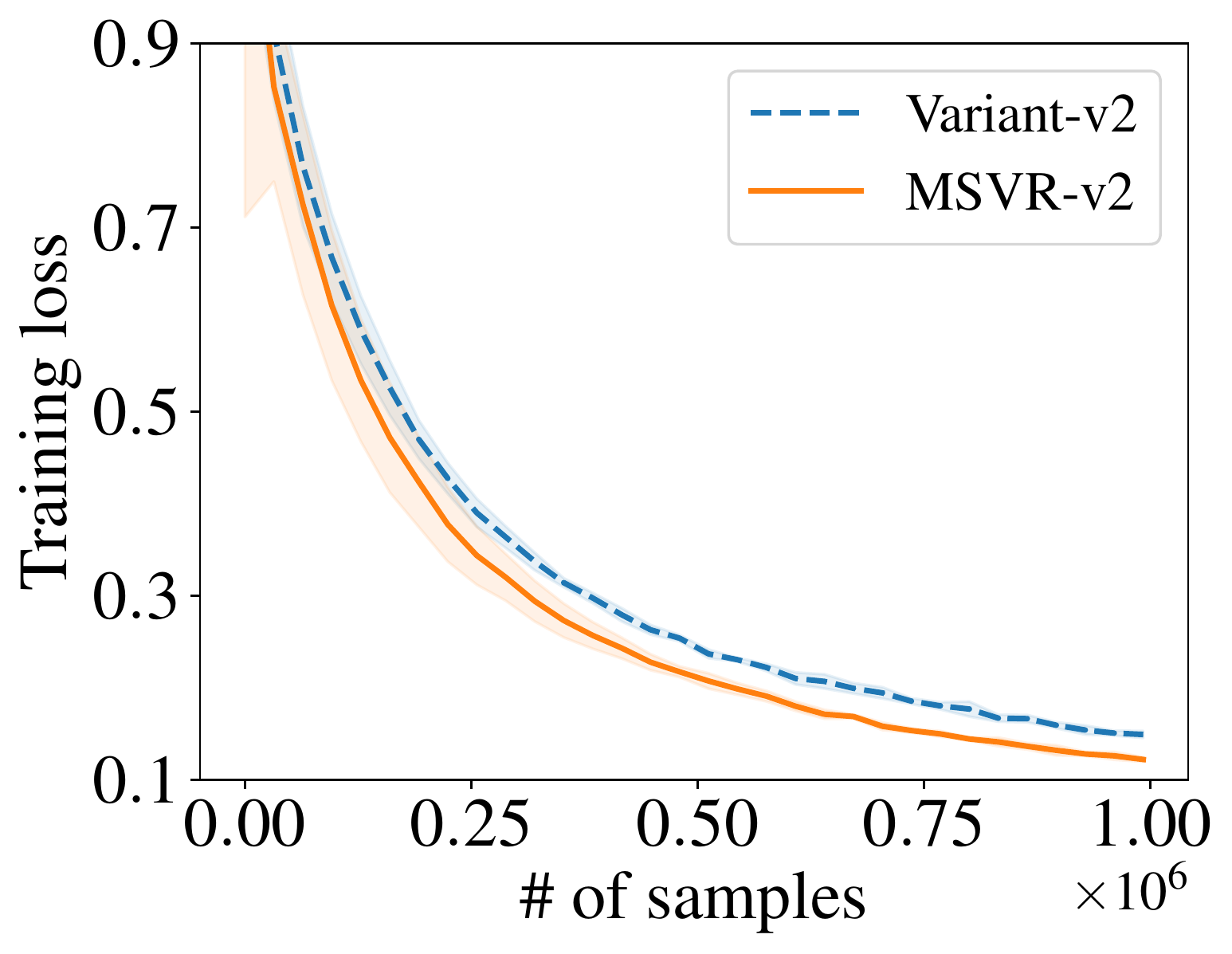}
	}
	\subfigure{
		\includegraphics[width=0.3\textwidth]{./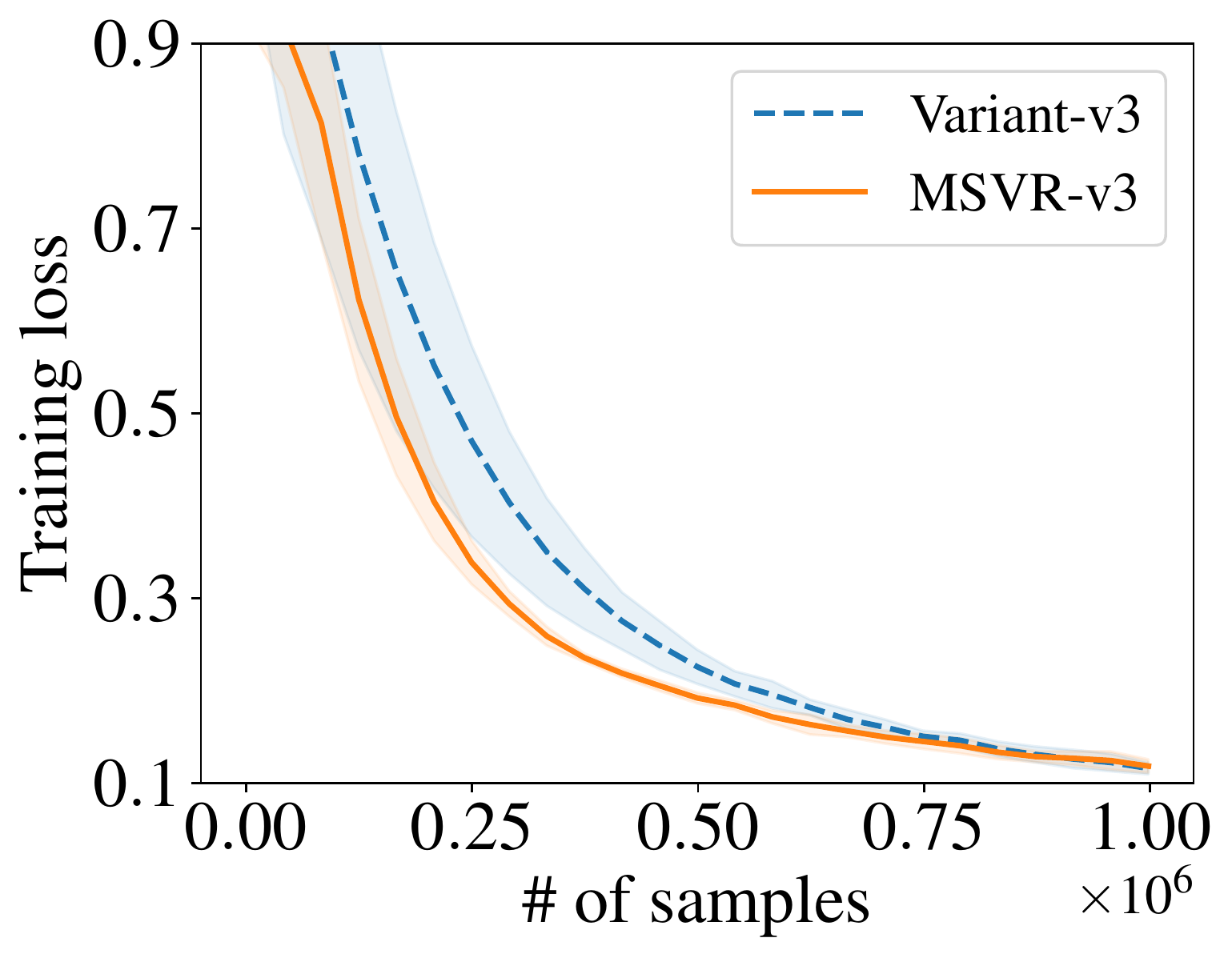}
	}
	\caption{Results for Multi-task AUC Optimization.}
	\label{fig:2}
\end{figure*}

\begin{figure*}[t]
    \centering
    \subfigure[ResNet18]{
        \includegraphics[width=0.3\textwidth]{./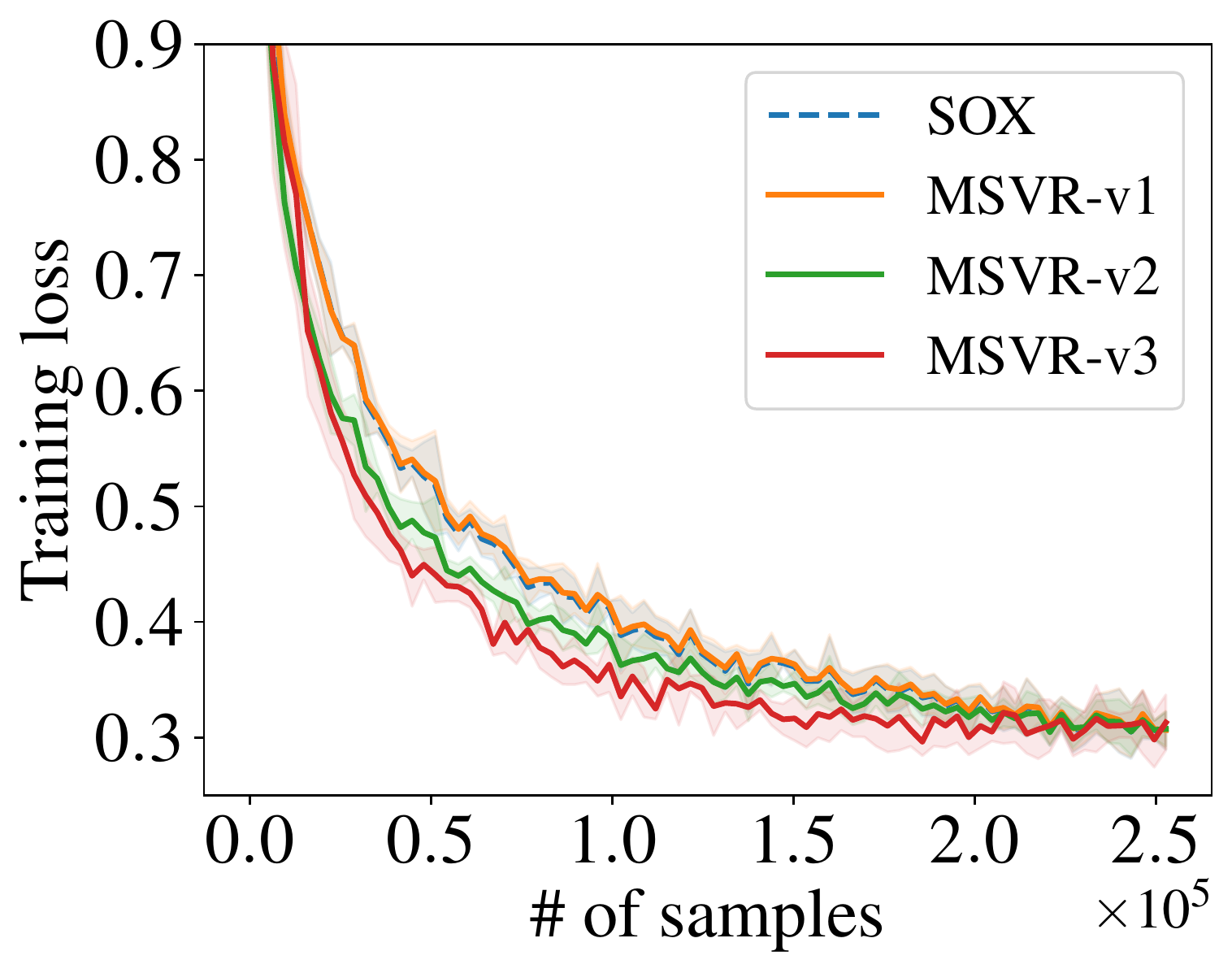}
    }
    \subfigure[ResNet34]{
        \includegraphics[width=0.3\textwidth]{./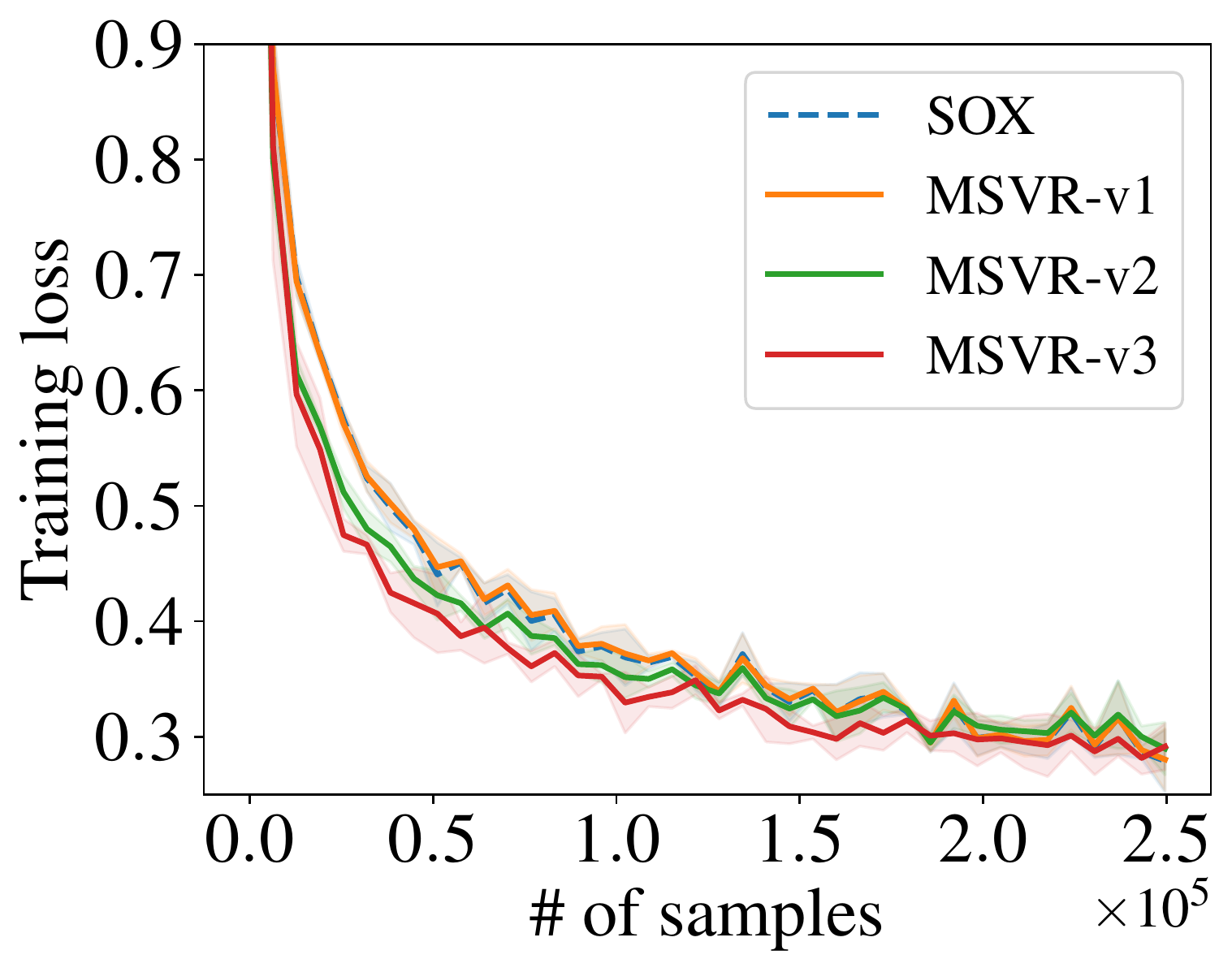}
    }
    \subfigure[DenseNet121]{
        \includegraphics[width=0.3\textwidth]{./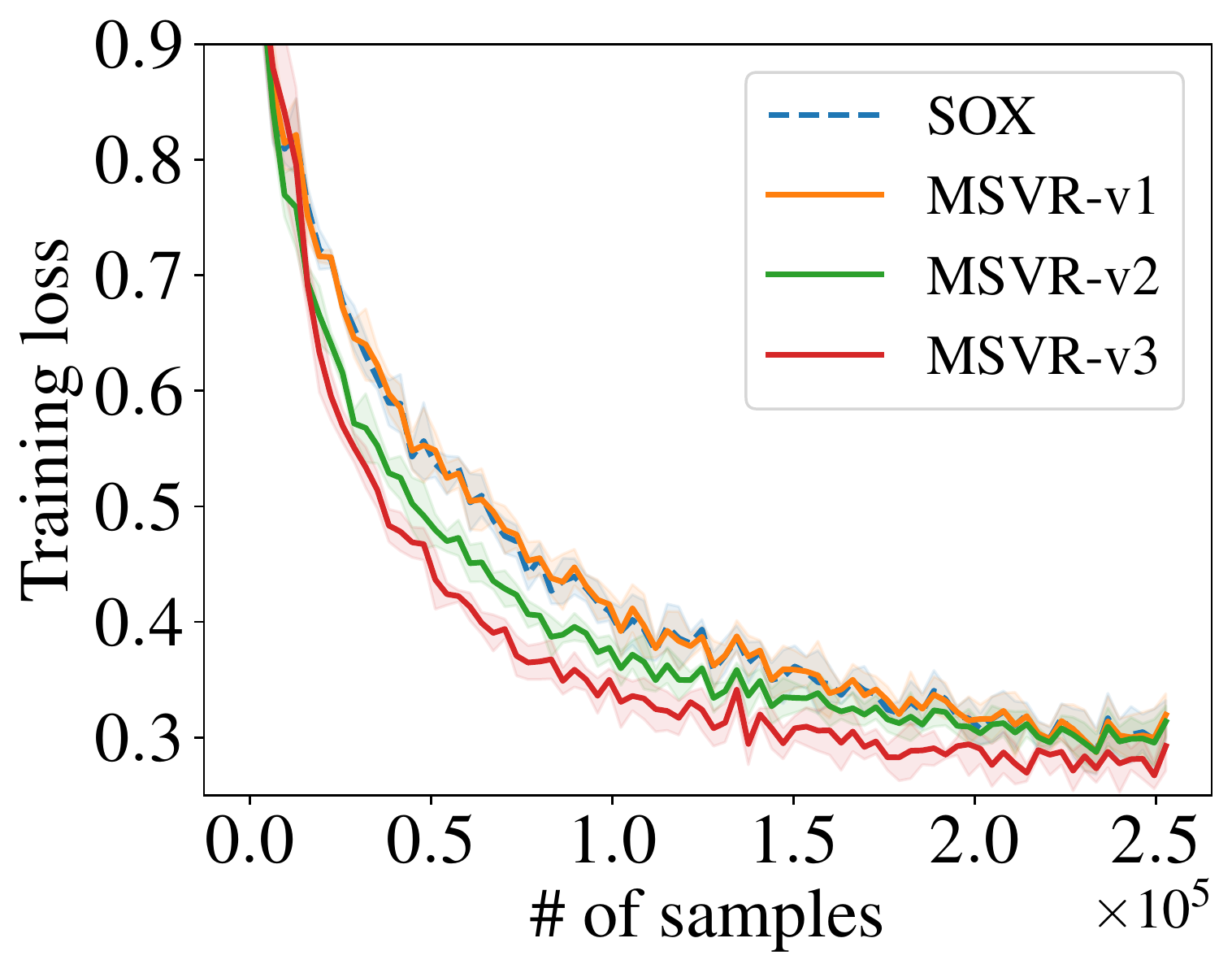}
    }
    \caption{Results with different networks.}
    \label{fig:3}
\end{figure*}
\section{More Experimental Results}
In this section, we provide more experimental results and ablation studies. We will consider more applications in the long version of the paper.
\subsection{Ablation Study on Algorithm Design}
In this subsection, we conduct the ablation study for our algorithm design. Specially, we verify the effects of our customized error correction term. To compare with traditional variance reduced estimator, we can design an estimator using STORM~\citep{cutkosky2019momentum} as follows:
\begin{equation}\label{compare}
    \begin{split}
        \mathbf{u}_{t}^{i}= \begin{cases}(1-\beta) \mathbf{u}_{t-1}^{i}+\beta \frac{m}{B_{1}} g_{i}\left(\mathbf{w}_{t} ; \xi_{t}^{i}\right)+(1-\beta) \frac{m}{B_{1}}\left(g_{i}\left(\mathbf{w}_{t} ; \xi_{t}^{i}\right)-g_{i}\left(\mathbf{w}_{t-1} ; \xi_{t}^{i}\right)\right) & i \in \mathcal{B}_{1}^{t} \\ (1-\beta) \mathbf{u}_{t-1}^{i} & i \notin \mathcal{B}_{1}^{t}\end{cases}
    \end{split}
\end{equation}
To show the effects of our customized error correction term, we replace the MSVR estimator in our MSVR-v1 and MSVR-v2 algorithm, and use equation~(\ref{compare}) instead. We name these two methods Variant-v1 and Variant-v2. For the finite-sum case, we modify the estimator similarly:
\begin{equation}\label{compare2}
    \begin{split}
        \mathbf{u}_{t}^{i}= \begin{cases}(1-\beta) \mathbf{u}_{t-1}^{i}+\beta \frac{m}{B_{1}} \hat{g}_{i}\left(\mathbf{w}_{t} ; \xi_{t}^{i}\right)+(1-\beta) \frac{m}{B_{1}}\left(g_{i}\left(\mathbf{w}_{t} ; \xi_{t}^{i}\right)-g_{i}\left(\mathbf{w}_{t-1} ; \xi_{t}^{i}\right)\right) & i \in \mathcal{B}_{1}^{t} \\ (1-\beta) \mathbf{u}_{t-1}^{i} & i \notin \mathcal{B}_{1}^{t}\end{cases}
    \end{split}
\end{equation}
where $\widehat g_{i}(\w_t; \xi_t^{i}) =   g_{i}(\w_t; \xi_t^{i}) -   g_{i}(\w_\tau; \xi_t^{i}) + g_{i}(\w_\tau)$. So, for MSVR-v3, we replace the MSVR estimator with equation~(\ref{compare2}) and keep other parts unchanged. This new method is named as Variant-v3.

\noindent{\bf Results.} We compare different methods on the CIFAR100 dataset and plot the results in Figure~\ref{fig:2}. As can be seen, all methods perform worse than the origin algorithms, indicating the effectiveness of our customized error correction term in the proposed algorithm.

\begin{figure*}[t]
	\centering
	\subfigure[MSVR-v1]{
		\includegraphics[width=0.3\textwidth]{./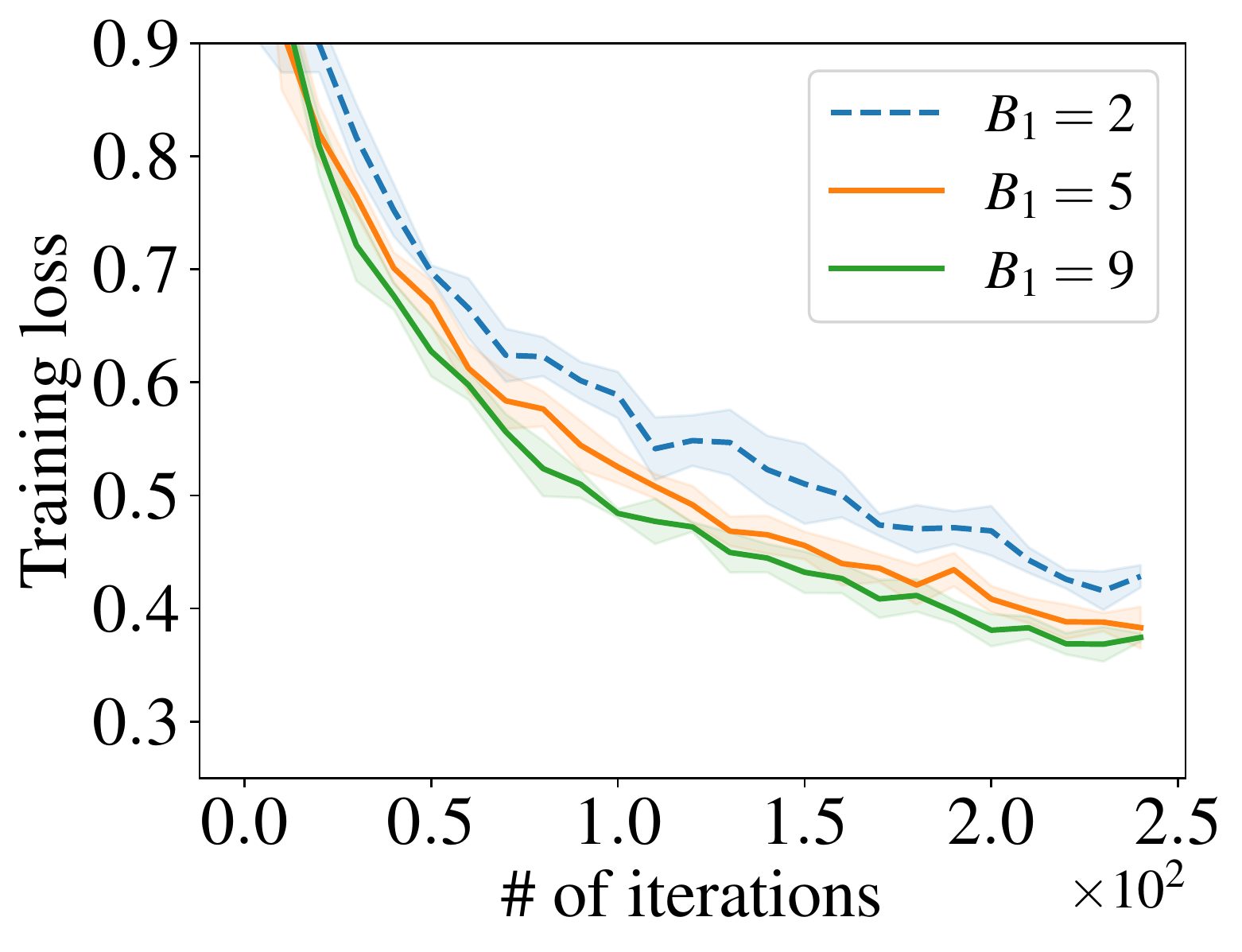}
	}
	\subfigure[MSVR-v2]{
		\includegraphics[width=0.3\textwidth]{./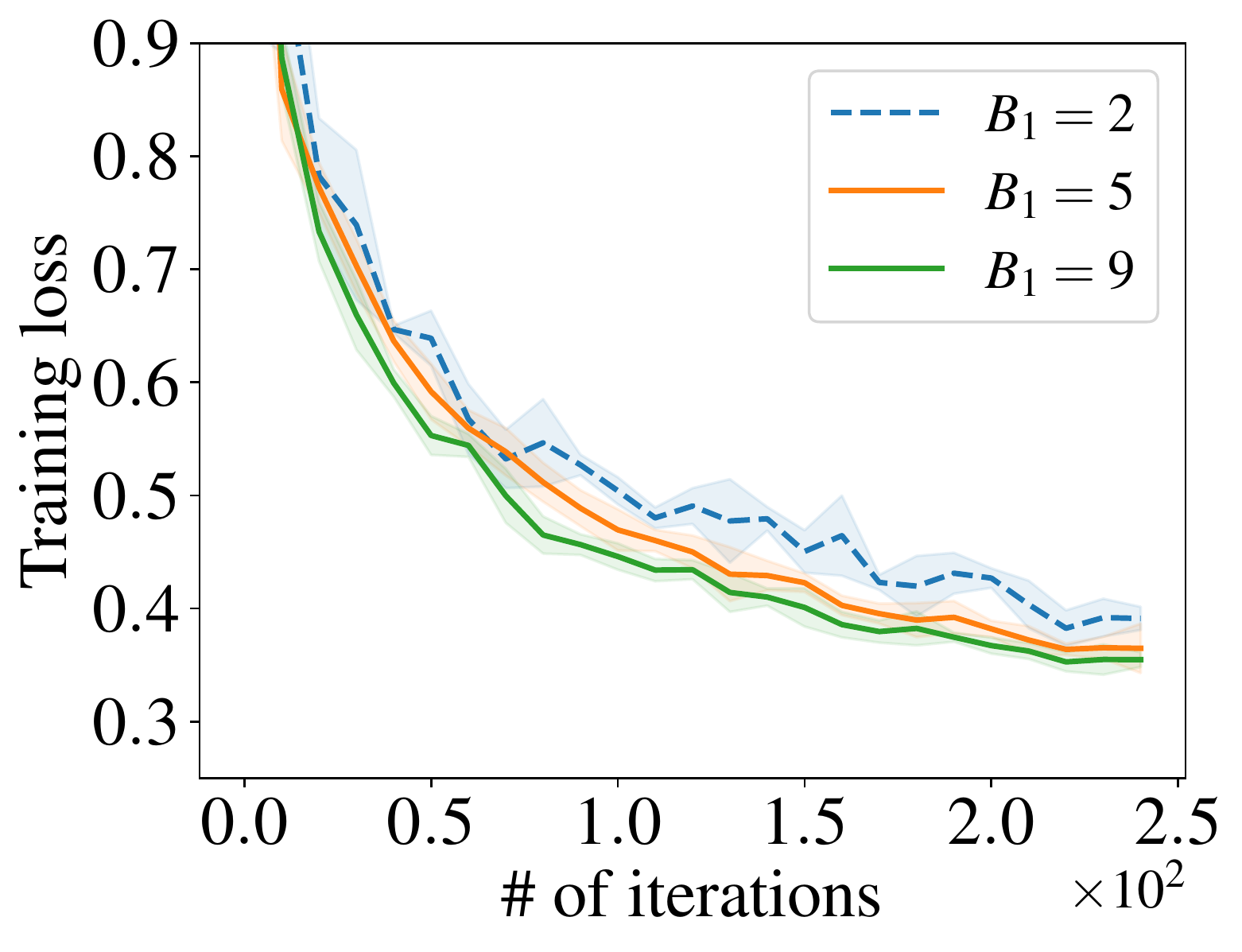}
	}
	\subfigure[MSVR-v3]{
		\includegraphics[width=0.3\textwidth]{./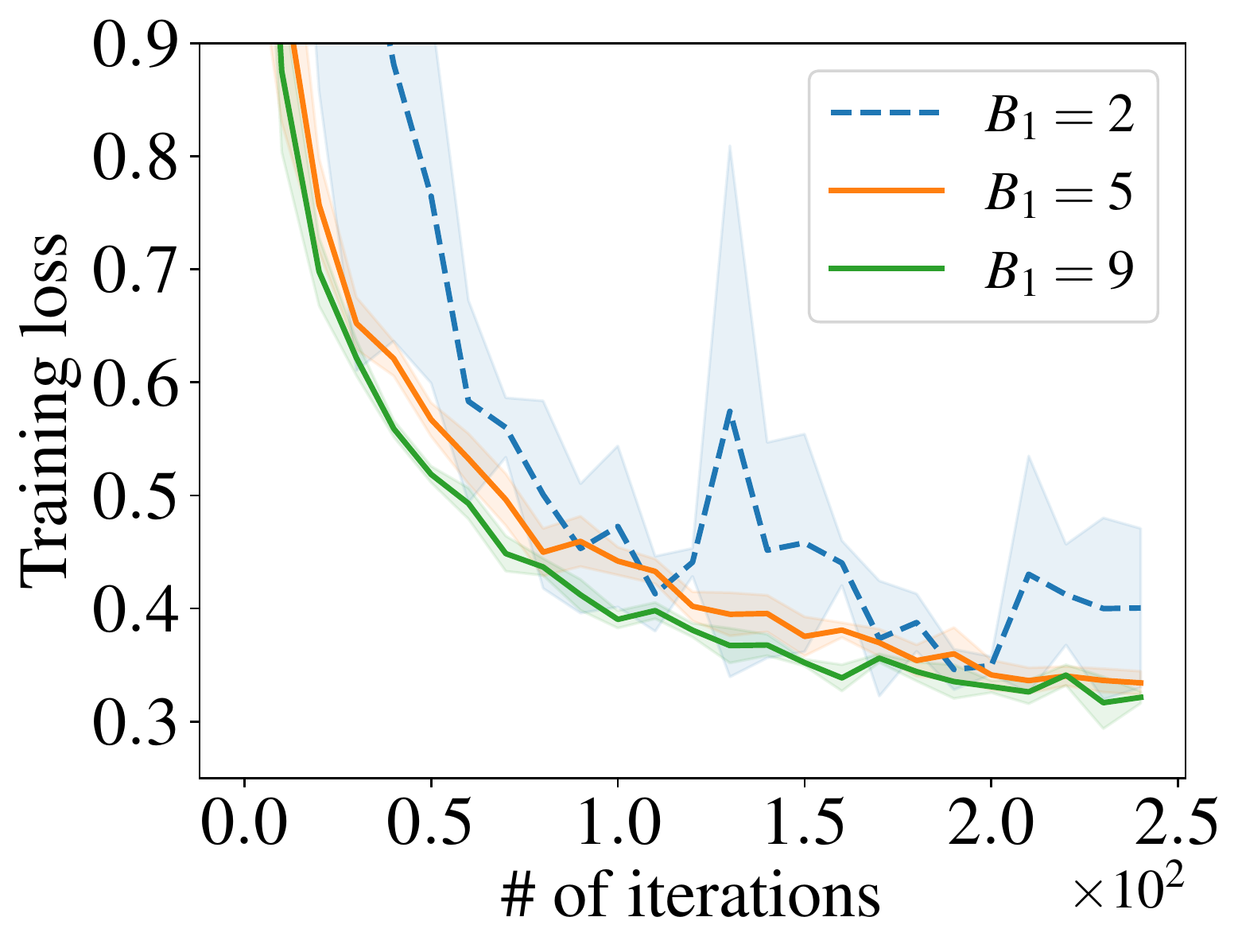}
	}
	\caption{Results with varying $B_1$.}
	\label{fig:4}
\end{figure*}

\begin{figure*}[t]
	\centering
	\subfigure[MSVR-v1]{
		\includegraphics[width=0.3\textwidth]{./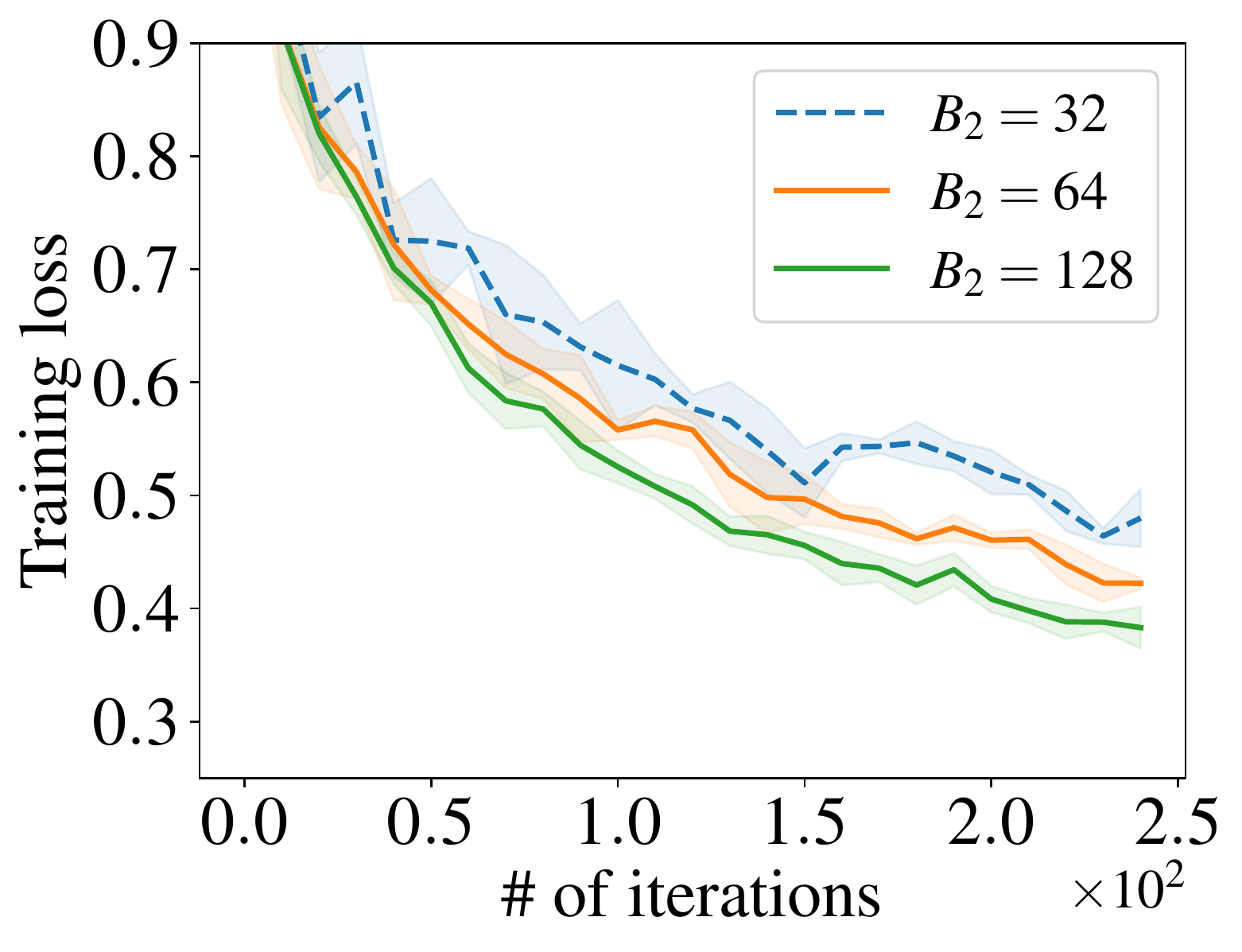}
	}
	\subfigure[MSVR-v2]{
		\includegraphics[width=0.3\textwidth]{./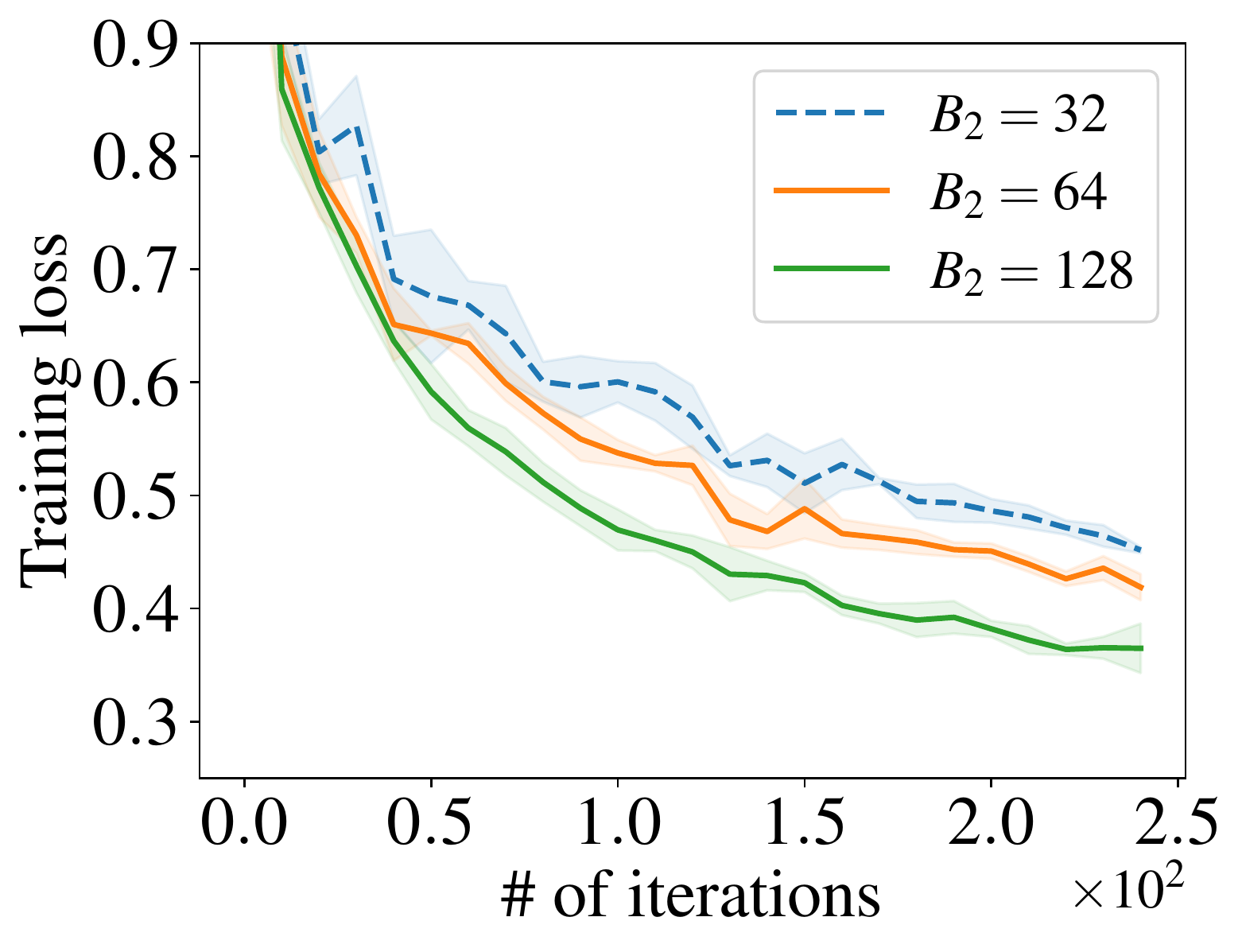}
	}
	\subfigure[MSVR-v3]{
		\includegraphics[width=0.3\textwidth]{./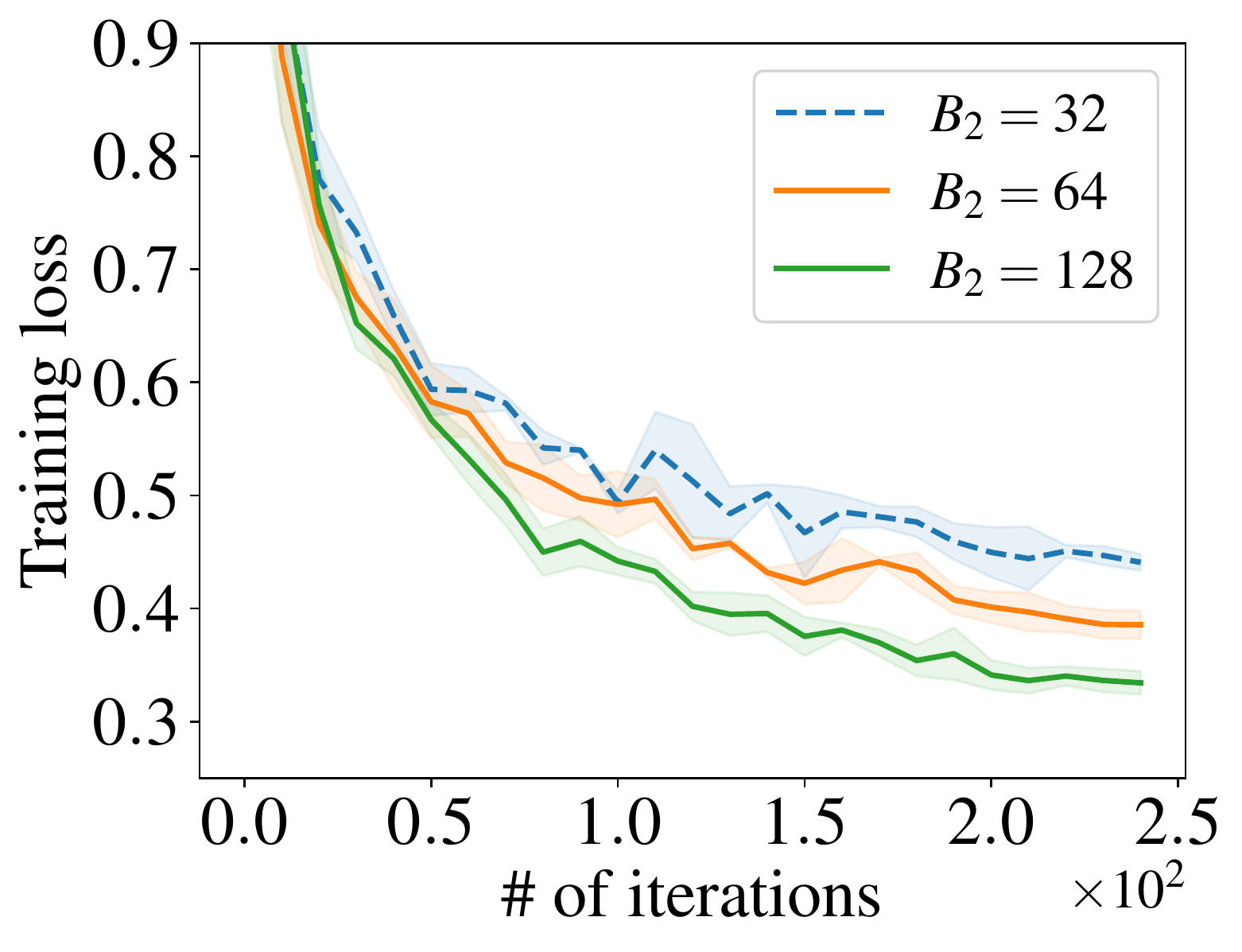}
	}
	\caption{Results with varying $B_2$.}
	\label{fig:5}
\end{figure*}
\subsection{Results with Different Networks}
In this subsection, we conduct experiments on SVHN data set with different networks,  ResNet18, ResNet34 and DenseNet121, respectively. As can be seen in Figure~\ref{fig:3}, with all three networks, MSVR-V1 performs closely to SOX, MSVR-v2 converges faster than SOX and MSVR-v1, and the loss of MSVR-v3 decreases most rapidly, indicating the effectiveness of our methods with different networks.

\subsection{Results with Different Batch size}
In this subsection, we explore the effect of different batch sizes. First, we fix the inner batch size $B_2 = 128$ and vary $B_1$ in the range $\{2,5,9\}$. Then, we fix the outer batch size $B_1=5$ and vary $B_2$ in the range $\{32,64,128\}$. We conduct the experiments on the Fashion-MNIST data set and show the results in Figure~\ref{fig:4} and ~\ref{fig:5}. As can be seen, in terms of iteration complexities, the larger batch size~($B_1$ or $B_2$), the faster the convergence, which is consistent with our theory.
\end{document}